
\documentclass{article}


\usepackage[accepted]{icml2025}

\usepackage{booktabs}						
\usepackage{multirow}						
\usepackage{amsfonts}						
\usepackage{graphicx}						
\usepackage{duckuments}						
\usepackage{amsmath}
\usepackage{amsthm}
\usepackage{amssymb}
\usepackage{dsfont}
\usepackage{microtype}
\usepackage{mleftright}
\usepackage{multicol}
\usepackage{mathtools}
\usepackage{wrapfig}
\usepackage{lipsum}
\usepackage{enumerate} 
\usepackage{enumitem}
\usepackage{siunitx}
\usepackage{subcaption}
\usepackage{thm-restate}
\usepackage{xspace}
\usepackage{multirow}
\usepackage{pdfpages}
\usepackage{subfiles}
\usepackage{nicefrac}
\captionsetup{%
font = small,
  subrefformat = parens,
  labelfont    = it,
}
\usepackage{hyperref}

\usepackage[capitalize,noabbrev]{cleveref}
\definecolor{darkred}{rgb}{0.55, 0.0, 0.0}
\definecolor{darkblue}{rgb}{0.0, 0.0, 0.55}
\hypersetup{colorlinks={true},linkcolor={orange},citecolor=darkblue}

\newcommand{\xhdr}[1]{{\noindent\bfseries #1}.}
\newcommand{\R}{\mathbb{R}}

\usepackage{amsmath,amsfonts,bm}









\def\eqref#1{equation~\ref{#1}}









\def\1{\bm{1}}

\def\eps{{\epsilon}}










\DeclareMathAlphabet{\mathsfit}{\encodingdefault}{\sfdefault}{m}{sl}
\SetMathAlphabet{\mathsfit}{bold}{\encodingdefault}{\sfdefault}{bx}{n}













\newcommand*{\ldblbrace}{\{\mskip-6mu\{}
\newcommand*{\rdblbrace}{\}\mskip-6mu\}}


\theoremstyle{plain}
\newtheorem{theorem}{Theorem}[section]
\newtheorem{proposition}[theorem]{Proposition}
\newtheorem{lemma}[theorem]{Lemma}

\theoremstyle{definition}

\theoremstyle{remark}

\newtheorem{observation}{Observation}

\icmltitlerunning{Balancing Efficiency and Expressiveness: Subgraph GNNs with Walk-Based Centrality}

\begin{document}

\twocolumn[
\icmltitle{Balancing Efficiency and Expressiveness:\\ Subgraph GNNs with Walk-Based Centrality}



\icmlsetsymbol{equal}{*}

\begin{icmlauthorlist}
\icmlauthor{Joshua Southern}{equal,icl}
\icmlauthor{Yam Eitan}{tech}
\icmlauthor{Guy Bar-Shalom}{tech}
\icmlauthor{Michael M. Bronstein}{ox,aith}
\icmlauthor{Haggai Maron}{tech,nvidia}
\icmlauthor{Fabrizio Frasca}{equal,tech}
\end{icmlauthorlist}

\icmlaffiliation{icl}{Imperial College London}
\icmlaffiliation{tech}{Technion -- Israel Institute of Technology}
\icmlaffiliation{ox}{University of Oxford}
\icmlaffiliation{aith}{AITHYRA}
\icmlaffiliation{nvidia}{NVIDIA Research}

\icmlcorrespondingauthor{Joshua Southern}{jks17@ic.ac.uk}
\icmlcorrespondingauthor{Fabrizio Frasca}{fabriziof@campus.technion.ac.il}

\icmlkeywords{Machine Learning, ICML}

\vskip 0.3in
]



\printAffiliationsAndNotice{\icmlEqualContribution} 

\begin{abstract}
    Subgraph GNNs have emerged as promising architectures that overcome the expressiveness limitations of Graph Neural Networks (GNNs) by processing bags of subgraphs. 
    Despite their compelling empirical performance, these methods are afflicted by a high computational complexity: they process bags whose size grows linearly in the number of nodes, hindering their applicability to larger graphs.
    In this work, we propose an effective and easy-to-implement approach to dramatically alleviate the computational cost of Subgraph GNNs and unleash broader applications thereof. Our method, dubbed HyMN, leverages walk-based centrality measures to sample a small number of relevant subgraphs and drastically reduce the bag size.
    By drawing a connection to perturbation analysis, we highlight the strength of the proposed centrality-based subgraph sampling, and further prove that these walk-based centralities can be additionally used as Structural Encodings for improved discriminative power.
    A comprehensive set of experimental results demonstrates that HyMN provides an effective synthesis of expressiveness, efficiency, and downstream performance, unlocking the application of Subgraph GNNs to dramatically larger graphs.
    Not only does our method outperform more sophisticated subgraph sampling approaches, it is also competitive, and sometimes better, than other state-of-the-art approaches for a fraction of their runtime.
\end{abstract}

\section{Introduction}
Graph Neural Networks (GNNs) \citep{Scarselli, gori, micheli} have achieved great success in learning tasks with graph-structured data. Typically, GNNs are based on the message passing paradigm \citep{gilmer2017neural}, in which node features are aggregated over their local neighborhood recursively, resulting in architectures known as Message Passing Neural Networks (MPNNs). MPNNs have been shown to suffer from limited expressive power: they are bounded by $1$-WL \citep{xu2019powerful,geerts2022expressiveness, weisfeiler1968reduction, morris2019weisfeiler, morris2023weisfeiler} cannot count certain substructures \citep{chen2020graph, bouritsas2022improving, tahmasebi2020counting} or solve graph bi-connectivity tasks \citep{zhang2023rethinking}. 

Recently, Subgraph GNNs \citep{bevilacqua2021equivariant, frasca2022understanding, zhang2021nested, cotta2021reconstruction} have been proposed to overcome some of the expressivity limitations of MPNNs. This approach preserves equivariance while relying less on feature engineering.
First, a Subgraph GNN transforms a graph into a ``bag of subgraphs'' based on a specific selection policy. These subgraphs are then processed by an equivariant architecture and aggregated to make graph- or node-level predictions. One common approach to generate subgraphs, known as node-marking, is to mark a single node in the graph \citep{papp2022theoretical}. In this case, each subgraph is then ``tied'' to a specific node in the original graph, and a shared MPNN generates a representation for each subgraph. 

In general, Subgraph GNNs show good empirical performance but come with a high computational cost. For a graph of $N$ nodes and a maximum degree $d_{\mathsf{max}}$, then Subgraph GNNs with node-marking based policies have a computational complexity $O(N^2 \cdot d_{\mathsf{max}})$. To reduce computational complexity, it was suggested sampling subgraphs randomly~\citep{bevilacqua2021equivariant,zhao2022stars} or, more recently, learning sampling policies \citep{qian2022ordered,bevilacqua2023efficient, kong2024mag}. However, while random approaches have been shown to be suboptimal~\citep{bevilacqua2023efficient,bar2024flexible}, learnable policies are difficult to train in practice and become impractical when sampling a larger number of subgraphs, due to the use of discrete sampling and RL-based objectives. 

Summarizing, while Subgraph GNNs have demonstrated promising results on small molecular graphs, their computational requirements make them impractical for graphs larger than a few tens of nodes. This severe limitation needs to be addressed in an effective way to broaden the application of these models.

\textbf{Our approach.}  In this work, we present an approach that drastically reduces the complexity of Subgraph GNNs while maintaining their competitive performance. We achieve this by identifying easy-to-compute structural features that unlock a simple yet effective subgraph selection strategy, while providing complementary information enhancing the model's expressiveness.

We first identify a family of node centrality measures~\citep{estrada2005subgraph,benzi2013total,benzi2014matrix} as easy-to-compute scores that: (i)  recapitulate well the extent to which a subgraph alters the graph representation; (ii) correlate with relevant substructure counts. In light of this, we propose to leverage these measures to efficiently and effectively reduce the size of bags of subgraphs without requiring additional learnable components. Specifically, we prioritize selecting marked subgraphs associated with the top-ranking nodes according to these centrality measures and, in particular, to the Subgraph Centrality by~\citet{estrada2005subgraph}.
Then, we propose to additionally interpret centrality scores as Structural Encodings (SEs)~\citep{bouritsas2022improving, dwivedi2021graph, fesser2023effective}, and to utilize them to augment the node features of the selected subgraphs. This combination of approaches is justified from an expressiveness perspective, as we show neither of the two subsume the other. We demonstrate node marking of (selected) subgraphs separates graphs that are not separable by Centrality-based SE (CSE) feature augmentations \emph{and}, vice-versa, that CSEs allow to separate pairs indistinguishable to our subsampled Subgraph GNNs.

The resulting method, dubbed HyMN
(\emph{\underline{Hy}brid \underline{M}arking \underline{N}etwork}), is an approach whereby Subgraph GNNs and Structural Encodings work in tandem to effectively overcome the expressiveness limitations of MPNNs and unlock the application of Subgraph GNNs to larger graphs previously out of reach. Our approach is provably expressive, does not require feature engineering or learnable sampling components, and, importantly, maintains a low computational cost. The practical value of our approach is confirmed by the strong results achieved over a series of experimental analyses conducted over synthetic and real-world benchmarks. We show HyMN outperforms other subgraph selection strategies, and performs on par or better than full-bag Subgraph GNNs by only sampling one or two subgraphs. Additionally, HyMN is competitive to (and sometimes better than) Graph Transformers and other state-of-the-art GNNs, while, as we show through extensive wall-clock timing analyses, features a dramatically reduced computational run-time.

Our \textbf{contributions} are summarized as follows:
\begin{enumerate}[noitemsep,topsep=0pt]
    \item We show that walk-based node centrality measures and, in particular, the Subgraph Centrality of~\citet{estrada2005subgraph}, represent a simple and effective indicator of subgraph importance for subsampling bags in Subgraph GNNs.
    \item We demonstrate that centrality-based structural features can be employed as Structural Encodings to enhance the discriminative power of Subgraph GNNs with subsampled bags of subgraphs.
    \item We provide strong experimental evidence showcasing the effectiveness of our sampling strategy and the efficacy of additionally incorporating centrality-based SEs.
    \item We show how our approach enables subgraph-methods to scale to substantially larger graphs than previously possible, even tens or hundreds of times larger.
    
\end{enumerate}
Overall, our results validate our method as a simple, expressive and efficient approach that competes with state-of-the-art GNNs for a fraction of their empirirical run-times. 

\section{Related Work}\label{sec:pre}

\xhdr{Expressive power of MPNNs} The expressive power of MPNNs has become a central research topic since they were shown to be bounded by the 1-WL isomorphism test \citep{morris2019weisfeiler, xu2019powerful, morris2023weisfeiler}. This has led to approaches which aim to obtain different representations for non-isomorphic but 1-WL equivalent graphs. These include using random features \citep{sato2021random, abboud2020surprising}, higher-order message passing schemes \citep{bodnar2021weisfeiler, bodnar2021cellular, morris2019weisfeiler, morris2020weisfeiler} and equivariant models \citep{maron2018invariant, vignac2020building}. One of the most common approaches is to inject positional and structural encodings into the input layer \citep{bouritsas2022improving, fesser2023effective, dwivedi2021graph, kreuzer2021rethinking} using Laplacian PEs \citep{dwivedi2021generalization, kreuzer2021rethinking, wang2022equivariant, lim2022sign, lim2024expressive}, distance information \citep{ying2021transformers, li2020distance}, or random-walk encodings (RWSE) \citep{dwivedi2021graph}.

\xhdr{Subgraph GNNs} A recent line of work has proposed representing a graph as a collection of subgraphs obtained by a specific selection policy \citep{zhang2021nested, cotta2021reconstruction, papp2022theoretical, bevilacqua2021equivariant, zhao2022stars, papp2021dropgnn, frasca2022understanding, qian2022ordered, huang2022boosting, zhang2023complete}. These approaches, jointly referred to as Subgraph GNNs, allow to overcome the expressivity limitations of MPNNs without introducing predefined encodings. A powerful, common selection policy termed \emph{node marking} involves generating a subgraph per node by marking that node in the original graph, with no connectivity alterations. Although it can increase expressive power beyond 1-WL \citep{frasca2022understanding, you2021identity}, this approach has a high complexity as it needs to consider and process $N$ different subgraphs, $N$ being the number of nodes in the original graph.

\xhdr{Scaling Subgraph GNNs} Several recent papers have focused on scaling these methods to larger graphs. Beyond random sampling~\citep{bevilacqua2021equivariant,zhao2022stars}, \citet{qian2022ordered} first proposed gradient-based techniques to \emph{learn} how to subsample the bag of subgraphs. \citet{bevilacqua2023efficient} introduced \textit{Policy-Learn} (PL), which iteratively predicts a distribution over nodes in the graph and samples subgraphs from the full-bag accordingly. A similar approach, called MAG-GNN has also proposed to sample subgraphs using Reinforcement Learning (RL) \citep{kong2024mag}. Both of these approaches involve \emph{discrete sampling} which can complicate the training process, often requiring $1,000-4,000$ training epochs. 
Another recent approach leverages the connection between Subgraph GNNs and graph products \citep{bar2024subgraphormer} to run message passing on the product of the original graph and a coarsened version thereof \citep{bar2024flexible}. The control over the computational complexity generally comes from the existence of cluster-like structures in the graph which, however, may not be aligned with the preset number of subgraphs. Additionally, the locality bias afforded by the coarsening may not generally be effective across tasks.  
In such cases, coarsening approaches that are not learnable and based, e.g., on spectral clustering could lead to suboptimal results. For an extended background on Subgraph GNNs, the node marking policy and efforts around scaling this family of architectures, we refer readers to~\Cref{app:background}.

\xhdr{Node Centrality} One common way to characterize nodes in a graph is by using the concept of \emph{node centrality}. A node centrality measure defines a real-valued function on the nodes, $c: V \to \R$, which can be used to rank nodes within a network by their ``importance''. Different concepts of importance have led to a myriad of measures in the Network Science community. They range from the simple Degree Centrality (central nodes have the highest degrees) to path-based methods~\citep{freeman1977set} like the Betweenness Centrality (central nodes fall on the shortest paths between many node-pairs). An important family of centrality measures quantifies the importance of nodes based on \emph{walk counts}. As noted in~\citep{benzi2014matrix}, most of these measures take the form of a power series, where numbers of walks for any lengths are aggregated with an appropriate discounting scheme. Prominent examples include the Katz Index~\citep{katz1953new} (KI) and the Subgraph Centrality~\citep{estrada2005subgraph} (SC): 
\begin{equation}\label{eq:walk-centralities}
    c^{\text{KI}}_i = \sum_{k=0}^{\infty} \alpha^k \sum_{j} (A^k)_{ij} \qquad c^{\text{SC}}_i = \sum_{k=0}^{\infty} \frac{\beta^k}{k!} ( A^k )_{ii}
\end{equation}
\noindent and variants of the above, for appropriate choices of $0 < \alpha < \frac{1}{\lambda_1}, \beta > 0$\footnote{$\lambda_1$ refers to the first eigenvalue of $A$, $\beta$ is typically set to $1$.}~\citep{benzi2013total,benzi2014matrix}. By scoring nodes based on the cumulative number of walks that start from them, these centrality measures extend the Degree Centrality beyond purely local interactions, in a way that depends on the discounting scheme ($\alpha^k$ and $\frac{\beta^k}{k!}$ for KI and SC, respectively).

\section{Subsampling Subgraph Neural Networks}\label{sec:approx}

\subsection{Problem Setting}

We focus on Subgraph GNNs with a node-marking selection policy \citep{papp2022theoretical, you2021identity}. Given an $N$-node graph $G = (A, X)$, a node-marking Subgraph GNN processes a bag of subgraphs obtained from $G$, viz.\ $B_G = \ldblbrace S_1, S_2, \dots, S_N \rdblbrace $. Here, $S_i = (A, X_i)$ and $X_i = X \oplus x_{v_i}$, where $\oplus$ denotes channel-wise concatenation and $x_{v_i}$ is a one-hot indicator vector for node $v_i$. 

\paragraph{Goal.} In order to reduce the computational complexity of a Subgraph GNN, we aim to reduce the size of the bag by \emph{efficiently and effectively} sampling $k < N$ subgraphs. 
The sampling procedure must be \emph{efficient} in that it should avoid computationally complex operations or the use of learnable components requiring more involved training protocols~\citep{qian2022ordered,bevilacqua2023efficient,kong2024mag}. Ideally, it should consist of a simple and lightweight preprocessing step prior to running a chosen Subgraph GNN. 
The sampling procedure must additionally be \emph{effective}, which means it should closely approach the performance of a full-bag Subgraph GNN with as few subgraphs as possible. Put differently, it should prioritize marking those nodes that most quickly lead to performance improvements. 
As an example, to ground the discussion: randomly selecting subgraphs~\citep{bevilacqua2021equivariant,papp2021dropgnn} is an efficient but not effective strategy; learning which subgraph to mark via RL~\citep{kong2024mag} is a more effective approach, but it may not be efficient enough.
 
We start by presenting considerations on effectiveness which will naturally lead us to focus on walk-based centrality measures for our purposes. Upon them, we show we can build an effective strategy that is also efficiently executed as a simple preprocessing step. 

\subsection{Effective Node Marking}\label{sec:effective}

What makes a node a good marking candidate? We propose approaching this question by considering \emph{node marking as a graph perturbation}. Marking a node changes the initial node features: this alteration in the input will ultimately be reflected in the output graph representation, and understanding how node marking impacts the output graph representation is instrumental in designing effective sampling strategies. Beyond binary graph separation, we claim that an effective marking should be able to (i) alter graph representations sufficiently; (ii) induce perturbations that correlate with graph targets. Ideally, when marking nodes jointly optimizes (i) and (ii), a Subgraph GNN with a small bag can then improve on a standard MPNN by sufficiently separating more graphs and in a way that assists the training objective.

\xhdr{Node Marking, perturbations, and centrality measures} To understand how marking a node alters graphs representations, we analyze the simplest case: marking a single node. In particular, we ask how much the representation of a generic graph $G$ can be changed by a single-node marking.
As the MPNN,  
we consider an $L$-layer GIN~\citep{xu2019powerful,chuang2022tree}: 
\begin{equation}\label{eq:mpnn_rules}
    h^{(l)}_v\!=\!\phi^{(l)} \big( h^{(l-1)}_v + \eps\!\!\!\sum_{u \in N(v)}\!\!\!h^{(l-1)}_u \big),\ y_G\!=\!\phi^{(L+1)} \big ( \sum_{v \in G} h^{(L)}_v  \big )
\end{equation}
\noindent where $\phi^{(l)}$'s are update functions and $\phi^{(L+1)}$ is a prediction layer (all are parameterized as MLPs). By applying results from GNN stability studies in~\citep{chuang2022tree} we put forward the following observation (see details in~\Cref{app:perturb-analysis}):
\begin{observation}\label{obs:walk-bound}
    The distance between the MPNN  representations of a graph $G$ and a graph $S_v$ generated by marking node $v$ in $G$, can be upper-bounded as:
    \begin{equation}\label{eq:walk-bound}
    |y_{G} - y_{S_v} | \leq 
        \prod_{l=1}^{L+1} K^{(l)}_{\phi} \cdot \underbrace{\sum_{l=1}^{L+1} \lambda_l \cdot \sum_{j} (A^{l-1})_{v,j}}_{(\mathcal{A})} 
    \end{equation}
    \noindent where $K^{l}_{\phi}$ is the Lipschitz constant of MLP $\phi^{(l)}, l=1 \dots L+1$, $A$ is the adjacency matrix of graph $G$ and $\lambda_l \in \mathbb{R}^{+}$ is an $\eps$-dependent layer-wise weighting scheme.
    
    Effectively, the cumulative number $(\mathcal{A})$ of walks starting from node $v$  contributes to upper-bound the perturbation that marking $v$ induces on the original graph representation. Hence, marking nodes involved in a lower number of walks will have a more limited influence on altering a message passing-based graph representation.
\end{observation}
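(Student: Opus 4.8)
The plan is to trace the effect of the single-node marking as a perturbation that propagates through the $L$ GIN layers and then through the sum-pooling readout, controlling each stage by the Lipschitz constant of the relevant MLP and by the non-negativity of the GIN aggregation operator $I + \eps A$ (with $A$ the adjacency matrix; I assume $\eps \ge 0$, otherwise replace $\eps$ by $|\eps|$ throughout). Write $h^{(l)}_u(G)$ and $h^{(l)}_u(S_v)$ for the layer-$l$ embedding of node $u$ under the two inputs, set $\delta^{(l)}_u := \| h^{(l)}_u(G) - h^{(l)}_u(S_v) \|$, and collect these into the entrywise non-negative vector $\delta^{(l)} := (\delta^{(l)}_u)_{u \in V}$. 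Regarding $G$ as carrying the same extra ``marking'' channel as $S_v$ but zeroed, the only node whose input features differ is $v$, and it differs by the indicator $x_{v}$; hence $\delta^{(0)} = c\, e_v$, where $e_v$ is the $v$-th standard basis vector and $c := \| x_{v} \|$ a constant ($c=1$ for the standard one-hot).

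Next I would establish the layer recursion. Applying the triangle inequality to the GIN update rule of \cref{eq:mpnn_rules} and then the $K^{(l)}_\phi$-Lipschitz property of $\phi^{(l)}$ gives, for every node $u$, $\delta^{(l)}_u \le K^{(l)}_\phi \big( \delta^{(l-1)}_u + \eps \sum_{w \in N(u)} \delta^{(l-1)}_w \big)$, which in vector form reads $\delta^{(l)} \preceq K^{(l)}_\phi (I + \eps A)\, \delta^{(l-1)}$, where $\preceq$ denotes the componentwise order and I use $\sum_{w \in N(u)} \delta^{(l-1)}_w = (A \delta^{(l-1)})_u$. The inequality is preserved under left-multiplication by $I + \eps A$ precisely because that matrix has non-negative entries and every $\delta^{(l-1)}$ is non-negative; iterating over $l = 1, \dots, L$ then yields $\delta^{(L)} \preceq \big( \prod_{l=1}^{L} K^{(l)}_\phi \big) (I + \eps A)^{L}\, \delta^{(0)}$.

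For the readout, the $K^{(L+1)}_\phi$-Lipschitz continuity of $\phi^{(L+1)}$ together with one more triangle inequality over the sum-pooling gives $| y_G - y_{S_v} | \le K^{(L+1)}_\phi \sum_{u \in V} \delta^{(L)}_u = K^{(L+1)}_\phi \, \mathbf{1}^{\top} \delta^{(L)}$. Chaining this with the previous bound and substituting $\delta^{(0)} = c\, e_v$ produces $| y_G - y_{S_v} | \le c \big( \prod_{l=1}^{L+1} K^{(l)}_\phi \big)\, \mathbf{1}^{\top} (I + \eps A)^{L} e_v$. Finally, the binomial expansion $(I + \eps A)^{L} = \sum_{k=0}^{L} \binom{L}{k} \eps^{k} A^{k}$ and the symmetry of $A$ turn $\mathbf{1}^{\top} (I + \eps A)^{L} e_v$ into $\sum_{k=0}^{L} \binom{L}{k} \eps^{k} \sum_{j} (A^{k})_{v,j}$; reindexing with $l = k+1$ and setting $\lambda_l := c\,\binom{L}{l-1} \eps^{l-1} \in \mathbb{R}^{+}$ recovers exactly $\sum_{l=1}^{L+1} \lambda_l \sum_j (A^{l-1})_{v,j}$, i.e.\ term $(\mathcal{A})$. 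The closing sentence of the observation is then immediate, since $(\mathcal{A})$ is monotone increasing in each walk count $\sum_j (A^{l-1})_{v,j}$.

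The genuinely delicate points are: (i) making the componentwise monotonicity explicit, namely that $X \preceq Y$ and $M \succeq 0$ entrywise imply $M X \preceq M Y$ — routine but worth stating, and the place where the sign convention on $\eps$ enters; and (ii) reconciling the feature-dimension mismatch between $G$ and $S_v$, which I resolve by viewing both graphs over the augmented feature space with the marking channel zeroed on $G$, so that $\phi^{(1)}$ acts on a common domain and the base case $\delta^{(0)} = c\, e_v$ is well-defined. Everything else is bookkeeping with the triangle inequality and Lipschitz bounds, following the GNN-stability template from~\citep{chuang2022tree}; the statement is phrased for scalar $y_G$, but the same argument goes through verbatim for vector-valued outputs with $|\cdot|$ replaced by the relevant norm.
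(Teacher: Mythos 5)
Your proof is correct, but it takes a genuinely different route from the paper's. The paper derives \Cref{eq:walk-bound} by invoking the Tree Mover's Distance (TMD) machinery of \citet{chuang2022tree}: Theorem~8 there bounds $|y_G - y_{S_v}|$ by $\prod_{l}K^{(l)}_\phi$ times $\mathrm{TMD}_w^{L+1}(G,S_v)$, Proposition~11 bounds the TMD of a single-node feature perturbation by a weighted sum of the widths of the computational tree rooted at $v$, and those widths are then identified with the walk counts $\sum_j (A^{l-1})_{v,j}$. You instead give a self-contained, first-principles perturbation-propagation argument: the componentwise recursion $\delta^{(l)} \preceq K^{(l)}_\phi (I+\eps A)\,\delta^{(l-1)}$ iterated from $\delta^{(0)}=e_v$, closed with the readout Lipschitz bound and the binomial expansion of $(I+\eps A)^L$. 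Both arguments are sound and land on the stated form; since the observation only asks that $\lambda_l\in\mathbb{R}^{+}$ be some $\eps$-dependent weighting, your explicit $\lambda_l=\binom{L}{l-1}\eps^{l-1}$ is a valid instantiation, whereas the paper's $\lambda_l$ is inherited from the TMD layer-weighting scheme $w$. What the paper's route buys is reuse of an established stability framework, which it exploits again in \Cref{app:perturb-analysis} to bound the effect of adding a marked subgraph to an augmented bag; what your route buys is elementarity and transparency --- the weighting scheme and the role of the sign of $\eps$ are fully explicit, and your zero-padding of the marking channel on $G$ makes rigorous a domain-matching step the paper leaves implicit. Your identification of $\mathbf{1}^{\top}A^{k}e_v$ with $\sum_j(A^{k})_{v,j}$ via symmetry of $A$ is the exact counterpart of the paper's identification of tree widths with walk counts.
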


We note that the above observation uncovers an intriguing alignment with walk-based centrality measures: the ``most important'' nodes associate with the largest (discounted) cumulative numbers of walks (compare Equations~\ref{eq:walk-centralities} with term $(\mathcal{A})$ in~\Cref{eq:walk-bound}). This leads us to direct our focus to walk-based centrality measures as promising candidates for selecting which node to mark. Considering we would like to sufficiently alter the graph representation beyond the one from a standard MPNN, Observation~\ref{obs:walk-bound} indicates that, among all possible nodes, those associated with small cumulative numbers of walks will be poor marking candidates. 
We propose to summarise this form of information via walk-based centrality measures. Our strategy will be to \emph{rank nodes based on their centrality values, and mark the top-scoring ones}. Next, we empirically verify the validity of this approach, and defer readers to~\Cref{app:perturb-analysis} for extensions and more considerations on the above analysis.

\xhdr{High-centrality marking induces the largest perturbations} As a first experiment, we examine the extent to which node centralities recapitulate the amount of perturbation induced by marking their corresponding nodes. We consider the same setting discussed above: marking one node to transition from $G$ to $S_v$, where $S_v$ is obtained from $G$ by marking node $v$. For a centrality measure $c$, we consider three cases: $v$ attains the minimum of $c$ (i), the maximum of $c$ (ii), is randomly picked (iii). In each of these cases, we measured the distance $\| f(S_v) - f(G) \|$ on $100$ graphs from two different real-world datasets from the popular TU suite: MUTAG and NCI1~\citep{morris2020tudataset+2020}. Here, $f$ is an untrained $3$-layer GIN~\citep{xu2019powerful}.
\Cref{fig:distances} and \Cref{fig:nci1} show results for the walk-based Subgraph Centrality~\citep{estrada2005subgraph}, where horizontal lines indicate the average representation distance, and (i), (ii), (iii) are color-coded, resp., in green, blue, red.

From the plots, it is clearly visible how marking nodes with the lowest centrality leads to the smallest change in graph representation. This result gives direct empirical validation to the upper-bound analysis and the consequential observation discussed above. In accordance with our claim, subgraphs associated with low centrality nodes can be interpreted as poor marking candidates. Second, we note that marking nodes with the highest centrality induces the highest average perturbations, above random marking. This result is particularly relevant as it complements the above theoretical analysis: the walk-based upper-bound (\Cref{eq:walk-bound}) only suggests, but does not necessarily entail, that high-centrality marking associates with the largest perturbations. \Cref{fig:distances} and \cref{fig:nci1} show that this occurs in practice on these datasets, further motivating our proposed sampling strategy.
Results for other centrality measures are found in~\Cref{app:centrality_comparison}; they indicate that high-centrality marking leads, in all cases, to larger perturbations than random marking, with the highest values attained by walk-based centrality measures (see~\Cref{sec:pre}).

\xhdr{High-centrality marking aligns with substructure counting} 
We have shown how marking nodes with higher centrality can lead to larger graph perturbations. However, this may not be sufficient.
As an example, consider, two (non-isomorphic but) $1$-WL-equivalent graphs. Node marking can alter their message-passing-based representations, but not necessarily in a way to induce separation: 
ideally, if the two graphs are associated with different targets we aim to induce \emph{dissimilar perturbations} so to assist the training goal. In effect, we want our sampling strategy to alter the graph representations in a way that is consistent with the target space. Motivated by the observation that the presence and number of structural ``motifs'' are often related to graph-level tasks~\citep{kanatsouliscounting}, we empirically study how marking-induced perturbations correlate with counting small substructures, as a general, yet relevant, predictor for graph-level targets. 

We randomly generate $100$ Erd\"{o}s-Renyi (ER) graphs, each with $N=20$ nodes and wiring probability $p = 0.3$. Similarly as above, we experiment with various centrality measures, by marking a single node $v$ which attains either the maximum or minimum centrality value, or is randomly picked. Again, we record the perturbation $\| f(S_v) - f(G) \|$ given by the same architecture described above. On the same graphs, we count the number of various substructures, and evaluate the correlation between this value and the recorded perturbations. Results are reported in~\Cref{tab:correlation}, expressed in terms of the Pearson correlation coefficient. Higher coefficients indicate substructure counts are better recapitulated by the perturbations induced by a certain marking strategy.

The top section of the table compares the correlations obtained by marking randomly or based on the Subgraph Centrality. The perturbations induced by high-centrality marking correlate significantly more with the considered substructures than those induced by low-centrality-based or random marking. The bottom section of the table presents results for other centralities not based on walks. We note how they all deliver better correlations w.r.t.\ random marking, but not as high as those attained by the SC.

\xhdr{Discussion} Overall our experiments indicate the following. First, \emph{high-centrality sampling} shows to be a better approach than random sampling, especially when walk-based centrality measures are employed: on average, it selects marking candidates inducing the largest amount of perturbations over the original graph representations, and in a way that correlates with counts of relevant graph substructures. Second, the walk-based SC stands out as a particularly promising candidate: it is efficient to precalculate this measure and sample subgraphs based on that, while, on average, it performed as the best one in the experiments discussed above.
In the following, we will focus on this measure in particular. Our strategy will consist of marking only the top-ranking $k$ nodes according to SC, for a small, fixed $k$. As we will show, this simple approach already delivers strong empirical performance. More sophisticated sampling schemes could be designed based on extensions of the above analyses. These could consider more complex Subgraph GNN architectures~\citep{frasca2022understanding,zhang2023complete} or study the effect of multiple node markings, for which a deeper inquiry could take into account pair-wise scores beyond node-wise centrality measures\footnote{For example, when selecting multiple nodes to mark, one could also account for the distance between them.}. We defer these efforts to future work.

\begin{figure}[tbp]
  \centering
\includegraphics[width=0.75\linewidth, scale=0.2]{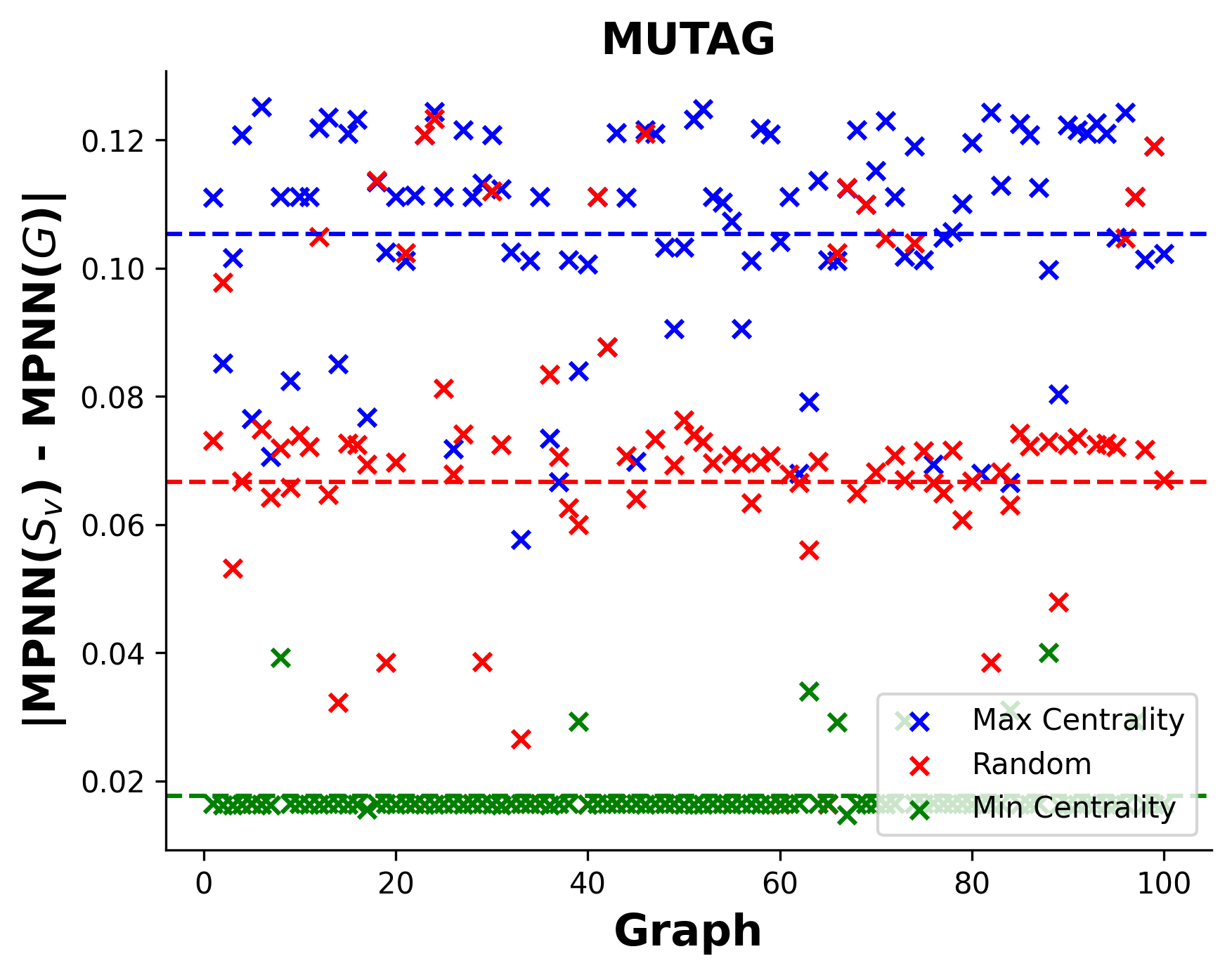}
\caption{%
    Plot showing the amount the graph representation using GIN is altered in the MUTAG dataset by adding an additional node-marked subgraph with (i) the highest centrality, (ii) the lowest centrality and (iii) a random marking.
    }
\label{fig:distances}
\end{figure}

\begin{table}[t]
\centering
\scriptsize
\caption{
    Pearson correlation between substructure counts and perturbation caused by different node-marking policies.
  }
\begin{tabular}{lccccc}
      \toprule
        Method & Tri ($\uparrow$) & 4-Cyc ($\uparrow$)  & Tailed Tri ($\uparrow$) & Star ($\uparrow$) \\
      \midrule
      Max Subgraph Centrality       &  \textbf{0.947}    &    \textbf{0.956}   &   \textbf{0.958}   &   \textbf{0.972}   \\    
      Min Subgraph Centrality       &  0.644    &    0.643   &   0.634   &   0.698   \\
      Random             &  0.712    &  0.708  &   0.723   &  0.723 \\
      \midrule
      Max Degree Centrality       &  0.937    &    0.947   &   0.948   &   0.962   \\
      Max Closeness Centrality       &  0.935    &    0.937   &   0.946   &   0.957   \\
      Max Betweenness Centrality       &  0.803    &    0.816   &   0.821   &   0.845   \\
      \bottomrule
\end{tabular}
\label{tab:correlation}\vspace{-.6cm} 
\end{table}

\section{Combining Subgraph GNNs with SEs}\label{sec:combining}

\subsection{Our Approach}\label{sec:approach}

\begin{figure*}[t]
\centering
  \includegraphics[width=0.75\textwidth, scale=0.2]
  {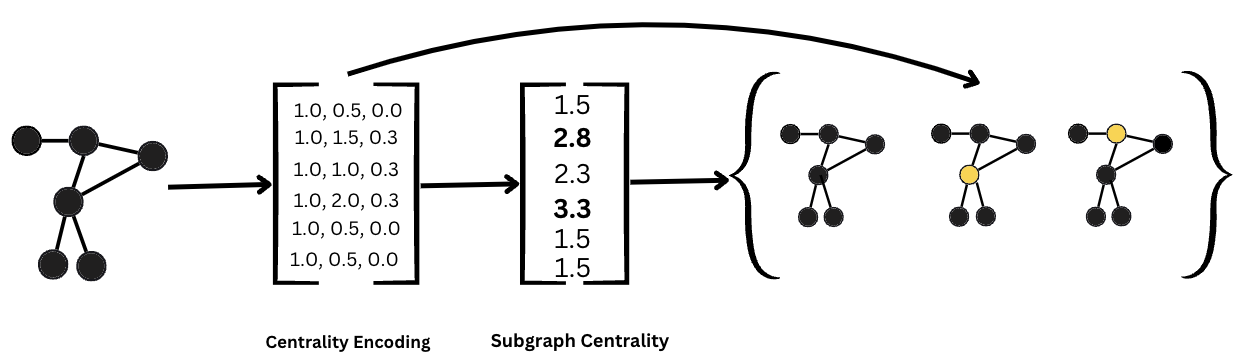}
  \caption{
    An overview of the pipeline for HyMN. We calculate the CSE for each node, 
    sample $T$ node-marked subgraphs with the highest Subgraph Centrality and concatenate the CSE with the initial node features. 
  }
  \label{fig:our_approach}\vspace{-.6cm}
\end{figure*}

\xhdr{Subgraph Centrality as a SE} In \cref{sec:approx}, we introduced the use of walk-based centrality measures, particularly Subgraph Centrality, as an efficient and effective method for subgraph sampling. These centrality measures can be expressed as power series expansions of the adjacency matrix~\citep{benzi2014matrix} (see~\Cref{eq:walk-centralities}).
We notice that addenda terms in the series already provide precious discriminative structural information, which could be desirable to employ for feature augmentations~\citep{rampášek2023recipe, bouritsas2022improving, dwivedi2021graph}. Precisely, in the case of our chosen Subgraph Centrality, for the default $\beta=1$, the $k$-th term $\frac{(A^k)_{vv}}{k!}$ is (the discounted number of) $k$-length closed-walks originating from $v$. As notable examples, these values are proportional to the degree of $v$ and the number of incident triangles for $k=2,3$. These considerations suggest retaining the intermediate values that contribute to the SC of each node, and employ them \`{a} la Structural Encodings beyond sampling purposes.

We naturally define the Centrality-based Structural Encodings (CSE) of order $K$ for node $v$ as\footnote{We defined CSEs directly as the addenda in the power-series in~\Cref{eq:walk-centralities}, but we note that the first two terms are not discriminative and could be dropped.}:
\begin{equation}
\label{eq:centrality}
    C_v^{\mathsf{CSE}} = \left[1,  \frac{(A)^{1}_{vv}}{1!}, \frac{(A)^{2}_{vv}}{2!}, \frac{(A)^{3}_{vv}}{3!}, \dotsc, \frac{(A)^{K}_{vv}}{K!}\right]. 
\end{equation}
As $K \to \infty$, the sum of these terms clearly coincides with the SC of the node for the default $\beta=1$. We refer readers to~\Cref{app:rwse_sampling_comparison}, for considerations on how CSEs compare with RWSEs. 

\xhdr{Hybrid Marking Networks} Our overall method consists in \emph{jointly} (1) augmenting node features with $K$-order CSEs; (2) subsampling the $T$ node-marked subgraphs for the nodes attaining the highest centrality values; (3) processing the obtained bag of subgraphs with a Subgraph GNN of choice.

In view of (1) and (2), we dub our approach HyMN, as in \emph{\underline{Hy}brid \underline{M}arking \underline{N}etwork}. These steps are depicted in~\Cref{fig:our_approach} and described in~\Cref{alg:HyMN} (\Cref{app:alg}). We note the following. First, for a large enough $K$, centrality values can be approximated by directly summing over the computed $K$-order CSEs. Second, node marking does not require any alteration of the original graph topology, making it unnecessary to store the subgraph connectivity. We thus opt not to materialize the bag of subgraphs: we only record marking information in the feature tensor and implement a custom message-function that processes it in an equivariant way. From an engineering standpoint, this allows for further memory-complexity enhancement w.r.t.\ generic Subgraph GNN approaches.

\subsection{Expressivity of Subgraph GNNs with CSEs}

HyMN effectively marries two distinct Graph Learning approaches: the use of node-marked subgraphs and of SEs. At this point, it is natural to ask whether this combination of techniques is justified from an expressiveness perspective. Put differently, we ask whether enhancing message passing with CSEs already subsumes our high-centrality marking strategy or, vice-versa, whether our marking approach could recover CSEs.

We answer these questions with graph separation arguments~\citep{xu2019powerful,morris2019weisfeiler} and highlight how, in fact, the two approaches are not generally comparable. We demonstrate that subsampling Subgraph GNNs with high-centrality marking \emph{does not} subsume CSE-enhanced MPNNs, while, at the same time, the discriminative power of the former approach \emph{is not} fully captured by the latter. Since neither technique alone fully subsumes the other, our analysis emphasizes the advantages of combining them in HyMN for improved expressiveness. Proofs and additional details are reported in~\Cref{app:claims_proofs}.

\xhdr{MPNNs with centrality encoding do not subsume subsampled Subgraph GNNs} 
Below, we show that  
node-marked subgraphs can separate graphs indistinguishable by CSE-enhanced MPNNs, i.e., MPNNs running on graphs whose features are augmented with our centrality-based encodings.
\begin{theorem}\label{prop:subsampled_bag_subgraph_better}
    There exists a pair of graphs $G$ and $G'$ such that for any CSE-enhanced MPNN model $M_\text{CSE}$ we have $M_\text{CSE}(G) = M_\text{CSE}(G')$, but there exists a DS-Subgraph GNN model (without CSEs) $M_\text{sub.}$ which uses a top-$1$ Subgraph Centrality policy such that $M_\text{sub.}(G) \neq M_\text{sub.}(G')$.
\end{theorem}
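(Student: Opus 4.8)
The claim is a one-directional separation, so it is enough to produce a single pair $(G,G')$. It is convenient to phrase both sides as colour-refinement tests. Let $\chi_{\mathrm{CSE}}$ denote the $1$-WL test whose initial colour of a vertex $v$ is its closed-walk profile $\big((A^k)_{vv}\big)_{k\ge 0}$; this is precisely the information a CSE-enhanced MPNN can extract from the structure, by \Cref{eq:centrality}. Let $\chi_{\mathrm{ind}}$ denote the node-individualisation test: individualise one vertex $v$, run $1$-WL on the resulting graph $S_v$, and compare the \emph{multiset} over $v$ of the stabilised colour histograms; this is exactly what a DS-Subgraph GNN with an expressive (1-WL-powerful) MPNN base and a full bag can compute. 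The plan is to exhibit $G,G'$ that are indistinguishable by $\chi_{\mathrm{CSE}}$ -- forcing $M_{\mathrm{CSE}}(G)=M_{\mathrm{CSE}}(G')$ for every CSE order $K$ and every MPNN -- but distinguishable by $\chi_{\mathrm{ind}}$, and to argue that for such a pair the top-Subgraph-Centrality policy hands the DS-Subgraph GNN (a superset of) the subgraphs that $\chi_{\mathrm{ind}}$ needs.

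The cleanest instantiation uses a pair of non-isomorphic, connected, $d$-regular graphs on the same number of vertices that are additionally \emph{walk-regular} (each $(A^k)_{vv}$ depends only on $k$) and \emph{co-spectral} (so these numbers agree across $G$ and $G'$). Then, by \Cref{eq:centrality}, the CSE vector is a single constant vector, identical on both graphs, for every $K$; hence a CSE-enhanced MPNN is just an MPNN fed one extra constant channel. Since any two connected $d$-regular graphs of the same order are $1$-WL-equivalent (refinement stabilises at once at the all-equal colouring), every such model satisfies $M_{\mathrm{CSE}}(G)=M_{\mathrm{CSE}}(G')$. Moreover, Subgraph Centrality is then constant on the vertex set, so every node ties for the top rank; under the equivariant convention that the policy keeps all top-scoring subgraphs, it returns the full bag $\ldblbrace S_v \rdblbrace_{v\in V}$, on which a DS-Subgraph GNN realises exactly $\chi_{\mathrm{ind}}$. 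It thus only remains to choose the pair so that $\chi_{\mathrm{ind}}$ separates it.

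This last requirement forces the pair \emph{outside} the distance-regular world: if $G$ is distance-regular, individualising any $v$ yields the equitable partition $\{v\}\mid N(v)\mid \overline{N[v]}$ on which $1$-WL stabilises immediately, reproducing only the intersection array, so a DS-Subgraph GNN cannot separate two such graphs; in particular strongly regular graphs with equal parameters are useless here. One therefore looks for a co-spectral, walk-regular, \emph{non}-distance-regular pair and verifies $\chi_{\mathrm{ind}}$ on it by a direct finite computation: mark a representative vertex, refine, and check that the stabilised colour histograms differ. A second, equally valid route keeps the sampled bag a genuine singleton rather than the full bag: take $G,G'$ \emph{non}-regular, $1$-WL-equivalent, with closed-walk profiles constant on each $1$-WL colour class and matching across $G$ and $G'$ (so CSEs never refine the colouring), and possessing a \emph{unique} vertex $v^\star$ of maximal Subgraph Centrality whose colour class is a singleton; then the top-$1$ bag is the single subgraph $S_{v^\star}$, and one needs $1$-WL to distinguish $S_{v^\star}$ from $S_{v'^\star}$.

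The main obstacle is exhibiting an explicit pair and verifying \emph{both} memberships: that $\chi_{\mathrm{CSE}}$ fails and that $\chi_{\mathrm{ind}}$ succeeds. The former is a strong demand, since CSEs already expose every vertex's degree, triangle count, and in fact its entire closed-walk spectrum; the pair must be ``symmetric enough'' that none of this separates it, yet ``asymmetric enough'' that individualising one vertex does -- a Goldilocks condition sitting strictly between the two tests above. Checking it is a finite but delicate $1$-WL computation on a handful of small graphs. A minor secondary subtlety is fixing the top-$1$ policy's tie-breaking so that $M_{\mathrm{sub.}}$ remains well-defined and equivariant; ``retain all top-scoring subgraphs'' does this and reduces the walk-regular case to $\chi_{\mathrm{ind}}$, whereas the non-regular route sidesteps ties by construction.
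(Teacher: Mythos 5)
Your plan is essentially the paper's proof: the paper also kills the CSE side by choosing a pair of co-spectral, vertex-transitive (hence walk-regular), $4$-regular graphs of equal order -- so that $(A^k)_{vv}=\operatorname{tr}(A^k)/n$ is the same constant on both graphs for every $k$, the CSE channel degenerates to a shared constant, and regularity makes the pair $1$-WL-equivalent -- and then wins on the subgraph side by showing that a single node-marking separates them. Your observations that distance-regular pairs are useless here and that ties in the top-$1$ policy need an equivariant convention are both correct; in the paper the tie issue evaporates because vertex transitivity makes all single-node markings of each graph isomorphic, so the output of $M_\text{sub.}$ is independent of which top-scoring node is selected.

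The one genuine gap is that you never exhibit a witness pair, and for an existence statement that is the entire content: you correctly identify the ``Goldilocks'' condition (symmetric enough to defeat CSEs and $1$-WL, asymmetric enough that individualisation succeeds) but leave both memberships unverified, explicitly deferring ``a direct finite computation'' that is not carried out. The paper closes this by taking the quartic vertex-transitive graphs Qt15 and Qt19 on $12$ vertices from the Read--Wilson atlas, citing their co-spectrality (which, combined with vertex transitivity, forces $(A^k)_{vv}=(A'^k)_{vv}$ for all $k$), and then checking concretely that the degree histogram of the subgraph induced on the marked node's $1$-hop neighbourhood differs between the two graphs -- a quantity a DS-Subgraph GNN with a marked node computes in a constant number of message-passing layers. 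Your alternative non-regular route (a unique maximal-centrality vertex whose marking separates the graphs while closed-walk profiles match classwise) is likewise plausible but equally unsubstantiated. To turn the proposal into a proof you must either name such a pair and perform the two finite checks, or import the paper's pair together with the co-spectrality citation.
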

This result is proved, in particular, by considering two $1$-WL equivalent graphs which have identical values for CSEs. This makes them indistinguishable by a CSE-enhanced MPNN, contrary to (sampled) DS-Subgraph GNNs~\citep{bevilacqua2021equivariant}, the simplest Subgraph GNN variants which process subgraphs independently. 
This underscores the advantage of incorporating a node-marking Subgraph GNN alongside structural encoding techniques.

\xhdr{Subsampled Subgraph GNNs do not subsume MPNNs with centrality encoding} 
Processing only a fixed number of subgraphs selected by our high-centrality strategy may limit discriminative power. In particular, the following shows that this approach does not subsume CSE-enhanced MPNNs:
\begin{theorem}\label{prop:subsampled_bag_subgraph_worse}
    There exists a pair of graphs $G$ and $G'$ such that for \emph{any} Subgraph GNN model $M_\text{sub.}$ which uses a top-$1$ Subgraph Centrality policy we have $M_\text{sub.}(G) = M_\text{sub.}(G')$, but there exists an MPNN + centrality encoding model $M_\text{CSE}$ such that $M_\text{CSE}(G) \neq M_\text{CSE}(G')$.
\end{theorem}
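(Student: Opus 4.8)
The plan is to exhibit an explicit pair $(G,G')$ built as the disjoint union of a ``centrality‑dominating'' gadget $H$ with one of two classical $1$‑WL‑equivalent graphs, and to exploit the fact that a top‑$1$ policy ever only reveals a \emph{single} marked subgraph: that subgraph can be pinned inside $H$, while the $1$‑WL‑equivalent parts carry closed‑walk information seen only by the CSEs. Concretely, take $P_1 = C_6$ (the $6$‑cycle) and $P_2 = C_3 \sqcup C_3$ (two disjoint triangles): these are non‑isomorphic, both $2$‑regular — hence $1$‑WL‑equivalent and vertex‑transitive — yet $(A^3)_{vv}=0$ for every vertex of $P_1$ while $(A^3)_{vv}=2$ for every vertex of $P_2$. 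Let $H = K_{1,m}$ be a star with centre $v^\star$, for $m$ large enough (e.g.\ $m=9$) that $c^{\text{SC}}_{v^\star}$ strictly exceeds the Subgraph Centrality of every other vertex of $H$ \emph{and} of every vertex of $P_1$ and $P_2$; this holds because $c^{\text{SC}}_{v^\star}=[\exp A_H]_{v^\star v^\star}=\cosh(\sqrt m)\to\infty$, each leaf of $H$ has $c^{\text{SC}}=1+(\cosh(\sqrt m)-1)/m\to 1$, and every vertex of $P_1,P_2$ has $c^{\text{SC}}\le e^2$ since their adjacency spectra lie in $[-2,2]$. Finally set $G = H \sqcup P_1$ and $G' = H \sqcup P_2$; these are non‑isomorphic, so some permutation‑invariant model does separate them (working with disconnected graphs, as is standard for such separation arguments, is harmless; a connected variant is obtainable with more care).

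For the Subgraph‑GNN side, the top‑$1$ Subgraph Centrality policy is well defined on both $G$ and $G'$ and picks exactly $v^\star$, so any such model reduces to a Subgraph GNN acting on the one‑element bag $\{\,(H,v^\star)\sqcup P_i\,\}$. I would record as a short lemma the folklore fact that a Subgraph GNN on a singleton bag is at most as expressive as $1$‑WL run on that single node‑marked graph — the cross‑subgraph aggregation and the bag‑level pooling act trivially on a one‑element bag — and then check that $(H,v^\star)\sqcup P_1$ and $(H,v^\star)\sqcup P_2$ receive identical stable colour histograms: colour refinement never crosses components, so the marked star part is coloured identically in both; and the degree‑$2$ vertices of $P_1$, resp.\ $P_2$, being the unique degree‑$2$ vertices and forming monochromatic $2$‑regular components, all collapse to one and the same WL colour in either graph. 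Hence $M_{\text{sub.}}(G)=M_{\text{sub.}}(G')$ for every admissible $M_{\text{sub.}}$.

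For the CSE side, it suffices to observe that the node‑wise CSE of order $K\ge 3$ separates $G$ from $G'$ \emph{before any message passing}: summing the $(A^3)_{vv}/3!$ entry of $C_v^{\mathsf{CSE}}$ over all nodes gives $\tfrac1{3!}\mathrm{tr}(A^3)$, i.e.\ the number of triangles in the graph, which is $0$ for $G$ and $2$ for $G'$ (the star contributes none). Thus the CSE‑enhanced MPNN that merely sums this coordinate over nodes — a legitimate, if degenerate, MPNN — outputs different values on $G$ and $G'$, so $M_{\text{CSE}}(G)\ne M_{\text{CSE}}(G')$. Combining the two sides proves the statement.

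The step I expect to be the main obstacle is not a calculation but the careful formalisation of the Subgraph‑GNN side: pinning down what ``uses a top‑$1$ policy'' means so that it yields a well‑defined singleton bag (which is exactly why $H$ is chosen with a strictly dominant centre rather than as a tie‑prone vertex‑transitive gadget), and establishing the lemma that a Subgraph GNN on a one‑subgraph bag is $1$‑WL‑bounded on that marked graph. Everything else — the spectral bound $c^{\text{SC}}_{v^\star}>\max_{u\ne v^\star}c^{\text{SC}}_u$, the colour‑refinement bookkeeping, and the trace identity — is elementary.
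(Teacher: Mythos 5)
Your construction is correct, and it is at heart the same counterexample as the paper's: both proofs pair $C_6$ against $C_3\sqcup C_3$ (1-WL-equivalent, vertex-transitive, but with $\mathrm{tr}(A^3)=0$ versus $12$, so the order-$3$ CSE coordinate separates them outright) and then attach a ``hub'' whose Subgraph Centrality strictly dominates everything else, so that the top-$1$ policy wastes its single mark on a node that reveals nothing about the cycles. Where you differ is the hub and, consequently, the dominance argument. The paper uses a \emph{universal} vertex joined to all cycle vertices, keeping $G,G'$ connected; the price is that proving the global node maximizes $(A^k)_{vv}$ for all $k$ requires an entrywise induction lemma on powers of $A$, and one must argue separately that marking the universal node adds nothing (its degree already identifies it under 1-WL). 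You instead take a disjoint star $K_{1,m}$, which turns dominance into a two-line closed-form computation ($\cosh\sqrt{m}$ for the centre versus at most $e^{2}$ on the $2$-regular components and roughly $1$ on the leaves) and makes the argmax tie-free by construction; the price is disconnected graphs, so your 1-WL bookkeeping leans on refinement not crossing components (which is fine --- the theorem only asks for \emph{some} pair of graphs, and your degree separation $1,2,m$ prevents any colour collisions across components). The one step both proofs share and both leave semi-formal is the reduction ``singleton bag $\Rightarrow$ bounded by 1-WL on the single marked graph'': you isolate it as an explicit lemma, while the paper asserts it in passing as ``equivalent to standard message passing''; your version is, if anything, the more careful of the two, and I see no gap in it --- with one subgraph in the bag, every cross-subgraph aggregation and the bag-level pooling degenerate to (at most) 1-WL with global readout, which does not exceed 1-WL for graph separation. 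So the proposal stands; the only substantive trade-off versus the paper is connectivity of the witnesses in exchange for a much lighter centrality-dominance argument.
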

This result exposes a limitation of subsampled Subgraph GNNs in distinguishing between two non-isomorphic graphs with differing closed walks, features which are, instead, captured by CSEs. Notably, as discussed in \cref{thm:full_bag} (\Cref{app:claims_proofs}), a \emph{full-bag} approach is capable of capturing CSEs. This observation suggests that while CSEs do not universally enhance the expressiveness of any Subgraph GNN, they can be beneficial when subsampling a limited number of subgraphs. 

Taken together, \Cref{prop:subsampled_bag_subgraph_better,prop:subsampled_bag_subgraph_worse} indicate that leveraging SC both as a SE \emph{and} as a means for subgraph sampling is advantageous in terms of discriminative power, justifying the integration of the two techniques in HyMN.

\section{Experiments}\label{sec:exp}

Our experiments\footnote{Code to reproduce experimental results is available at \url{https://github.com/jks17/HyMN/}.} aim to validate arguments in the previous sections and to empirically answer the following questions: 

\begin{itemize}[noitemsep,topsep=0pt]
    \item (\textbf{Q1}) \emph{Can SC be used to effectively subsample subgraphs for Subgraph GNNs?}
    \item (\textbf{Q2}) \emph{Can HyMN efficiently scale to graphs out of reach for Subgraph GNNs? How does it perform thereon?}
    \item (\textbf{Q3}) \emph{How does HyMN perform on real-world datasets w.r.t.\ strong GNN baselines?}
    \item (\textbf{Q4}) \emph{What is the impact of incorporating CSEs?}
\end{itemize}

\begin{figure}[tbp]
\centering
\includegraphics[width=0.9\linewidth]{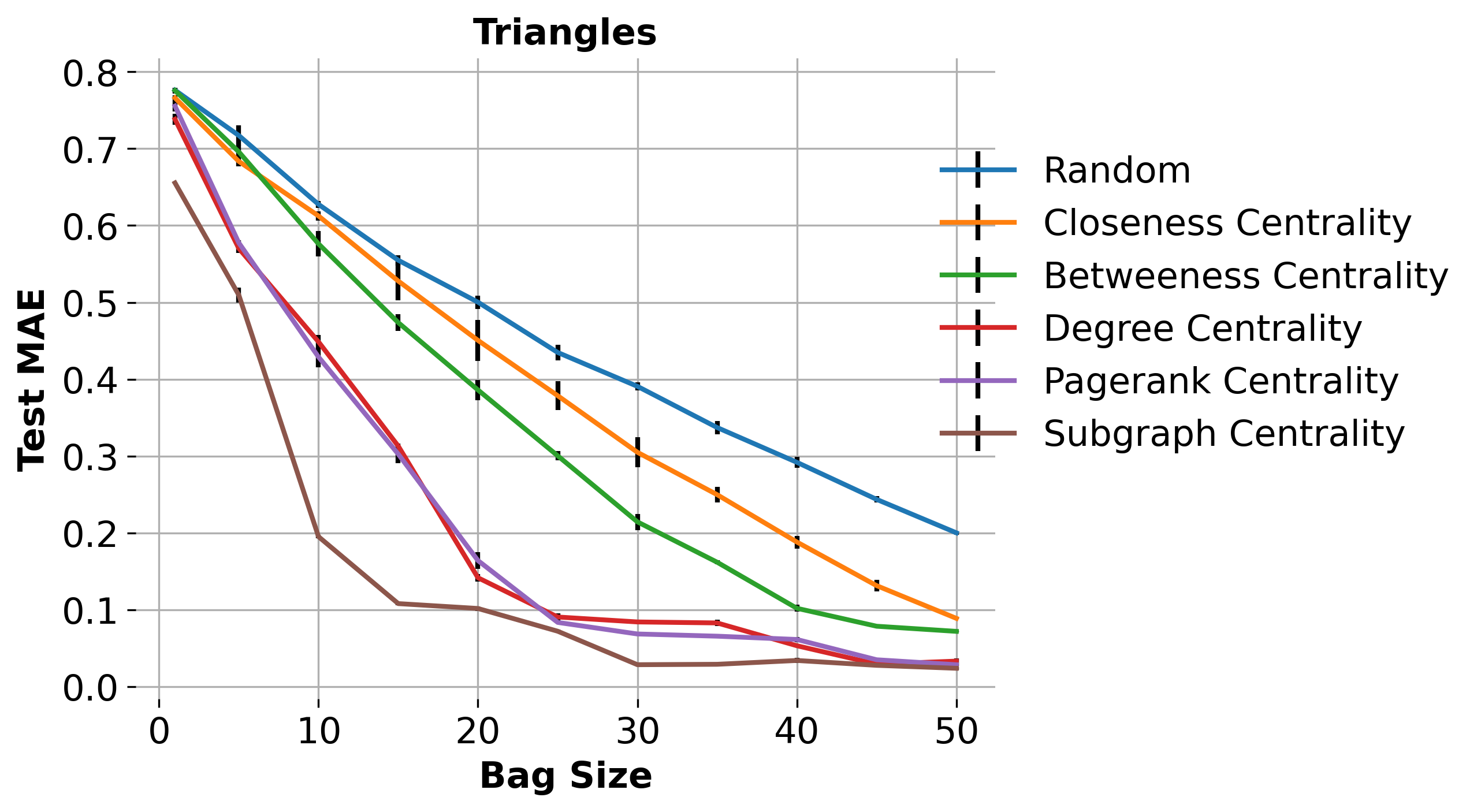} 
\caption{
    Comparing sampling approaches for counting triangles (performance metric is Mean Absolute Error, the lower the better). The average graph size is 59.33, corresponding to the average bag-size of a full-bag Subgraph GNN.
  }\label{fig:substructure_counting}

\end{figure}

\xhdr{Synthetic experiment for counting substructures} The ability of a model to count local substructures is an acknowledged way of evaluating its expressive power \citep{bouritsas2022improving, arvind2020onweisfeiler, tahmasebi2020counting}. In order to answer (\textbf{Q1}) and test the efficacy of subgraph sampling with node centrality, we explored the ability of a Subgraph GNN to (learn to) count different small substructures as we increase the number of subgraphs in our bag. We closely followed the experimental procedure of \citep{chen2020graph}, but modified the data generation process to render the task more challenging and informative\footnote{In particular, we considered larger graphs with a similar number of nodes to correct an undesired correlation between the graph size and the task targets observed in the original data (see \cref{app:exp_details_synthetic}).}. We compared the performance of sampling subgraphs based on different approaches: random sampling and sampling based on the highest values of different centrality measures. No CSEs are employed in this setting. We remark that this experiment is intrinsically different than what explored in~\Cref{sec:effective}. There, we studied the correlation between subgraph counts and marking-induced perturbations on untrained models; here, we \emph{train} models to \emph{regress} these values (potentially making use of the additionally provided marking information). \cref{fig:substructure_counting} reports results for triangle counting, expressed in terms of Test Mean Absolute Error (MAE). These results demonstrate the significant improvement afforded by high-centrality sampling over random sampling with fewer subgraphs (\textbf{Q1}). Additionally, it shows that SC-based sampling generally outperformed other centrality measures, demonstrating the benefits of focusing on walk-based centralities, in alignment with our analysis in \cref{sec:approx}. The full set of results for additional substructures is reported in \Cref{app:centrality_comparison}.

\begin{table}[t]
\centering
\tiny
\caption{
    Results on OGB datasets (Test ROC-AUC). The \textcolor{green}{first} and \textcolor{orange}{second} best results for each task are color-coded. Symbol `-' indicates the method is not (found to have been) benchmarked on the specific dataset.
  }
\begin{tabular}{ll|ccc}

      \toprule
        & \textbf{Method}  & MOLHIV & MOLBACE & MOLTOX21 \\
      \midrule
      & GCN  
      & 76.06 \tiny$\pm 0.97$     &  79.15 \tiny$\pm 1.44$ & 75.29 \tiny$\pm 0.69$        \\     
      & GIN   
      & 75.58 \tiny$\pm 1.40$     & 72.97 \tiny$\pm 4.00$  &    74.91 \tiny$\pm 0.51$ \\
      \midrule
      & FULL  
      & 76.54 \tiny$\pm 1.37$       &  78.41 \tiny$\pm 1.94$  &  76.25 \tiny$\pm 1.12$     \\
      \midrule
      & OSAN  (T = 10)   
      &  -       & 76.30 \tiny$\pm 3.00$  &  -   \\
      \cmidrule(lr){2-5}
      \multirow{8}{*}{\rotatebox{90}{Subsampled Subgraph GNNs}} & MAG-GNN (T = 2) 
      &  77.12 \tiny$\pm 1.13$       &   -  & - \\
      & RANDOM (T = 2) 
      & 77.55 \tiny$\pm 1.24$      & 75.36 \tiny$\pm 4.28$    &  76.65 \tiny$\pm 0.89$  \\
      & CS-GNN (T = 2) & 77.72 \tiny$\pm 0.76$ & 80.58 \tiny$\pm 1.04$ & - \\
      & POLICY-LEARN (T = 2)   
      &  79.13 \tiny$\pm 0.60$  & 78.40 \tiny$\pm 2.85$  &  77.47 \tiny$\pm 0.82$  \\
      & \textbf{HyMN (GIN, T=2) \tiny{no CSE}}             &  79.77 \tiny$\pm 0.70$       & 78.22 \tiny$\pm 4.02$   &  \textcolor{orange}{\textbf{77.68 \tiny$\pm 0.71$}}    \\
      & \textbf{HyMN (GIN, T=2)}             &  \textcolor{green}{\textbf{81.01 \tiny$\pm 1.17$}}       & \textcolor{green}{\textbf{81.16 \tiny$\pm 1.21$}}  &  77.30 \tiny$\pm 0.35$    \\
      \cmidrule(lr){2-5}
      & RANDOM (T=5)  
      & 77.30 \tiny$\pm 2.56$      & 78.14 \tiny$\pm 2.36$   &  76.62 \tiny$\pm 0.89$     \\
      & CS-GNN (T = 5) & 79.09 \tiny$\pm 0.90$ & 79.64 \tiny$\pm1.43$ & - \\
      & POLICY-LEARN (T=5) 
      &  78.49 \tiny$\pm 1.01$  & 78.39 \tiny$\pm 2.28$   &  77.36 \tiny$\pm 0.60$   \\
      & \textbf{HyMN (GIN, T=5) \tiny{no CSE}}            &  79.62 \tiny$\pm 1.14$       & 78.57 \tiny$\pm 1.31$ & \textcolor{green}{\textbf{77.82 \tiny$\pm 0.59$}}    \\
      & \textbf{HyMN (GIN, T=5)}             &  \textcolor{orange}{\textbf{80.17 \tiny$\pm 1.40$}}       & \textcolor{orange}{ \textbf{80.64 \tiny$\pm 0.48$}}  & 76.99 \tiny$\pm 0.45$    \\
      \bottomrule
\end{tabular}\vspace{-.6cm} 
\label{tab:subgraph_mol}
\end{table}

\begin{table*}[!t]
\begin{minipage}{0.59\textwidth}
    \resizebox{\columnwidth}{!}{
\begin{tabular}{lccccc}
      \toprule
        Method & Precomp. \tiny{(s)} & Train \tiny{(s/epoch)} & Test \tiny{(s)} & Pept.-Func \tiny{($\uparrow$)} & Pept.-Struct \tiny{($\downarrow$)}  \\ 
     \midrule
      GIN & 0.00 \tiny $\pm 0.00$ & 2.65 \tiny $\pm 0.01$ & 0.238 \tiny $\pm 0.004$ &  0.6555   \tiny $\pm 0.0088$      & 0.2497 \tiny $\pm 0.0012$  \\
      GIN + CSE    & 20.12 \tiny $\pm 0.39$ & 2.78 \tiny $\pm 0.01$ & 0.253 \tiny $\pm 0.004$       &  0.6619 \tiny $\pm 0.0077$        &  0.2479 \tiny $\pm 0.0011$           \\
      \textbf{HyMN (GIN, T=1)} & 23.71 \tiny $\pm 0.34$ & 4.93 \tiny $\pm 0.03$ & 0.420   \tiny $\pm 0.002$         & 0.6857 \tiny $\pm 0.0055$        & 0.2464 \tiny $\pm 0.0013$           \\
      \textbf{HyMN (GIN, T=2)}   & 23.75 \tiny $\pm 0.32$ & 6.60 \tiny $\pm 0.03$ & 0.561   \tiny $\pm 0.001$       & 0.6863 \tiny $\pm 0.0050$        & \textbf{0.2457 \tiny $\pm 0.0012$}           \\
      \midrule
      GCN  & 0.00 \tiny $\pm 0.00$ & 2.07 \tiny $\pm 0.04$ & 0.234 \tiny $\pm 0.006$ &   0.6739  \tiny $\pm 0.0024$    & 0.2505 \tiny $\pm 0.0023$   \\
      GCN + CSE  & 20.19 \tiny $\pm 0.36$ & 2.16 \tiny $\pm 0.04$ & 0.254 \tiny $\pm 0.005$          &  0.6812 \tiny $\pm 0.0037$        &  0.2499 \tiny $\pm 0.0010$           \\
      \textbf{HyMN (GCN, T=1)}   & 23.88 \tiny $\pm 0.30$ & 2.94 \tiny $\pm 0.01$ & 0.292   \tiny $\pm 0.002$       & \textbf{0.6912 \tiny $\pm 0.0170$}        & 0.2481 \tiny $\pm 0.0013$           \\
      \textbf{HyMN (GCN, T=2)}   & 23.97 \tiny $\pm 0.30$ & 3.83 \tiny $\pm 0.01$  & 0.368   \tiny $\pm 0.002$       & \textbf{0.6948 \tiny $\pm 0.0052$}        & 0.2477 \tiny $\pm 0.0010$           \\
      \midrule
      GPS 
      & 20.87 \tiny $\pm 0.43$ & 8.39 \tiny $\pm 0.05$ & 0.611 \tiny $\pm 0.005$ & 0.6535 \tiny $\pm 0.0041$ & 0.2500 \tiny $\pm 0.0005$ \\
      Graph-ViT 
      & 29.12 \tiny $\pm 0.61$ & 6.78 \tiny $\pm 0.01$ & 0.709 \tiny $\pm 0.009$ & \textbf{0.6942 \tiny $\pm 0.0075$} & \textbf{0.2449 \tiny $\pm 0.0016$} \\
      G-MLP-Mixer 
      & 29.52 \tiny $\pm 0.69$ & 6.87 \tiny $\pm 0.03$  & 0.684 \tiny $\pm 0.003$ & \textbf{0.6921 \tiny $\pm 0.0054$} & 0.2475 \tiny $\pm 0.0015$ \\
      \bottomrule
\end{tabular}
}
    \caption{
    Results on Peptides datasets with timing comparisons on Peptides-Func using a NVIDIA GeForce RTX 3080 10GB. Test AP is quoted for Peptides-Func and Test MAE for Peptides-Struct. 
      }
    \label{tab:peptides}
\end{minipage}
\hfill 
\begin{minipage}{0.39\textwidth}
    \resizebox{\columnwidth}{!}{ 
    \begin{tabular}{lccc}
      \toprule
        Method &	Train \tiny{(s/epoch)} & Test \tiny{(s)} & MalNet-T. \tiny{($\uparrow$)} \\
      \midrule
      GIN                       & 7.16 \tiny $\pm 0.04$           & 0.70  \tiny $\pm 0.03$   & 91.10 \tiny $\pm 0.98$ \\
    \midrule
    \textbf{HyMN (T=1) \tiny{no CSE}} & 8.79    \tiny $\pm 0.01$        & 0.94 \tiny $\pm 0.01$     & \textbf{92.84 \tiny $\pm 0.52$} \\ 
    \textbf{HyMN (T=1)}           & 9.31    \tiny $\pm 0.02$        & 1.02 \tiny $\pm 0.01$     & \textbf{92.54 \tiny $\pm 0.75$} \\ 
    \textbf{HyMN (T=2) \tiny{no CSE}} & 11.71  \tiny $\pm 0.17$         & 1.38 \tiny $\pm 0.01$    & 92.18 \tiny $\pm 0.64$ \\ 
    \textbf{HyMN (T=2)}           & 11.97   \tiny $\pm 0.06$        & 1.40  \tiny $\pm 0.02$   & \textbf{92.44 \tiny $\pm 0.35$} \\ 

    \midrule 
    GPS (Perf.)        & 59.34  \tiny $\pm 0.52$         & 6.43  \tiny $\pm 0.02$   & 92.14 \tiny $\pm 0.24$ \\ 
    GPS (BigBird)          & 112.63 \tiny $\pm 1.26$         & 17.65 \tiny $\pm 0.05$   & 91.02 \tiny $\pm 0.48$ \\ 
    GPS (Transf.)*     & 162.10   \tiny $\pm 1.37$       & 14.97  \tiny $\pm 0.04$  & 90.85 \tiny $\pm 0.68$ \\       
      \bottomrule
\end{tabular}}
    \caption{
        Results and timing comparisons using a GeForce RTX 2080 8 GB for the MalNet-Tiny dataset. The model marked with * did not fit in memory with batch size 16, and was trained with batch size 4. Both HyMN and GPS employ a GIN backbone.
      }\label{tab:malnet}
\end{minipage}
\end{table*}

\xhdr{OGB} We tested HyMN on several datasets for graph property prediction from the OGB benchmark \citep{ogbg}. We focus, in particular, on molecular graphs, tasked to predict global properties such as their toxicity or their ability to inhibit HIV replication. To further examine (\textbf{Q1}), \cref{tab:subgraph_mol} shows the performance of our approach in relation to MPNNs, a full-bag Subgraph GNN and other subgraph sampling policies with the same number of subgraphs. Notably, even without using the centrality-based encoding (HyMN w/out CSE), our method matches the performance of a learnable sampling policy (POLICY-LEARN \citep{bevilacqua2023efficient}) and consistently outperforms MPNNs and random sampling policies. Additionally, we observe that augmenting node features with CSEs can significantly increase performance on MOLHIV and MOLBACE, outperforming a full-bag Subgraph GNN (\textbf{Q4}). These results suggest that centrality sampling is effective and that additionally incorporating centrality information can lead to performance improvements on real-world datasets, aligning with our findings in~\Cref{sec:approx}. We last highlight the exceptional result HyMN obtains, with $T=2$, on the MOLHIV benchmark ($81.01\pm1.17$). This model outperforms strong GNN baselines, as reported in~\Cref{app:zinc}, \Cref{tab:zinc} (\textbf{Q3}). Run-time comparisons for this dataset are additionally enclosed in \Cref{app:timing}.

\xhdr{Peptides} In order to evaluate the ability of HyMN to scale to larger graphs (\textbf{Q2}), we experimented on the Peptides datasets from the LRGB benchmark \citep{dwivedi2023long}, where the task is to predict global structural and functional properties of peptides, represented as graphs. The average number of nodes in these graphs is 150.94, so it is difficult for a full-bag Subgraph GNN to process. Additionally, using the centrality encoding to sample just one or two additional node-marked subgraphs can improve performance on both datasets. We also outperform GPS~\citep{rampášek2023recipe} (a Graph Transformer) and match the performance of Graph-MLP-Mixer \citep{he2023generalization} (\textbf{Q3}). Our timing experiments on Peptides-Func demonstrate that we are significantly more efficient than both of these approaches, only increasing the training time per epoch over a GCN by $42\%$ and the inference time by $25\%$ using `HyMN (GCN, T=1)' (\textbf{Q2}).

\xhdr{MalNet-Tiny} This is a \textit{code} dataset consisting of function call graphs with up to 5,000 nodes~\cite{freitas2021largescale}. Models are asked to predict their malware class, or identify them as benign otherwise. As well as being from a significantly different domain, these graphs are considerably larger than the graph benchmarks considered above, allowing us to showcase \textit{HyMN's ability to scale to even larger graphs} (\textbf{Q2}). We compare to the GPS architecture~\cite{rampášek2023recipe}, instantiated with various (sparse) variations of its Graph Transformer component and a GIN message-passing backbone, the same we employ for HyMN. The results are shown in \cref{tab:malnet}. Whilst, full-bag Subgraph Graph Neural Networks cannot scale to this dataset, our method, HyMN, is able to outperform sparse Transformer-based approaches at a fraction of the runtime: our approach is, \emph{almost six times} faster than the best GPS variant both in training and inference. 

\xhdr{Reddit} To further showcase the ability of HyMN to scale to large graphs and to evaluate the effectiveness of the method with a different data modality, we experimented with the REDDIT-BINARY (RDT-B) dataset \citep{morris2020tudataset+2020}. The task involves the classification of social networks that have a high average number of nodes $(429)$, so full-bag approaches cannot be applied. We compare our approach with a learnable sampling policy (POLICY-LEARN \citep{bevilacqua2023efficient}) and to two expressive higher order GNNs (SIN and CIN) \citep{bodnar2021cellular, bodnar2021weisfeiler}. HyMN exhibits on-par performance with the best-performing method (\ref{tab:rdtb_accuracy}) which is a learnable policy, highlighting the effectiveness of our sampling approach.

\begin{table}[t]
\centering
\scriptsize
\caption{Results on RDT-B (Accuracy).}
\label{tab:rdtb_accuracy}
\begin{tabular}{lc}

      \toprule
        Method & RDT-B \\ 
     \midrule
      GIN  &   92.4 $\pm 2.5$ \\
      \midrule 
      SIN & 92.2 $\pm 1.0$ \\
      CIN & 92.4 $\pm 2.1$ \\
      \midrule
      FULL  &   OOM \\
      RANDOM (T=20) & 92.6 $\pm 1.5$ \\
      RANDOM (T=2) & 92.4 $\pm 1.0$ \\
      POLICY-LEARN (T=2)  &  93.0 $\pm 0.9$ \\
      HyMN (T=2) & 93.2 $\pm 2.2$ \\
      \bottomrule
\end{tabular}
\end{table}

\xhdr{Summary} In reference to the questions enlisted above, we conclude the following. (\textbf{A1}) The results from substructure counting and on the OGB benchmarks suggest a positive answer to \textbf{Q1}: SC-based sampling significantly outperformed random sampling on both, and matched the performance of the learnable POLICY-LEARN. (\textbf{A2}) HyMN was effortlessly applied to the larger Peptides and MalNet datasets, with strong empirical performance and extremely lightweight inference and training run-times. This demonstrates that effective subgraph sampling unlocks the applicability of Subgraph GNNs to larger graphs, even orders of magnitude larger than those full-bag methods could possibly be applied to. (\textbf{A3}) Beyond these two datasets, HyMN also attained remarkable performance on OGB benchmarks, and performed competitively on ZINC (see~\Cref{app:zinc}). This suggests a positive answer to \textbf{Q3}. (\textbf{A4}) We observe that CSEs can enhance the performance of standard MPNNs (see~\Cref{tab:peptides}) and subsampled Subgraph GNNs (see, e.g., MOLHIV and MOLBACE in~\Cref{tab:subgraph_mol}). We note, however, they were not always beneficial (see MOLTOX21~\Cref{tab:subgraph_mol} and MalNet~\Cref{tab:malnet}).

\section{Conclusions}

We introduced a novel Subgraph GNN method, termed HyMN, which strikes a compelling balance between efficiency and expressiveness by combining a subgraph sampling strategy and structural encodings both derived from walk-based centrality measures. HyMN attains strong empirical performance with an extremely thin computational overhead, making it applicable to a wide spectrum of downstream tasks featuring large graphs, even tens or hundreds of times larger than those full-bag Subgraph GNNs can possibly process. In our paper, we showed, in particular, that the Subgraph Centrality by~\citet{estrada2005subgraph} is a good measure of subgraph importance: for a very limited number of subgraphs it enables competitive performance and outperforms random and learnable selection strategies. We also proved that the additional inclusion of centrality-based SEs is beneficial both theoretically and in practice, allowing to enhance discriminative power and downstream generalization performance on several real-world benchmarks.

\xhdr{Limitations and future work} Our sampling procedure does not take into account already sampled subgraphs unlike methods such as the ones in~\citep{zhao2022stars,bevilacqua2023efficient}. Future work could focus on more general perturbation analyses to give an indication on multi-node marking for higher-order selection policies~\citep{qian2022ordered} or to \emph{quantify} (i) the optimality of the designed sampling strategy, (ii) the impact of adding subgraphs to a partially populated bag. More sophisticated selection strategies could combine different walk-based centrality measures or consider pairwise structural features.

\clearpage

\section*{Acknowledgments}
The authors would like to thank Giorgos Bouritsas for insightful early-stage discussions and Moshe Eliasof for late stage experimental insights on PolicyLearn.
J.S.\ is supported by the UKRI CDT in AI for Healthcare \url{http://ai4health.io} (EP/S023283/1). 
Y.E.\ is supported by the Zeff PhD Fellowship; G.B.S.\ by the Jacobs Qualcomm PhD Fellowship.
M.B.\ is supported by EPSRC Turing AI World-Leading Research Fellowship No. EP/X040062/1 and EPSRC AI Hub on Mathematical Foundations of Intelligence: An ``Erlangen Programme" for AI No. EP/Y028872/1. 
H.M.\ is a Robert J.\ Shillman Fellow and is supported by the Israel Science Foundation through a personal grant (ISF 264/23) and an equipment grant (ISF 532/23). 
F.F.\ conducted this work supported by an Andrew and Erna Finci Viterbi and, partly, by an Aly Kaufman Post-Doctoral Fellowship. F.F.\ partly performed this work while visiting the Machine Learning Research Unit at TU Wien led by Prof.\ Thomas G\"{a}rtner.

\section*{Impact Statement}
This paper presents work whose goal is to advance the field of 
Machine Learning. There are many potential societal consequences 
of our work, none which we feel must be specifically highlighted here.

\bibliography{icml_bibliography}
\bibliographystyle{icml2025}

\newpage
\appendix
\onecolumn
\section{Outline of the Appendix Corpus}
The appendix corpus is structured as follows:
\begin{itemize}
    \item \Cref{app:background}: we include an extended background section on Subgraph GNNs, their computational complexity and bag sampling approaches to scale these architectures. This section is intended to be complementary to \Cref{sec:pre}, as it provides additional context to readers not necessarily familiar with Subgraph GNNs and complexity limitations;
    \item \Cref{app:alg}: we report a full algorithmic description of our HyMN approach;
    \item \Cref{app:claims_proofs}: we report proofs pertaining to \cref{sec:combining};
    \item \cref{app:perturb-analysis}: we provide derivations and additional details on the perturbation analysis of \Cref{sec:approx};
    \item \Cref{app:centrality_comparison}: we report and discuss further, additional experiments -- we additionally run HyMN on the ZINC dataset, we explore more backbone MPNN architectures, we compare Subgraph Centrality with other centrality measures and with RWSEs, and we further examine the impact of including CSEs in our model, alongside marking;
    \item \Cref{app:exp}: we comprehensively describe our experimental settings and details, architectural choices and hyper-parameters;
    \item \Cref{app:timing}: we enclose additional timing comparisons.
\end{itemize}

\section{More on Subgraph GNNs and their Complexity}\label{app:background}

\subsection{The Architectural Family of Subgraph GNNs}

The term ``Subgraph GNN'' refers to a broad family of recent Graph Neural Networks sharing a common architectural pattern: that of modeling graphs as sets (bags) of subgraphs. Subgraphs are processed by a \emph{backbone} GNN, possibly flanked by additional information sharing modules~\citep{bevilacqua2021equivariant}. Bags of subgraphs are formed by \emph{selection policies}, which typically extract subgraphs by applying topological perturbations such as node-~\citep{cotta2021reconstruction,papp2021dropgnn} or edge-deletions~\citep{bevilacqua2021equivariant}, or by marking nodes~\citep{you2021identity,papp2022theoretical}.

In formulae, a Subgraph GNN $f$ can be described as~\citep{frasca2022understanding}:
\begin{equation}\label{eq:subgraph-gnn}
    f: G \mapsto \big ( \mu \circ \rho \circ \mathcal{S} \circ \pi \big ) ( G ),
\end{equation}
\noindent where $\pi$ is the selection policy; $\mathcal{S}$ applies the backbone GNN -- with, potentially, information sharing components; $\rho, \mu$ are pooling and prediction modules.

Various choices for the above terms give rise to different Subgraph GNN variants~\citep{frasca2022understanding}. For non-trivial selection policies and sufficiently expressive backbones, these exceed 1-WL discriminative power~\citep{bevilacqua2021equivariant}, thus surpassing standard message-passing networks.

The most popular selection policies are \emph{node-based}: selected subgraphs are in a bijection with nodes in the original input graph. Prominent policies in this class include node-deletion, ego-networks and node-marking. From an expressiveness perspective, node-marking subsumes the first two policies~\citep{papp2022theoretical,zhang2023complete}, other than uncovering a connection between node-based Subgraph GNNs and node-individualization algorithms for graph isomorphism testing~\citep{dupty2022graph}. Node-marking constructs bags of subgraphs as:
\begin{equation}
    \pi_{\mathsf{NM}}: G = (A, X) \mapsto \{ (A, X \oplus e_1), \dots, (A, X \oplus e_n) \},
\end{equation}
\noindent where $e_i \in \{0,1\}^{n \times 1}$ is $i$-th element of the canonical basis~\footnote{Elements in vector $e_i$ are $0$ except for the one in position $i$, which equals $1$.} and $\oplus$ denotes concatenation across the channel dimension.

Node-based Subgraph GNNs encompass several architectures, including ID-GNNs~\citep{you2021identity}, $(n-1)$-Reconstruction GNNs~\citep{cotta2021reconstruction}, Nested GNNs~\citep{zhang2021nested}, GNN-AK~\citep{zhao2022stars}, SUN~\citep{frasca2022understanding} and the maximally expressive GNN-SSWL~\citep{zhang2023complete}. For 1-WL-expressive backbones, Subgraph GNNs with node-based policies are bounded in their expressive power by 3-WL~\citep{frasca2022understanding}. Detailed, structured charting of their design space, along with fine-grained expressiveness results, are found in~\citep{frasca2022understanding,zhang2023complete}.

\subsection{Computational Complexity Aspects}

Consider a Subgraph GNN $f$ in the form of \Cref{eq:subgraph-gnn}, where $\mathcal{S}$ stacks neural message-passing layers. For an input graph $G$ with $n$ nodes and a degree bounded by $d_\mathsf{max}$, $f$ exhibits an asymptotic forward-pass complexity:
\begin{equation}\label{eq:sub-gnn-complexity}
    T(n, d_\mathsf{max}, m) = \mathcal{O}(m \cdot \overbrace{n \cdot d_\mathsf{max}}^{\text{msg-pass complexity}}),
\end{equation}
\noindent with $m$ being the number of subgraphs generated by policy $\pi$ executed on graph $G$.

Node-based policies are such that the number of subgraphs equals the number of nodes in the original input graph, i.e., $m=n$. Hence, the complexity of node-based Subgraph GNNs scales as:
\begin{equation}\label{eq:sub-gnn-complexity-node-based}
    T(n, d_\mathsf{max}) = \mathcal{O}(n^2 \cdot d_\mathsf{max}).
\end{equation}
Higher-order policies~\citep{qian2022ordered} may allow larger expressive power, but for heftier complexities. As an example, node-pair marking would induce a complexity of $\mathcal{O}(n^3 \cdot d_\mathsf{max})$.

The quadratic dependency on the number of nodes in~\Cref{eq:sub-gnn-complexity-node-based} hinders the application of Subgraph GNNs to even mildly-sized graphs. At the time of writing, experimenting on the Peptides datasets~\citep{dwivedi2023long} is already very challenging on common hardware, despite these graphs having on average $\approx 151$ nodes and a similar number of edges.

\begin{figure}[tbp]
  \centering
\includegraphics[width=0.9\linewidth, scale=0.2]{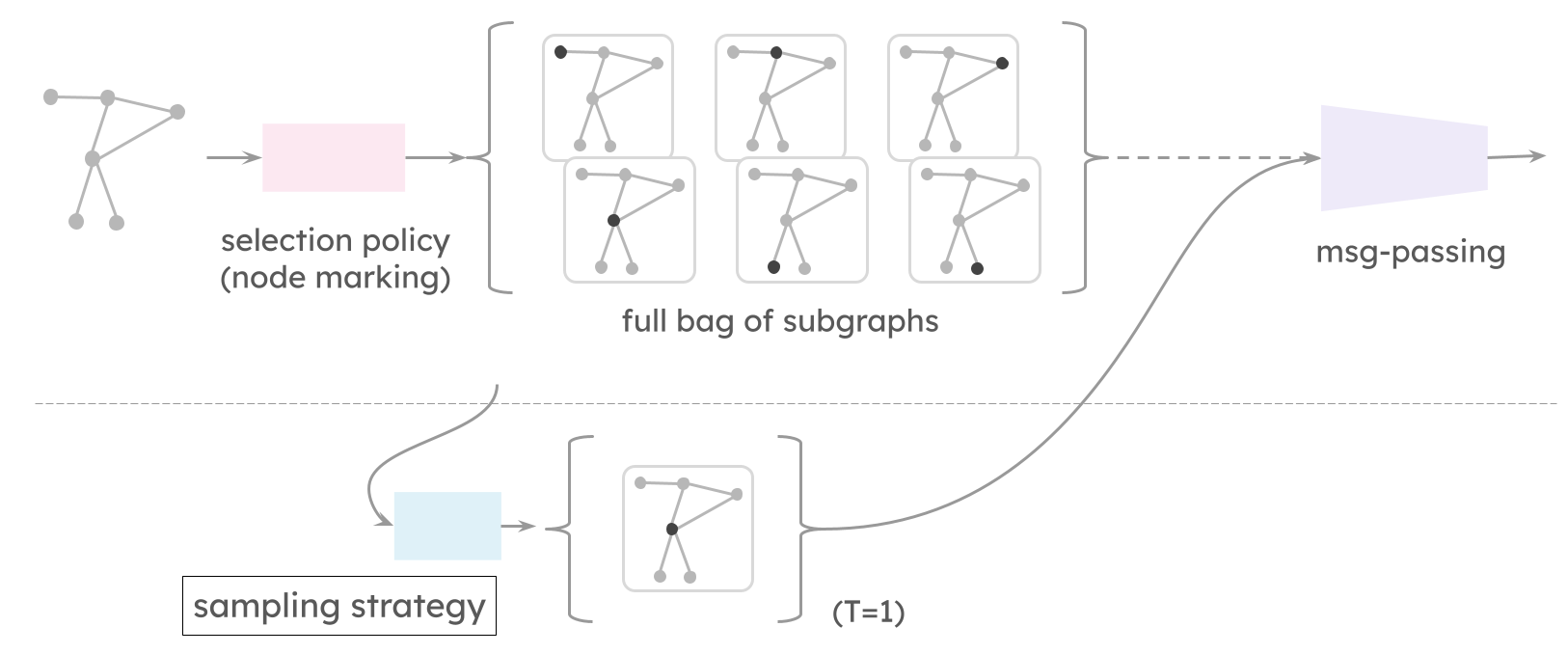}
\caption{%
    Depiction of a Subgraph GNN endowed with a subgraph sampling strategy.}
\label{fig:sampling_cartoon}
\end{figure}

\subsection{Subsampling Bags of Subgraphs}

To mitigate the aforementioned issue, one possibility is to reduce the number of subgraphs to process, i.e., to lower the impact of $m$ in~\Cref{eq:sub-gnn-complexity}. A convenient approach is to \emph{sample} a small set $k$ of subgraphs from the bag generated by a predefined policy $\pi$~\citep{bevilacqua2021equivariant,zhao2022stars,bevilacqua2023efficient,kong2024mag, sun2021sugar} (see~\Cref{fig:sampling_cartoon}). Essentially, this requires updating \Cref{eq:subgraph-gnn} as:
\begin{equation}
    f: G \mapsto \big ( \mu \circ \rho \circ \mathcal{S} \circ \overbrace{\sigma}^{\text{sampling}} \circ \pi \big ) ( G ).
\end{equation}
\noindent Above, $\sigma$ applies a subgraph sampling strategy to reduce the bag cardinality from $m$ to $k$:
\begin{equation}
    \sigma: B = \{ G_1, \dots, G_m \} \mapsto \tilde{B} \quad \text{s.t.} \enspace \tilde{B} \subseteq B, \enspace |\tilde{B}| = k.
\end{equation}
The new cardinality $k$ should scale sub-linearly in $n$, or be chosen as an appropriate small constant. The optimal design of strategy $\sigma$ is a non-trivial task at the core of several recent works~\citep{qian2022ordered,kong2024mag,bevilacqua2023efficient,bar2024flexible}. In the present manuscript, we discuss a simple, effective construction based on walk-based centrality measures. Our design is justified by the theoretical considerations and empirical observations in \Cref{sec:effective}, which are validated by the complementary experimental results reported in \Cref{sec:exp}. 

\section{Hybrid Marking Networks in Algorithmic Form}\label{app:alg}

\begin{algorithm}[H]
    \caption{Hybrid Marking Network}\label{alg:HyMN}
    \begin{algorithmic}[1]
        \REQUIRE{Graph $G = (A, X)$, Subgraph GNN $f$, Max walk-length $K$, Number of marks $T$}
        \STATE $C^{\mathsf{CSE}}_{v,k} \gets 0 \quad \forall v \in G, k \in [K]$ \hfill $\triangleright$ SE init.
        \STATE $M_{v,t} \gets 0 \quad \forall v \in G, t \in [T]$ \hfill $\triangleright$ Mark init.
        \STATE $B \gets I$
        \FOR{$k \in [K]$}
            \FOR{$v \in G$}
                \STATE $C^{\mathsf{CSE}}_{v,k} \gets \nicefrac{B_{vv}}{k!}$ \hfill $\triangleright$ Compute SEs
            \ENDFOR
            \STATE $B \gets B \cdot A$
        \ENDFOR
        \STATE $\tilde{C}^{SC} \gets \sum_{k} C^{\mathsf{CSE}}_{:,k}$ \hfill $\triangleright$ Estimate SC
        \STATE $\mathcal{M} \gets \text{select-top}(\tilde{C}^{SC}, T)$ \hfill $\triangleright$ Select nodes
        \FOR{$t \in [T]$}
            \STATE $M_{\mathcal{M}[t],t} \gets 1$ \hfill $\triangleright$ Mark nodes
        \ENDFOR
        \STATE $y_G \gets f(A, X \oplus C^{\mathsf{CSE}}, M)$ \hfill $\triangleright$ Forward-pass
        \STATE \textbf{return} $y_G$
    \end{algorithmic}
\end{algorithm}
\Cref{alg:HyMN} describes our HyMN approach, i.e., how graph representations are obtained starting from an input graph $G=(A, X)$, a backbone Subgraph GNN $f$, and hyper-parameters $K,T$\footnote{Note that, as reported in \Cref{alg:HyMN}, we perform a \emph{select-top} operation (line 11). In some cases, which we found to rarely occur in practice, depending on the value of $T$, this operation has to arbitrarily break ties between nodes with the same exact centrality values. This tie breaking procedure can additionally be either side-stepped by marking all nodes with the same centrality value or be made equivariant by accounting for the ordering between colours from an auxiliary DS-WL-like refinement~\citep{bevilacqua2021equivariant}.}, specifying, respectively, the maximum walk-length and the number of marks/subgraphs to select. 

\section{Claims and Proofs}\label{app:claims_proofs}
\begin{proposition}\label{thm:cpe_rw}
    Let \( G = (A, X) \) be a connected graph (with $n$ nodes) with initial node features \( X_i = \mathbf{p}^{\text{CSE}}_i \).
    There exists an $L=k+1$ layered Message Passing Neural Network (MPNN) that processes the graph \( G \) and can compute the following structural encodings capturing closed-walk probabilities for walks of size up to $k$ for any node $i$, defined as:
    \[
    \textbf{c}_i = \left[1, \frac{(A)^{1}_{ii}}{\sum_{j=1}^n (A)^{1}_{ij}}, \frac{(A)^{2}_{ii}}{\sum_{j=1}^n (A)^{2}_{ij}}, \dotsc, \frac{(A)^{k}_{ii}}{\sum_{j=1}^n (A)^{k}_{ij}} \right], 
    \]
    up to arbitrary precision.
\end{proposition}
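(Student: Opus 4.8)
The plan is to build a message passing network that regards the closed-walk information already present in $\mathbf{p}^{\text{CSE}}_i$ as \emph{given numerators}, and whose only real work is to (i) compute the corresponding \emph{normalizers} $\sum_j (A^m)_{ij}$ by message passing and (ii) perform the per-entry divisions. The observation that makes this natural --- and that explains why the proposition \emph{assumes} the CSE features as input rather than constant features --- is that a plain MPNN cannot recover the closed-walk counts $(A^m)_{ii}$ on its own: these are exactly the identity-aware quantities on which $1$-WL--equivalent graphs may disagree. By contrast, the walk \emph{totals} $(A^m\mathbf{1})_i = \sum_j (A^m)_{ij}$ are readily computable by $m$ rounds of sum-aggregation starting from the all-ones vector, and such a constant channel is available: it is the first entry of $\mathbf{p}^{\text{CSE}}_i$, equal to $1$.

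Concretely, I would use the standard MPNN update $h^{(l)}_v = \phi^{(l)}\big(h^{(l-1)}_v,\ \sum_{u\in N(v)} h^{(l-1)}_u\big)$, which keeps the self-representation intact at every layer. An initial (preprocessing) layer applies a fixed linear map to $\mathbf{p}^{\text{CSE}}_v$ to recover the exact values $a^{(m)}_v := (A^m)_{vv}$ for $m = 0,\dots,k$ (undoing the $1/m!$ discounting) and to initialize a counting channel $c^{(0)}_v := 1$. Each of the following $k$ layers, indexed $l = 1,\dots,k$, sets $c^{(l)}_v = \sum_{u\in N(v)} c^{(l-1)}_u$, so that after layer $l$ one has exactly $c^{(l)}_v = (A^l\mathbf{1})_v = \sum_j (A^l)_{vj}$, while simultaneously $\phi^{(l)}$ writes the ratio $r^{(l)}_v := a^{(l)}_v / c^{(l)}_v$ into a dedicated output coordinate and carries forward (by identity sub-maps) the still-unused $a^{(m)}_v$ and the ratios already produced. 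After these $k$ layers --- i.e. with $1+k = k+1 = L$ layers in total, counting the preprocessing one --- the accumulated coordinates $(r^{(0)}_v,\dots,r^{(k)}_v)$ are exactly $\mathbf{c}_v$ (note $r^{(0)}_v = 1/1 = 1$, matching the leading entry, and $a^{(m)}_v \le c^{(m)}_v$ since $(A^m)_{vv}$ is one nonnegative summand of $\sum_j (A^m)_{vj}$).

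The only non-exact step is the division, handled by universal approximation: for each $m$ the map $(a,c)\mapsto a/c$ is continuous on the compact set $\{(a,c): 0\le a\le c\le d_{\mathsf{max}}^{m},\ c\ge 1\}$, and here connectedness enters --- on a connected graph with $n\ge 2$ nodes every vertex has a neighbour, so the alternating back-and-forth walk shows $\sum_j (A^m)_{ij}\ge 1$ for all $m\ge 1$; hence the denominator is bounded away from $0$ and the domain is genuinely compact. Therefore each $\phi^{(l)}$ can approximate its division to any desired accuracy, and since the numerators and denominators are produced exactly and the $k+1$ approximated ratios are merely carried to the output (never fed back into later aggregations), the errors do not compound and the network matches $\mathbf{c}_v$ to arbitrary precision. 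The one point requiring care --- the only genuine obstacle --- is precisely ensuring this final approximation rests on a compact domain with denominators uniformly bounded below, which is exactly what the connectedness hypothesis, together with the use of sum aggregation to count walks from the constant CSE channel, provides.
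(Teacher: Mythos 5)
Your proof is correct and follows essentially the same route as the paper's: undo the factorial discounting to recover the raw diagonal entries $(A^m)_{ii}$ from the CSE input, push a constant channel through $k$ rounds of sum aggregation to obtain the denominators $\sum_{j}(A^m)_{ij}$, and realize the divisions via universal approximation on a compact domain where connectedness keeps the denominators bounded away from zero. The only (immaterial) difference is that you perform each division at the layer where its denominator first becomes available and carry the ratio forward by identity, whereas the paper freezes the $k$ denominator channels one at a time and defers all divisions to a single MLP at layer $k+1$.
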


\begin{proof}
We define a Message Passing Neural Network (MPNN) as a composition of layers of the form:
\begin{align}
    \tilde{X}^{(l)} &= AX^{(l-1)}\mathbf{W}^{(l)_1} + X^{(l-1)}\mathbf{W}^{(l)_0}, \label{eq:mpnn-layer}\\
    X^{(l)} &= f^{(l)}(\tilde{X}^{(l)}) \label{eq:mlp-layer}
\end{align}
where \( \mathbf{W}^{(l)_1} \) and \( \mathbf{W}^{(l)_0} \) are learned weight matrices, and \( f^{(l)} \) is an MLP for the \( l \)-th layer. We refer to the layers in \cref{eq:mpnn-layer,eq:mlp-layer} as the MPNN layer and MLP layer, respectively, where the MLP layer is assumed to be a single hidden layer with interleaved ReLU activations.

We recall that the initial node feature matrix, \( X^{(0)} \in \mathbb{R}^{n \times (k+1)} \), is given as follows:
\[
X^{(0)} = \begin{pmatrix}
- & \mathbf{p}_1^T & - \\
- & \mathbf{p}_2^T & - \\
 & \vdots &  \\
- & \mathbf{p}_n^T & -
\end{pmatrix}
\]
where,
\[
\mathbf{p}_i = \left[1, \frac{(A)^1_{ii}}{1!}, \frac{(A)^2_{ii}}{2!}, \frac{(A)^3_{ii}}{3!}, \dotsc, \frac{(A)^k_{ii}}{k!} \right] \in \mathbb{R}^{k+1}.
\]

\textbf{Step 1: Recovering the unnormalized CSE.} To uncover the unnormalized CSE, we set \( \mathbf{W}^{(1)_1} = 0 \) and \( \mathbf{W}^{(1)_0} \in \mathbb{R}^{(k+1) \times ((k+1) + k)} \) as follows:
\[
\mathbf{W}^{(1)_0} = \begin{pmatrix}
1 & 0 & 0 & \cdots & 0 & 0 & \cdots & 0 \\
0 & 1! & 0 & \cdots & 0 & 0 & \cdots & 0 \\
0 & 0 & 2! & \cdots & 0 & 0 & \cdots & 0 \\
\vdots & \vdots & \vdots & \ddots & \vdots & \vdots & \vdots \\
0 & 0 & 0 & \cdots & k! & 0 & \cdots & 0 \\
\end{pmatrix}
\]
Thus, the MPNN layer gives:
\[
\tilde{X}^{(1)} \triangleq \text{MPNN}^{(1)}(A, X) = \begin{pmatrix}
- & \mathbf{p}_1^{\text{unormalized}^T} & - \\
- & \mathbf{p}_2^{\text{unormalized}^T} & - \\
 & \vdots &  \\
- & \mathbf{p}_n^{\text{unormalized}^T} & -
\end{pmatrix}
\]
where,
\[
\mathbf{p}_i^{\text{unormalized}^T} = \left[1, (A)^1_{ii}, (A)^2_{ii}, (A)^3_{ii}, \dotsc, (A)^k_{ii}, 0, \ldots, 0 \right] \in \mathbb{R}^{(k+1) + k}.
\]
For the MLP \( f^0 \), we use an identity weight matrix and a bias vector, which is all zeros except for the last \( k \) values, which are ones:
\[
\mathbf{b}^{(0)} = \left[0, 0, \ldots, 0, 1, 1, \ldots, 1 \right] \in \mathbb{R}^{(k+1) + k}.
\]
Thus, we obtain \( X^{(1)} \), such that:
\[
X^{(1)}_i = \left[1, (A)^1_{ii}, (A)^2_{ii}, (A)^3_{ii}, \dotsc, (A)^k_{ii}, 1, \ldots, 1 \right] \in \mathbb{R}^{(k+1) + k}.
\]

\textbf{Step 2: Compute the matrix of closed-walk probabilities.} We compute the matrix of closed-walk probabilities in sequential steps.
For each \( j \)-th step, we use the following weight matrices:
\[
\mathbf{W}^{(j+1)_0} = \begin{pmatrix}
I_{k+1} & \vline & 0 & \vline & 0 \\
\hline
0 & \vline & I_{j-1} & \vline & 0 \\
\hline
0 & \vline & 0 & \vline & 0_{k-(j-1)}
\end{pmatrix}
\]
\[
\mathbf{W}^{(j+1)_1} = \begin{pmatrix}
0_{k+1} & \vline & 0 & \vline & 0 \\
\hline
0 & \vline & 0_{j-1} & \vline & 0 \\
\hline
0 & \vline & 0 & \vline & I_{k-(j-1)}
\end{pmatrix}
\]
Where $I_n$ is an $n \times n$ identity matrix, and $0_n$ is an $n \times n$ zero matrix (if $n=0$ the corresponding block is omitted).
This means:
\[
X \mathbf{W}^{(j+1)_0}
\]
keeps only the first \( k+j \) columns of \( X \), and,
\[
A X \mathbf{W}^{(j+1)_1}
\]
multiplies by \( A \) only the last \( k-(j-1) \) columns of \( X \).

By doing this iteratively for \( j=1 \) to \( j=k \), and setting the interleaved MLPs to identity weight matrices, we obtain the node matrix \( \tilde{X}^{(k+1)} \), such that:
\[
\tilde{X}^{(k+1)}_i = \left[1, (A)^1_{ii}, (A)^2_{ii}, (A)^3_{ii}, \dotsc, (A)^k_{ii}, \sum_{j=1}^n (A)^1_{ij}, \sum_{j=1}^n (A)^2_{ij}, \dotsc, \sum_{j=1}^n (A)^k_{ij} \right] \in \mathbb{R}^{(k+1) + k}.
\]

Let \( F: \mathbb{R}^{2k+1} \rightarrow \mathbb{R}^{k+1} \) be the following continuous function:
\[
F(\mathbf{x})_i = 
\begin{cases}
    1, \text{ if } i = 0, \\
    \frac{\mathbf{x}_i}{\mathbf{x}_{k+i}}, \text{ if } i \neq 0.
\end{cases}
\]
Since the graph is connected, the denominator is always non-zero, making the function continuous. Additionally, since we are considering a finite graph with $n$ nodes, the input to the function $F$ lies within a compact set.

Using the universal approximation theorem \cite{hornik1991approximation,cybenko1989approximation}, \( F \) can be approximated to an arbitrary precision using an MLP. Thus, we use the MLP of layer \( k+1 \) to realize \( F \), obtaining the following node matrix,
\[
X^{(k+1)}_i = \left[1, \frac{(A)^1_{ii}}{\sum_{j=1}^n (A)^1_{ij}}, \frac{(A)^2_{ii}}{\sum_{j=1}^n (A)^2_{ij}}, \dotsc, \frac{(A)^k_{ii}}{\sum_{j=1}^n (A)^k_{ij}} \right] \in \mathbb{R}^{k+1}.
\]
This completes the proof.
\end{proof}

\begin{theorem}[\Cref{prop:subsampled_bag_subgraph_better} in~\Cref{sec:combining}]
    There exists a pair of graphs $G$ and $G'$ such that for any CSE-enhanced MPNN model $M_\text{CSE}$ we have $M_\text{CSE}(G) = M_\text{CSE}(G')$, but there exists a DS-Subgraph GNN model (without CSEs) $M_\text{sub.}$ which uses a top-$k$ Subgraph Centrality policy such that $M_\text{sub.}(G) \neq M_\text{sub.}(G')$.
\end{theorem}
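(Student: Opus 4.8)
The plan is to reduce the statement to a pure graph-separation problem and then exhibit a suitable pair; I will treat the strongest case, a top-$1$ policy (which implies the stated top-$k$ version, since handing a DS-Subgraph GNN more marked subgraphs cannot decrease its discriminative power). First, recall that a CSE-enhanced MPNN is just an MPNN run on the feature-augmented graph $(A, X \oplus C^{\mathsf{CSE}})$, so its discriminative power is bounded by $1$-WL on that augmented graph \citep{xu2019powerful,morris2019weisfeiler}. Hence, to obtain $M_\text{CSE}(G)=M_\text{CSE}(G')$ for \emph{every} $M_\text{CSE}$, it suffices to make the two CSE-augmented graphs $1$-WL-equivalent. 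On the other side, a DS-Subgraph GNN with a top-$1$ Subgraph Centrality policy is, by construction, an MPNN applied to the \emph{single} marked graph $(A, X \oplus e_{v^\star})$ (with $v^\star$ the selected node) followed by a pooling/readout; so it suffices to find a pair for which $1$-WL \emph{separates} the once-marked graphs $(G,v^\star)$ and $(G',v'^\star)$, and then instantiate the backbone as a GIN deep and wide enough to realise the separating colouring with an injective readout (standard, cf.\ \citep{xu2019powerful}). The whole theorem therefore reduces to constructing the pair $(G,G')$.

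Second, I would search for $G \not\cong G'$ that are $1$-WL-equivalent and such that (i) attaching CSEs does not refine the colouring and the per-class CSE values coincide across $G$ and $G'$ — so the CSE-augmented graphs stay $1$-WL-equivalent — while (ii) individualising the top-centrality vertex does let $1$-WL tell the two graphs apart. A convenient way to force (i) is to take $G,G'$ walk-regular and cospectral: then every node in both graphs carries the \emph{same} CSE vector (the one read off the normalised spectrum), so the augmentation is a constant relabelling and cannot help any MPNN; moreover all Subgraph Centrality scores tie, so the top-$1$ policy just marks an (arbitrary, tie-broken) vertex. For (ii) the pair must then differ in a feature that node-individualisation exposes to $1$-WL — e.g.\ the induced subgraph on a vertex's neighbourhood, or a short closed walk through the marked vertex — while keeping global walk statistics matched. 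Given such a pair, the two reductions above immediately yield a CSE-blind separation achieved by the subsampled Subgraph GNN.

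The hard part is pinning down this pair, because (i) pulls toward highly symmetric graphs whereas (ii) pushes away from them. In particular, the tempting candidates — two strongly regular graphs with identical parameters, e.g.\ the Shrikhande graph and the $4\times4$ rook's graph — do \emph{not} work: individualising any single vertex and refining collapses to the same three colour classes in both graphs, since the counts governing the refinement are exactly the SRG parameters $\lambda,\mu$, so no node-marking DS-Subgraph GNN separates them. One is thus forced to a walk-regular but not distance-regular pair, or — relaxing ``identical CSE values'' to ``CSEs constant on, and matched across, the $1$-WL colour classes'' — to a non-regular pair whose $1$-WL partition already coincides with the orbit partition, so that the CSEs are $1$-WL-determined. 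Verifying, for the chosen pair, the three facts — $1$-WL-equivalence, equality of all CSE vectors, and $1$-WL-separation after a single top-centrality marking — is the technical heart of the argument; the expressiveness reductions to $1$-WL and the GIN/readout instantiation are routine.
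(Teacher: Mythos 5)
Your reduction is exactly the one the paper uses: bound the CSE-enhanced MPNN by $1$-WL on the feature-augmented graph \citep{xu2019powerful,morris2019weisfeiler}, note that on a vertex-transitive cospectral pair every node in both graphs carries the same CSE vector (each diagonal entry $(A^k)_{ii}$ equals $\mathrm{trace}(A^k)/n$ by vertex-transitivity, and the traces agree by cospectrality), and then separate the pair by running $1$-WL after individualising a single vertex. Your side observation that strongly-regular candidates such as the Shrikhande graph versus the $4\times 4$ rook's graph \emph{fail} at the individualisation step is correct (the refinement after marking one vertex stabilises into three classes whose counts are determined by the SRG parameters $\lambda,\mu$ alone) and is precisely why one must look outside the distance-regular world.

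The genuine gap is that the theorem is an existence claim and you never produce the witness: every step of your argument is conditional on ``the chosen pair,'' and you explicitly defer the verification of the three required properties, which is where the actual content lies. The paper closes this by taking $G,G'$ to be the quartic vertex-transitive graphs Qt15 and Qt19 on $12$ vertices in the notation of \citet{read1998atlas}: they are $4$-regular and of equal size (hence $1$-WL-equivalent), vertex-transitive, and cospectral \citep{brouwer2009cospectral}, so all CSE entries coincide across both graphs exactly as your walk-regularity argument requires; yet the degree histogram within the $1$-hop neighbourhood of a (marked) vertex differs between the two, so an MPNN on the once-marked graph separates them. One further sentence is owed for your top-$k$-from-top-$1$ reduction: in general, enlarging both bags need not preserve separation under a DS readout, but for vertex-transitive $G$ and $G'$ the top-$k$ bag consists of $k$ pairwise-isomorphic marked copies, so the multiset readout \citep{bevilacqua2021equivariant} separates whenever the single marked copy does.
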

\begin{proof}
Using the notation of \cite{read1998atlas}, let $G, G'$ be the quartic vertex transitive graphs Qt15 and Qt19 respectively (Here vetrex transitive means that for each pair of nodes there exists a graph automorphism that maps one node to the other, and quartic refers to 4-regular). As these graphs are 4-regular and of the same size, they are 1-WL indistinguishable. In addition, as they are vertex transitive, for each pair of indices $i,j$ we have:
\begin{equation}
    A^k_{i,i} = A^k_{j,j} = \frac{\text{trace}(A^k)}{12}.
\end{equation}
\begin{equation}
    A'^k_{i,i} = A'^k_{j,j} = \frac{\text{trace}(A'^k)}{12}.
\end{equation}

Here the last equalities hold because both graphs have 12 vertices. Thus, to show that $G$ and $G'$ are indistinguishable by MPNN + centrality encoding it is enough to show that $\text{trace}(A^k) = \text{trace}(A'^k)$. As $G$ and $G'$ were shown in \cite{brouwer2009cospectral} to be co-spectral (i.e. their laplacian has the same eigenvalues) and 4-regular, matrices $A$ and $A'$ have the same eigenvalues. Thus we have:

\begin{equation}
    \text{trace}(A)^k = \sum_{i =1}^12 \lambda^k_i =  \text{trace}( A')^k. 
\end{equation}
Here $\lambda_i $ is the $i$-th eigenvalue of both  $A$ and $A'$. Thus the central encoding of all nodes in either graph is equal, and they are indistinguishable by any any MPNN + CE model. On the other hand, we observe that the degree histogram in the 1-hop neighborhood of any node differs between the two graphs, Qt15 and Qt19. Since an MPNN over a graph with a marked node can compute the degree distribution of the node's 1-hop neighborhood, \( M_\text{subgraph} \) can distinguish between the two graphs. This concludes the proof.

\end{proof}

\begin{theorem}[\Cref{prop:subsampled_bag_subgraph_worse} in~\Cref{sec:combining}]
    There exists a pair of graphs $G$ and $G'$ such that for \emph{any} Subgraph GNN model $M_\text{sub.}$ which uses a top-$k$ Subgraph Centrality policy we have $M_\text{sub.}(G) = M_\text{sub.}(G')$, but there exists an MPNN + centrality encoding model $M_\text{CSE}$ such that $M_\text{CSE}(G) \neq M_\text{CSE}(G')$.
\end{theorem}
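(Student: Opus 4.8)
The plan is to build a pair complementary to the one behind \Cref{prop:subsampled_bag_subgraph_better}: two non-isomorphic graphs that differ in their closed-walk profile (so CSEs separate them), yet whose highest-centrality nodes all sit in a region where marking is useless to a message-passing-backbone Subgraph GNN. Concretely, I would take a disjoint union $G = C_6 \sqcup H$ versus $G' = (C_3 \sqcup C_3) \sqcup H$, where $C_6$ is the $6$-cycle, $C_3 \sqcup C_3$ is the disjoint union of two triangles, and $H$ is a fixed ``decoy'' clique $K_m$ with $m$ large enough that $m \ge k$ and every node of $K_m$ has strictly larger Subgraph Centrality than any node of the $2$-regular part (both easy to meet, e.g.\ $m = k+4$; here one uses that $(e^{\beta A})_{vv}$ depends only on $v$'s connected component, and that a $K_m$-node has centrality $\approx e^{\beta(m-1)}/m$, dominating the bounded centralities of $2$-regular nodes). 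These graphs are non-isomorphic -- indeed even $1$-WL-equivalent -- which is precisely the interesting regime, and a \emph{full-bag} Subgraph GNN would separate them (cf.\ \Cref{thm:full_bag}); the point is that subsampling by centrality does not.

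For the CSE side, fix the CSE order to $K \ge 3$. The coordinate $(A^3)_{vv}/3!$ is the discounted number of triangles through $v$: it is $0$ for every node of $C_6$ and nonzero for every node of a triangle, while the $H$-part contributes identically in both graphs. Hence the multiset of CSE vectors over $V(G)$ differs from that over $V(G')$, and the MPNN that simply forwards the CSE channels and sum-pools already yields $M_\text{CSE}(G) \neq M_\text{CSE}(G')$.

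For the Subgraph GNN side -- the crux -- I would proceed in three steps. First, centrality domination of $H$ forces any top-$k$ Subgraph Centrality policy (with $k \le m$; ties inside the vertex-transitive $K_m$ broken arbitrarily) to mark only nodes of $H$, so every sampled subgraph is $(H,\text{one marked node}) \sqcup C_6$ for $G$ and $(H,\text{one marked node}) \sqcup (C_3 \sqcup C_3)$ for $G'$, and by vertex-transitivity of $K_m$ the two bags coincide except for the swapped $2$-regular component. Second, in each such subgraph that $2$-regular component is a connected component disjoint from every mark, so a message-passing-backbone Subgraph GNN degenerates to a plain MPNN on an \emph{unattributed} $2$-regular graph along it: the DSS/SUN cross-subgraph aggregation is vacuous there (a node of that component has the same representation in every subgraph), and SUN's root indicator is always off. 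Third, an MPNN cannot separate $C_6$ from $C_3 \sqcup C_3$ -- their nodes induce the same computation tree -- so the node representations along that component agree across $G$ and $G'$; together with the first step this makes the whole multiset of per-subgraph, per-node representations agree, and any node-and-subgraph pooling readout gives $M_\text{sub.}(G) = M_\text{sub.}(G')$.

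The main obstacle is making the second and third steps watertight for the whole Subgraph GNN family rather than just for DS-Subgraph GNNs: one has to argue that none of the sharing mechanisms (bag-level aggregation, root marking, final readout) can exploit the $C_6$ versus $C_3\sqcup C_3$ difference, which boils down to the fact that they all act trivially on a mark-free disconnected component. It is also worth flagging why the statement must be read for Subgraph GNNs with a $1$-WL (message-passing) backbone: if maximally expressive, $3$-WL-bounded backbones were allowed, then non-cospectrality -- which CSE-separability forces -- would itself be detectable (closed-walk counts are homomorphism counts from cycles, of treewidth $2$), so even a top-$1$ policy would separate $G$ from $G'$ and no such dichotomy could hold. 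The remaining point, tie-breaking in the top-$k$ selection, is routine and handled by the vertex-transitivity of $K_m$; if one wanted a connected counterexample, $C_6$ / $C_3\sqcup C_3$ and $H$ could be joined through a long insulating path, but since the backbone may have arbitrarily many layers this needs extra care, so the disconnected version is the cleaner route.
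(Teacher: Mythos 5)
Your proof is correct and rests on the same skeleton as the paper's: both arguments pick a $1$-WL-equivalent pair that differs in length-$3$ closed walks ($C_6$ versus two disjoint triangles), so that the order-$3$ CSE coordinate separates the graphs, and both then plant a centrality-dominating ``decoy'' that soaks up every mark and renders the sampled bag useless to a message-passing backbone. The differences are in the gadgetry. The paper attaches a single global node to all cycle vertices, proves by an inductive lemma on powers of $A$ that this node maximizes Subgraph Centrality, and argues that marking it is vacuous because its unique degree already identifies it to $1$-WL; the top-$k$ case is then handled by taking $k$ disjoint copies of the whole construction. You instead add a disjoint clique $K_{k+4}$, obtain centrality domination from a per-component spectral computation, and handle top-$k$ directly by sizing the clique to absorb all $k$ marks; marking is vacuous because the cycles sit in a mark-free connected component. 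Your route trades the paper's inductive lemma for the obligation --- which you correctly flag as the crux --- of checking that no bag-level sharing mechanism can exploit the mark-free component. That check does go through: the unmarked $2$-regular nodes receive identical $1$-WL colours in both graphs at every round, so per-node cross-subgraph aggregates, within-subgraph global pools, and the final readout all coincide between $G$ and $G'$; and your tie-breaking concern is resolved by the $k$-subset transitivity of $\mathrm{Aut}(K_m)$. Your observation that the statement implicitly requires a $1$-WL-bounded backbone (else closed-walk counts are recoverable by the backbone itself) applies equally to the paper's proof, which likewise reduces the marked bag to ``standard message passing.''
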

\begin{proof}
    We begin by examining the scenario where $ k = 1 $, meaning that our policy randomly selects subgraphs corresponding to the node with the highest centrality measure. 
    Consider the graph  $G$, which is formed by attaching a global node to every vertex of a cyclic graph of length  $6$ . Next, define  $G'$  as the graph obtained by attaching a global node  to each vertex of two disconnected cyclic graphs, each of length  $3$ (The global node is also attached to itself through a self loop). These graphs are displayed in \cref{fig:vn_graphs}. It can be easily seen that $G$ and $G'$ are WL indistinguishable (e.g. by induction). We first prove that both in $G$ and $G'$
    the global node has the highest centrality measure. This implies that for both graphs, the resulting bag of subgraphs is of size one and is thus equivalent to standard message passing on the graphs (here we can ignore marking as the global nodes have a unique degree and so they can be uniquely identified by standard message passing). This implies that the two graphs are indistinguishable by any Subgraph GNN model $M_\text{subgraph}$ which uses a top-$1$ node centrality policy. We then show that the multiset of values of the  centrality encoding of each graph is different, showing that it can be distinguished by an MPNN + centrality encoding model. To show that in both graphs the global node has the higher centrality, we first prove the following lemma:
    \begin{lemma}
        Let $A$ denote the adjacency graph of one of the above graphs, $v$ denote the global node and $u_1, u_2$ denote a pair of nodes of the graph such that $u_1 \neq v$. For each $k \in \mathbb{N}$ we have:
        \begin{equation}
        \begin{aligned}
            A^k_{v, u_2} &\geq  A^k_{u_1, u_2}\\
            A^k_{v, u_2} &>0 \\
            A^k_{v, v} &> A^k_{u_1, u_1}.
        \end{aligned}      
        \end{equation}
    \end{lemma}
    \begin{proof}[Proof of lemma]
        We use induction on $k$. As $v$ is connected to all nodes including itself , for $k=1$, $A_{v, u_1}=1$. Since, disregarding the global nodes, $G, G'$ are simple graphs , we have $A_{u_1, u_1}=0, A_{u_2, u_1}\leq 1$, thus the base case holds. Assuming the induction hypothesis holds for some $k$, we first notice that
        \begin{equation}
            A^{k+1}_{u_2,v} = A^{k}_{u_2,:} \cdot  A_{:,v} \geq A^{k}_{u_2,:} \cdot  A_{:,u_1} = A^{k+1}_{u_2,u_1}.
        \end{equation}
        Here, $A^{k}_{u,:}, A^{k}_{:,u}$ represents the column/row vectors induced by node $u$ respectively and $\cdot$ denotes inner product. The inequality above follows from our induction hypothesis and the fact that all entries of the matrix $A^k$ are non-negative. Next, we notice that
        \begin{equation}
            A^{k+1}_{u_2,v} = A_{u_2,:} \cdot  A^k_{:,v} \geq A_{u_2,v} \cdot A^k_{v,v}>0.
        \end{equation}

        In addition, we notice that $A^k_{v,u_1} \cdot A_{u_1, v} > 0 = A^k_{u_1,u_1} \cdot A_{u_1,u_1}$, where the last equallity holds because $ A_{u_1,u_1} = 0$. Thus, we get:
     \begin{equation}
                A^{k+1}_{v,v} = A^k_{v,u_1} \cdot A_{u_1, v} + \sum_{u\neq u_1} A^k_{v,u} \cdot A_{u, v} >  A^k_{u_1,u_1} \cdot A_{u_1, u_1} + \sum_{u\neq u_1} A^k_{u_1,u} \cdot A_{u,u_1}= A^{k+1}_{u_1, u_1}.
            \end{equation}
        This  completes the induction step.
    \end{proof}
    As explained before, the last lemma shows that top-1 centrality node marking policy always produces a bag with a single graph where the global node is marked. As the global node can be uniquely distinguished by its degree, this shows that a Subgraph GNN with this policy is equivalent to standard message passing and is thus unable to distinguish $G$ and $G'$. Finally, computing the centrality encoding of order 3 we see the multiset of features of the two graphs are different and so message passing + CE is able to seperate $G$ and $G'$.
    
    We now address the general case of a top-$k$ centrality node-marking policy. Let $G_k, G'_k$ denote the graphs consisting of $k$ disjoint copies of $G$ and $G'$, respectively. In each disjoint copy, the global node is replicated independently and maintains a higher centrality than all other nodes within that copy. Thus, in both graphs, the k nodes with the highest centrality are the $k$ copies of the global node. 
    
    The bag of graphs generated from $G_k$ and $G'_k$ using the top-k centrality node-marking policy are then composed of $k$ copies of $G_k$ and $G'_k$ respectively, where a single copy of the global node is marked. Notice that in each one of these bags, all graphs are isomorphic to each other, thus it is enough to show that $G_k$ with a single marked global node copy is 1-WL indistinguishable from $G'_k$ with a single marked global node copy. To see this holds, notice that as we have seen above, the connected component of $G_k$ containing the marked node is 1-WL indistinguishable from the copy of the connected component of $G'_k$ containing the marked node, and the $k-1$ unmarked connected components of $G_k$ are 1-WL indistinguishable from the unmarked connected components of $G'_k$. Thus  any subgraph GNN which uses a top-k centrality node marking policy is unable to distinguish $G_k$ and $G'_k$. Finally, the centrality encoding values of each node $u_\text{copy}$ in $G_k$ is equal to the centrality encoding value of the node $u$ in $G$ which corresponds to $u_\text{copy}$. As we have seen before the set of centrality encoding values of $G$ and $G'$ are different, the set of   centrality encoding values of $G_k$ and $G'_k$ are also different, and so message passing + CE is able to separate $G_k$ and $G'_k$.
    
\end{proof}

\begin{figure}[tbp]
\centering
  \includegraphics[width=0.8\textwidth]
  {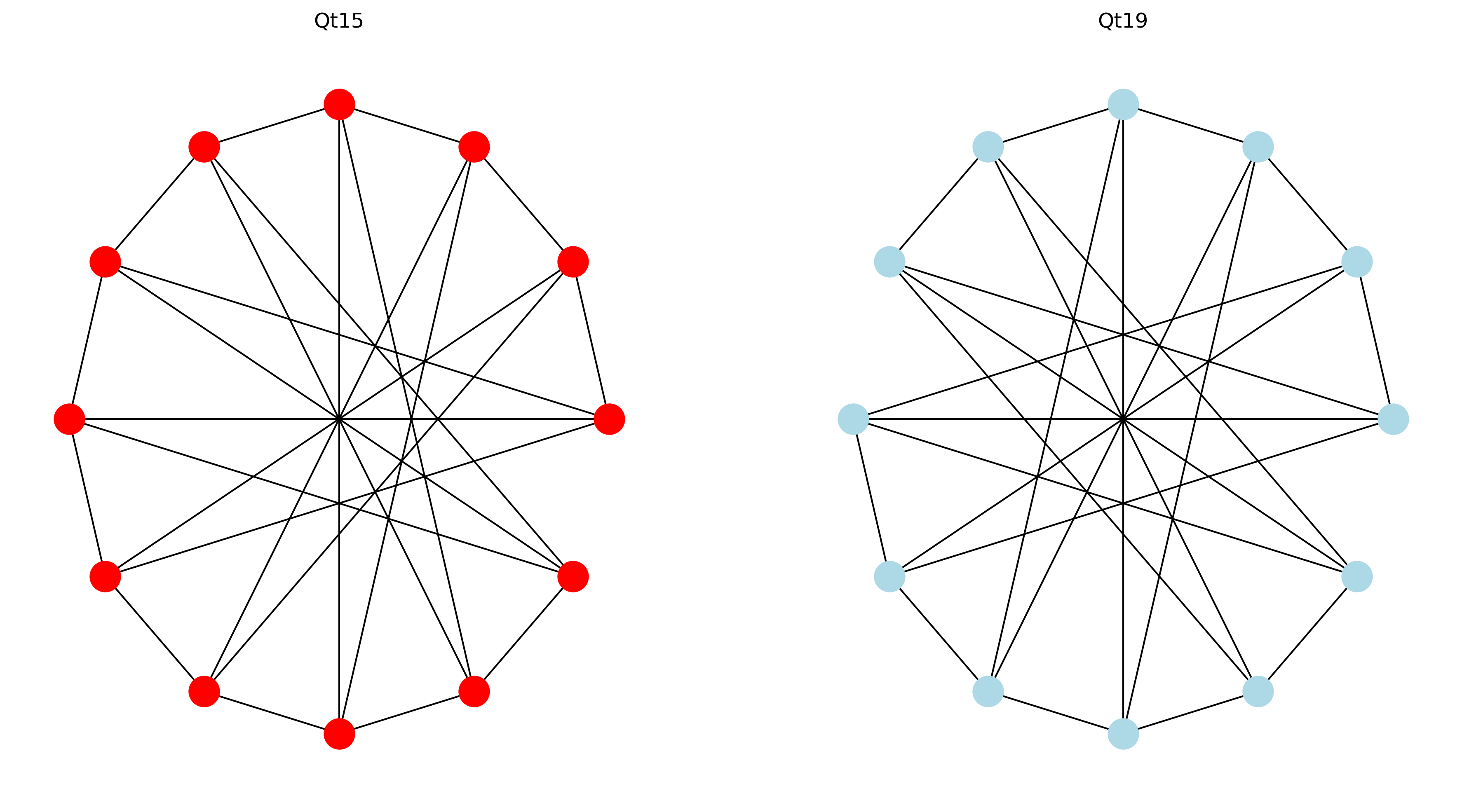}
  \caption{
    Two quartic vertex transitive graphs which cannot be distinguished with MPNN + CSE but can be distinguished with a Subgraph GNN with a top-1 Subgraph Centrality policy. 
  }
  \label{fig:transitive_graphs}
\end{figure}  

\begin{figure}[tbp]
\centering
  \includegraphics[width=0.6 \textwidth]
  {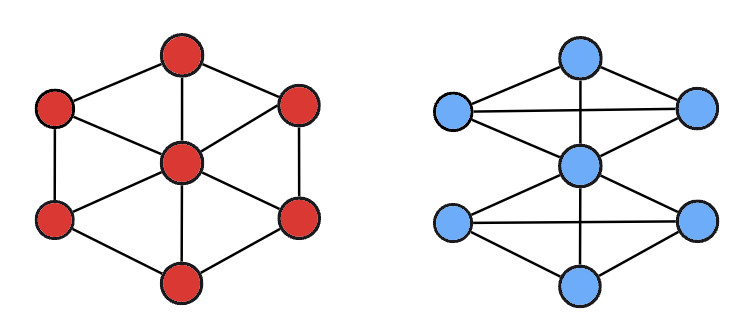}
  \caption{
    Two graphs which cannot be distinguished by a Subgraph GNN with a top-1 Subgraph Centrality policy without CSE but can be distinguished by an MPNN + CSE. One graph is a hexagon with a global node connected to all other nodes, and another graph which depicts two triangles connected to a global node.
  }
  \label{fig:vn_graphs}
\end{figure}

\begin{proposition}\label{thm:full_bag}
Let $G = (A, X)$ be a finite graph, and let $\mathcal{B}_G$ be the bag generated from original graph $G$.
Let DSS-GNN be the subgraph-based GNN that processes the bag $\mathcal{B}_G$. There exists a set of weights for DSS-GNN, such that $ \text{DSS-GNN}(\mathcal{B}_G) = \text{MPNN}(G)$ for any MPNN processing $G$ with centrality-based structural encodings as initial node features.
\end{proposition}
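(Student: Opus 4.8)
The plan is to fix an arbitrary MPNN $M$ that runs on $G$ with its nodes augmented by order-$K$ CSEs (so $M$ consumes, at node $j$, the vector $X_j \oplus C^{\mathsf{CSE}}_j$ with $C^{\mathsf{CSE}}_j = [1, (A)_{jj}/1!, \dots, (A^K)_{jj}/K!]$ as in \Cref{eq:centrality}), and to exhibit weights for a DSS-GNN processing the node-marking bag $\mathcal{B}_G = \{S_1, \dots, S_n\}$ that reproduce $M(G)$. Recall that a DSS-GNN layer updates the feature $h_{i,j}$ of node $j$ in subgraph $S_i$ by combining a \emph{within-subgraph} message-passing step over the neighbours of $j$ in $S_i$ with a \emph{cross-subgraph} term built from the aggregate $\bar h_j := \sum_{i'} h_{i',j}$; since node marking leaves the topology untouched, every subgraph $S_i$ carries the same graph $G$, differing only in which node is marked. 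We use the DSS-GNN in three stages: (i) compute every node's CSE and place it at every copy of that node; (ii) simulate $M$; (iii) pool.

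\textbf{Stage (i).} The key point is that the marked node of $S_i$ lets within-subgraph message passing recover its closed-walk counts: seeding a channel with the mark indicator $m^{(i)}_j = \mathds{1}[j=i]$ and iterating the pure aggregation $h_j \mapsto \sum_{u \sim j} h_u$ yields $(A^\ell)_{ji}$ after $\ell$ steps, hence $(A^\ell)_{ii}$ at node $i$. Concretely, using block weight matrices in the style of the proof of \Cref{thm:cpe_rw} (a ``shift'' message-passing layer that writes the $\ell$-hop propagation of the mark into channel $\ell$, together with an identity block that freezes channel $0 \equiv m^{(i)}$ and the original features $X_j$), after $K$ within-subgraph layers node $j$ of $S_i$ stores $(A^\ell)_{ji}$ in channel $\ell$, $\ell = 0, \dots, K$. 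One further within-subgraph layer applies an MLP computing the gated product $h_j[\ell] \cdot m^{(i)}_j$ --- realised exactly by a single ReLU unit $\mathrm{ReLU}(h_j[\ell] - R(1 - m^{(i)}_j))$ with $R$ larger than the (finite) maximal walk count, or via universal approximation on the compact feature range as in \Cref{thm:cpe_rw} --- followed by a diagonal $1/\ell!$ rescaling, so that node $j$ of $S_i$ now carries $C^{\mathsf{CSE}}_i$ if $j=i$ and $0$ otherwise (alongside the preserved $X_j$). A final layer uses only the cross-subgraph term on the CSE channels: $\bar h_j = \sum_{i'} \big( C^{\mathsf{CSE}}_{i'}\, \mathds{1}[j=i'] \big) = C^{\mathsf{CSE}}_j$, while its within-subgraph self-term re-supplies $X_j$; hence after it every copy $(i,j)$ carries exactly $X_j \oplus C^{\mathsf{CSE}}_j$, independently of $i$.

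\textbf{Stages (ii)--(iii).} From here the state is the same across all $n$ subgraphs and equals exactly what $M$ receives on $G$. Zeroing the cross-subgraph weights, the next layers act as $M$'s message-passing layers within each $S_i$ separately, so each subgraph ends with the node representations $M$ produces on $G$; summing node features within each subgraph and then over the $n$ (identical) subgraphs, and taking the DSS-GNN prediction head to be $M$'s readout (precomposed with the fixed $1/n$ scaling if the subgraph pool is a sum rather than a mean), yields $\text{DSS-GNN}(\mathcal{B}_G) = M(G)$. As $M$ was arbitrary, this proves the claim.

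\textbf{Main obstacle.} The crux is Stage (i): observing that node marking makes each subgraph compute the closed-walk diagonal entry of \emph{its own} marked node, and then exploiting the cross-subgraph aggregation --- precisely the ingredient that separates DSS-GNN from a per-subgraph Subgraph GNN --- to assemble these scattered diagonal values into the full CSE vector at every node, where the mask-by-mark step is essential to annihilate the off-diagonal entries $(A^\ell)_{ji}$ before aggregating. Everything afterwards (the MPNN emulation and the pooling over identical copies) is routine block-matrix bookkeeping of the kind already carried out in \Cref{thm:cpe_rw}.
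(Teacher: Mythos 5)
Your proposal is correct and follows essentially the same route as the paper's proof: propagate the mark within each subgraph to accumulate $(A^\ell)_{ji}$, gate by the mark to retain only the root's closed-walk counts (the paper uses a universal-approximation argument where you use an explicit ReLU threshold), broadcast these to all copies via the cross-subgraph aggregation so every node receives its own CSE, and then simulate the given MPNN on the now-identical subgraphs before pooling. The only differences are minor bookkeeping refinements (explicitly carrying the original features $X_j$ and handling sum-versus-mean subgraph pooling), which the paper relegates to footnotes or assumptions.
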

\begin{proof}
Let \( \{ X^i, A^i \}_{i=1}^{n} \) represent the node feature matrix \( X^i \) and adjacency matrix \( A^i \), respectively, for the \( i \)-th subgraph in the collection of subgraphs. We adopt the binary node-marking technique from \citep{bevilacqua2021equivariant}, where \( X^i \) includes a one-hot encoded vector for the root node identification, with a 1 in the \( i \)-th position,\footnote{Although this proof does not consider additional node features, it can be easily adapted to incorporate them.}
\begin{equation}
    X^i = \begin{pmatrix}
0 \\
\vdots \\
0 \\
1 \\
0 \\
\vdots \\
0
\end{pmatrix}
\quad \text{with 1 at the $i$-th position,}
\end{equation}
we note that in this case, the adjacencies of all subgraphs, denoted as \( \{ A^i \}_{i=1}^n \), are identical to the adjacency of the original graph, $A$.

We recall that the DSS-GNN architecture applies an MPNN over each subgraph independently, followed by an MLP. Subsequently, an MPNN followed by an MLP operates on a shared component, enabling information sharing across subgraphs.

To be more explicit, the architecture is defined as follows:

\begin{align}
    & \tilde{X}^{l;i} = \text{MPNN}^{l;1}(A^i, X^{l-1;i}), \\
    & Y^{l-1;i} = f^{l;1}(\tilde{X}^{l-1;i}), \\
    & \tilde{X}^{l-1} = \text{MPNN}^{l;2} \left( \sum_{j=1}^{n} A^j , \sum_{j=1}^{n} X^{l-1;j} \right), \\
    & Y^{l-1} = f^{l;2}(\tilde{X}^{l-1}), \\ 
    & X^{l+1;i} = Y^{l-1;i} + Y^{l-1},
\end{align}

where the MPNN is defined as:

\begin{equation}
    \text{MPNN}(X) = A X \mathbf{W}^{1} + X \mathbf{W}^{0}.
\end{equation}

The proof proceeds in three steps. First, we compute the centrality encoding at the root nodes, recall \Cref{eq:centrality}, specifically at \( X^i_i \). Second, we use the shared information component to propagate this root node information across all subgraphs. Finally, we simulate an MPNN over each subgraph. Since the subgraphs are identical and equipped with the centrality encoding, the proof is complete.

\textbf{Step 1.} 
We begin by applying the $K$ MPNN layers to each subgraph independently, and setting to 0 the weight matrices of the shared component, effectively enforcing no sharing between the subgraphs.

More specifically, at the \( k \)-th layer, the following weight matrices are used:

\[
    \mathbf{W}^0 = \begin{pmatrix}
1 & 0 & \cdots & 0 & 0 \\
0 & 1 & \cdots & 0 & 0 \\
\vdots & \vdots & \ddots & \vdots & \vdots \\
0 & 0 & \cdots & 1 & 0
\end{pmatrix}_{k \times (k+1)}, \quad
    \mathbf{W}^1 = \begin{pmatrix}
0 & 0 & \cdots & 0 & 0 \\
0 & 0 & \cdots & 0 & 0 \\
\vdots & \vdots & \ddots & \vdots & \vdots \\
0 & 0 & \cdots & 0 & 1
\end{pmatrix}_{k \times (k+1)}
\]
where:
\begin{itemize}
    \item $\mathbf{W}^0$ is a $k \times (k+1)$ matrix consisting of a $k \times k$ identity matrix $\mathbf{I}_k$ followed by an extra column of zeros:
    \[
    \mathbf{W}^0 = \begin{pmatrix} \mathbf{I}_k & \mathbf{0}_{k \times 1} \end{pmatrix}
    \]
    Specifically, when \(\mathbf{W}^0\) is multiplied by a matrix \(B \in \mathbb{R}^{n \times k}\), i.e., \(B \mathbf{W}^0\), it appends a column of zeros to the right of \(B\), leaving the structure of \(B\) unchanged but with an additional column of zeros.

    \item $\mathbf{W}^1$ is a $k \times (k+1)$ matrix consisting of a $k \times k$ zero matrix $\mathbf{0}_{k \times k}$ followed by a column of all zeros except for a 1 in the last row:
    \[
    \mathbf{W}^1 = \begin{pmatrix} \mathbf{0}_{k \times k} & \mathbf{e}_k \end{pmatrix}
    \]
 Specifically, when \(\mathbf{W}^1\) is multiplied by a matrix \(B \in \mathbb{R}^{n \times k}\), i.e., \(B \mathbf{W}^1\), the resulting matrix is composed of the zero matrix and the last column of \(B\). In other words, this operation extracts the last column of \(B\) and appends it to a matrix of zeros of size \(k \times k\).
\end{itemize}

In this setup, for each $i$, the term \( A X^i \mathbf{W}^{1} \) at the \( k \)-th layer propagates the node marking to neighboring nodes and places it in the last column. Meanwhile, the term \( X \mathbf{W}^{0} \) copies the propagated marking from the previous \( k \) layers.

Thus, by summing the two terms, after \( K \) layers, the node features \( X^i_j \) are given by:

\begin{align}
  X^i_j = \begin{cases}
    \left[ 1, A^1_{ii}, A^2_{ii}, \ldots, A^K_{ii} \right], & \text{for } j=i, \\
    \left[ 0, \mathbf{v}_j \right], & \text{for } j \neq i, \\
  \end{cases}
\end{align}

where \( \mathbf{v}_j \) holds at its $k$-th slot the number of marks propagated to node \( j \) in subgraph \( i \) at the step \( k \).

Since each entry in the vectors \( X^i_j \) for any \( i \in [n] \) and \( j \in [n] \) represents the propagation of a mark over \( k \in [K] \) steps — specifically, the number of walks from the root node of subgraph \( i \) to node \( j \) (within subgraph \( i \)) — the number of possible values for these vectors is constrained. Moreover, because the original graph is finite, the total number of possible values for these vectors must also be finite.

By Theorem 3.1 in \citep{yun2019smallrelunetworkspowerful}\footnote{This theorem assumes the output is bounded between $-1$ and $1$. However, we can relax this assumption as long as the outputs are bounded (and they are since the original graph is finite). To handle this, we can 
use the theorem to calculate the normalize values (dividing each value by the upper bound), and then use an additional MLP to scale the results back to the original values.}, there exists an MLP at the \( K \)-th layer that can implement the following mappings:

\begin{align}
    f^k(\left[ 1, a_1, a_2, \ldots, a_K \right]) &= \left[ 1, \frac{a_1}{1!}, \frac{a_2}{2!}, \ldots, \frac{a_K}{K!} \right], \\
    f^k(\left[ z, b_1, b_2, \ldots, b_K \right]) &= \mathbf{0}_{K+1},
\end{align}

for any \( z \neq 1 \) and \( a_i, b_i \in \mathbb{R} \). At this step, the root nodes \( i \) hold the centrality value, recall \Cref{eq:centrality}, while all other nodes hold the feature vector $\mathbf{0}_{K+1}$.

\textbf{Step 2.} 
Next, we utilize the shared information component by setting the weights of its MPNN as follows, \( \mathbf{W}^1 = 0 \) and \( \mathbf{W}^0 = I \), which effectively broadcasts the root node information to the corresponding nodes in all other subgraphs. To prevent the root nodes from receiving double the value, we initialize the MPNN that operates on each subgraph individually with zero weights.

\textbf{Step 3.} 
At this point, we have \( n \) copies of the original graph, each equipped with its corresponding centrality values. Therefore, the MPNN over each subgraph can effectively simulate the MPNN over the original graph, now with centrality values assigned to the nodes. Assuming a mean readout is used at the conclusion of both the $\text{DSS-GNN}(\mathcal{B}_G$ and the $\text{MPNN}(G)$, their outputs will be identical.

This concludes the proof.

We note that this result is also valid for Subgraph GNN architectures that subsume DSS-GNN, e.g., GNN-SSWL$+$~\citep{zhang2023complete}.
\end{proof}

\section{On Marking-Induced Perturbations and the Number of Walks}\label{app:perturb-analysis}

In this section, we report more details and comments on our Observation~\ref{obs:walk-bound} introduced in~\Cref{sec:effective}.

We start by commenting on~\Cref{eq:walk-bound}. We recall that we are interested in upper-bounding the amount of output perturbation induced by marking node $v$. Effectively, this corresponds to the distance $|y_G -y_{S_v}|$, namely, the absolute difference between the predictions a backbone MPNN computes for the original graph ($G$) and the subgraph obtained by marking node $v$ ($S_v$).

For an $L$-layer MPNN in the form of~\Cref{eq:mpnn_rules}, we obtain \Cref{eq:walk-bound} by an almost immediate application of the results in~\citep{chuang2022tree}. Indeed, let us rewrite:
\begin{align}
    | y_G - y_{S_v} | &= | \phi^{(L+1)} \big ( \sum_{u \in G} h^{G,(L)}_u  \big ) - \phi^{(L+1)} \big ( \sum_{u \in S_v} h^{S_v, (L)}_u \big ) |
\end{align}
\noindent where $h^{G, (L)}_v, h^{S_v, (L)}_v$ indicate, respectively, the representations of node $u$ in graph $G$ and its perturbed counterpart $S_v$. Now, by~\citet[Theorem $8$]{chuang2022tree} we have:
\begin{align}
    | y_G - y_{S_v} | \leq \prod_{l=1}^{L+1} K^{(l)}_{\phi} \cdot \underbrace{\text{TMD}_w^{L+1}(G, S_v)}_{(\mathcal{A})} \label{eq:bounding-one-1}
\end{align}
\noindent where $(\mathcal{A})$ is the $L+1$-depth \emph{Tree Mover's Distance} (TMD)~\citep{chuang2022tree} with layer-weighting $w$ calculated between the original graph and its marked counterpart.

In the same work, the authors provide an upper-bound on the TMD between a graph and a perturbed version obtained by a change in the initial features of a node~\citep[Proposition 11]{chuang2022tree}. We restate this result.

\begin{proposition}{\citep[Proposition 11]{chuang2022tree}}\label{prop:TMD-prop-11}
    Let $H$ be a graph and $H'$ be the perturbed version of $H$ obtained by changing the features of node $v$ from $x_v$ to $x'_v$. Then:
    \begin{align}
        \text{TMD}_w^{L}(H, H') \leq \sum_{l=1}^{L} \lambda_l \cdot \text{Width}_l(T_v^L) \cdot \| x_v - x'_v \| \label{eq:TMD-prop-11}
    \end{align}
    \noindent where $\lambda_l \in \mathbb{R}^{+}$ is a layer-wise weighting scheme dependent of $w$ and $\text{Width}_l(T_v^L)$ is the width at the $l$-th level of the $L$-deep computational tree rooted in $v$.
\end{proposition}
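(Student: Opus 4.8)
The plan is to exploit that $H$ and $H'$ have the same edge set, so the entire discrepancy between them is localized in the feature of node $v$. Recall the construction behind $\text{TMD}_w^{L}$: for each node $u$ one unrolls $L$ steps of message passing into a rooted computational tree $T_u^{L}$, and $\text{TMD}_w^{L}(H,H')$ is an optimal-transport distance between the two multisets of trees $\{T_u^{L}(H)\}_u$ and $\{T_u^{L}(H')\}_u$, whose ground cost is itself a recursively defined tree distance $\text{TD}_w$: root-feature discrepancy plus a weighted optimal matching between the bags of child subtrees (with blank-tree padding when arities differ). Since no edges are altered, $T_u^{L}(H)$ and $T_u^{L}(H')$ are isomorphic as rooted trees for every $u$, degree sequences never differ, so the blank-tree padding is never invoked.

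First I would upper-bound $\text{TMD}_w^{L}(H,H')$ by the feasible (not necessarily optimal) transport plan that matches $T_u^{L}(H)$ with $T_u^{L}(H')$ for each $u$, giving $\text{TMD}_w^{L}(H,H') \le \sum_u \text{TD}_w^{L}\big(T_u^{L}(H),T_u^{L}(H')\big)$. Next, for a fixed $u$, I would unroll the recursion defining $\text{TD}_w$ using the identity matching at every internal level — legitimate precisely because the two trees are rooted-isomorphic, and each such replacement of an optimal sub-matching by the identity only increases cost. This collapses $\text{TD}_w^{L}(T_u^{L}(H),T_u^{L}(H'))$ to a sum over positions $p$ of $T_u^{L}$ of the accumulated level-weight of $p$ times the feature discrepancy at $p$. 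The discrepancy equals $\|x_v - x'_v\|$ exactly at positions occupied by a copy of node $v$ and $0$ elsewhere, and a depth-$l$ position carries accumulated weight $\lambda_l$ (the product of the per-level weights $w_1,\dots,w_l$ induced by $w$). Hence $\text{TD}_w^{L}(T_u^{L}(H),T_u^{L}(H')) \le \sum_l \lambda_l\, N_l(u,v)\, \|x_v - x'_v\|$, where $N_l(u,v)$ is the number of depth-$l$ nodes of $T_u^{L}$ that are copies of $v$; by the walk interpretation of computational trees this is the number of length-$l$ walks from $u$ to $v$, i.e.\ $(A^l)_{uv}$.

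Finally I would sum over $u$ and invoke symmetry of $A$: $\sum_u (A^l)_{uv} = (\mathbf{1}^{\top} A^l)_v = (A^l \mathbf{1})_v = \sum_j (A^l)_{vj}$, which counts the length-$l$ walks emanating from $v$ and is exactly $\text{Width}_l(T_v^{L})$, the width of the computational tree rooted at $v$. This yields $\text{TMD}_w^{L}(H,H') \le \sum_l \lambda_l\, \text{Width}_l(T_v^{L})\, \|x_v - x'_v\|$, which is the stated bound up to the indexing convention for levels.

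The main obstacle I anticipate is bookkeeping rather than conceptual: pinning down the recursive weight accumulation of $\text{TD}_w$ so that it reproduces exactly the $\lambda_l$ in the statement, and checking that every replacement of an optimal (sub-)matching by the identity genuinely points the inequality in the right direction, including the corner case where a subtree hanging off a copy of $v$ is truncated at depth $L$. Once the identity ``appearances of $v$ at depth $l$ in $T_u^{L}$ equals $(A^l)_{uv}$'' and its symmetric summation are established, the remainder is immediate.
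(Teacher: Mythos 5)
Your argument is sound, and it is worth noting that the paper itself does not prove this proposition at all — it is imported verbatim from \citet[Proposition 11]{chuang2022tree} — so your reconstruction (bounding the top-level OT by the identity coupling of the two multisets of rooted-isomorphic computation trees, collapsing each recursive optimal matching to the identity matching, observing that the feature discrepancy $\|x_v - x'_v\|$ appears exactly at the copies of $v$, counting the depth-$l$ copies of $v$ in $T_u^L$ as $(A^l)_{uv}$, and summing over $u$ via symmetry of $A$ to recover $\text{Width}_l(T_v^L)$) is precisely the strategy of the original proof in the cited work. The only loose end is the one you already flag — the off-by-one between walk length and level index (the paper elsewhere takes $\text{Width}_l(T_v^{L})=\sum_j (A^{l-1})_{vj}$, i.e.\ the root is level $1$) — which is a convention matter and does not affect correctness.
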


We can readily apply~\Cref{prop:TMD-prop-11} and leverage the fact that marking only induces a unit-norm feature perturbation to get:
\begin{align}
    | y_G - y_{S_v} | &\leq \prod_{l=1}^{L+1} K^{(l)}_{\phi} \cdot \text{TMD}_w^{L+1}(G, S_v) \\
    &\leq \prod_{l=1}^{L+1} K^{(l)}_{\phi} \cdot \sum_{l=1}^{L+1} \lambda_l \cdot \text{Width}_l(T_v^{L+1})
\end{align}

Finally, we note that $\text{Width}_l(T_v^{L+1})$ corresponds to the number of walks of length $l-1$ starting from $v$. This can be easily seen by noting that the leaves of the computational tree can be put in a bijection with all and only those walks of length $l-1$ starting from $v$\footnote{This can be constructed, e.g., by associating leaf nodes to the walks (uniquely) obtained by ``climbing up'' the computational tree up to the root.}. This value is notoriously computed from row-summing powers of the adjacency matrix $A$, so that: $\text{Width}_l(T_v^{L+1}) = \sum_{j} (A^{l-1})_{v,j}$. We ultimately have:
\begin{align}
    | y_G - y_{S_v} | &\leq \prod_{l=1}^{L+1} K^{(l)}_{\phi} \cdot \sum_{l=1}^{L+1} \lambda_l \cdot \text{Width}_l(T_v^{L+1}) \\
    &= \prod_{l=1}^{L+1} K^{(l)}_{\phi} \cdot \sum_{l=1}^{L+1} \lambda_l \cdot \sum_{j} (A^{l-1})_{v,j}
\end{align}


The above bound is easily extended to consider a closely related analysis: the impact of adding a single node-marked subgraph $S$ to a bag formed by the original graph only. This analysis would enlighten us on the impact a single subgraph addition in the case of ``augmented policies'', which we use throughout our experiments in~\Cref{sec:exp}, see~\Cref{app:arch}.

In other words, we would like to bound $|y^{B=\{G\}}_G - y^{B=\{S,G\}}_G|$, where these outputs are given by~\Cref{eq:base-arch}, with a base MPNN backbone as per~\Cref{eq:mpnn_rules}. We have:
\begin{align}
    |y^{B=\{G\}}_G - y^{B=\{S,G\}}_G| &= | \phi^{(L+1)} \big ( \sum_{v \in G} (h^{(L)}_{G,v} + h^{(L)}_{S,v} ) \big ) - \phi^{(L+1)} \big ( \sum_{v \in G} h^{(L)}_{G,v} ) \big ) | \\
    &\leq K^{L+1}_{\phi} \cdot \| \sum_{v \in G} (h^{(L)}_{G,v} + h^{(L)}_{S,v} ) -  \sum_{v \in G} h^{(L)}_{G,v} \|
\end{align}
\noindent where $K^{L+1}_\phi$ is the Lipschitz constant of the prediction layer $\phi^{(L+1)}$. We can rewrite the above as follows by appropriately rearranging terms and by the triangular inequality:
\begin{align}
    |y^{B=\{G\}}_G - y^{B=\{S,G\}}_G| \leq & K^{L+1}_{\phi} \cdot \| \sum_{v \in G} (h^{(L)}_{G,v} + h^{(L)}_{S,v} ) -  \sum_{v \in G} h^{(L)}_{G,v} \| \\ 
    &\leq K^{(L+1)}_{\phi} \cdot \Big ( \| \sum_{v \in G} h^{(L)}_{G,v}  \| + \underbrace{\| \sum_{v \in G} h^{(L)}_{S,v} - \sum_{v \in G} h^{(L)}_{G,v}\|}_{(\mathcal{A})} \Big )
\end{align}
\noindent where, we note, $(\mathcal{A})$ is the distance between the embeddings of the marked and unmarked graphs, before a final predictor is applied. This term can be bounded similar to our initial analysis for Observation \ref{obs:walk-bound}:
\begin{align}
    (1) &= \| \sum_{v \in G} h^{(L)}_{S,v} - \sum_{v \in G} h^{(L)}_{G,v}\| \\
    &\leq \prod_{l=1}^{L} K^{(l)}_{\phi} \cdot \underbrace{\text{TMD}_w^{L+1}(S, G)}_{(\mathcal{B})}
\end{align}
\noindent where $(\mathcal{B})$ can be upper-bounded, again, by~\Cref{prop:TMD-prop-11}\footnote{We have allowed a little abuse of notation here by using $S$ to refer to the ``subgraph'' obtained by marking node $S$ in $G$.}.

Putting things together:
\begin{align}
    |y^{B=\{G\}}_G - y^{B=\{S,G\}}_G| & \leq 
        K^{(L+1)}_{\phi} \cdot \Big ( \| \sum_{v \in G} h^{(L)}_{G,v}  \| + \prod_{l=1}^{L} K^{(l)}_{\phi} \cdot \sum_{l=1}^{L+1} \lambda_l \cdot \sum_{j} (A^{l-1})_{S,j} \Big ) \label{eq:walk-bound-app}
\end{align}

Differently from the above analysis, we observe a contribution given by $\| \sum_{v \in G} h^{(L)}_{G,v}  \|$. This could be upper-bounded, e.g., by the sum of the ``tree-norms''~\citet{chuang2022tree} of the computational trees over the original graph. We note that (the presence of) this term is, however, independent on the selection of the specific node to mark.

In future developments of this work we envision to more deeply enquire into the relation between \Cref{eq:walk-bound-app} and \Cref{eq:walk-centralities}, and into the principled choice of a specific centrality measure among different possibilities.

\subsection{Perturbation Analysis on an Additional Dataset}

As outlined in \Cref{sec:effective}, we explore the amount of perturbation from different marking schemes on the output of an untrained GIN model. We have shown in \cref{fig:distances}, the effect of these marking schemes on the MUTAG dataset. Here, in \cref{fig:nci1} and in \cref{fig:perturb_random_graphs}, we additionally show the results for the NCI1 dataset and for Erd\"{o}s-Renyi (ER) graphs. The ER graphs are consistent with our correlation analysis experiment \cref{sec:effective}, where we randomly generate $100$ ER graphs, each with $N=20$ nodes and wiring probability $p = 0.3$.

\begin{figure}[tbp]
  \centering
  \begin{subfigure}[b]{0.45\linewidth}
    \centering
    \includegraphics[width=\linewidth]{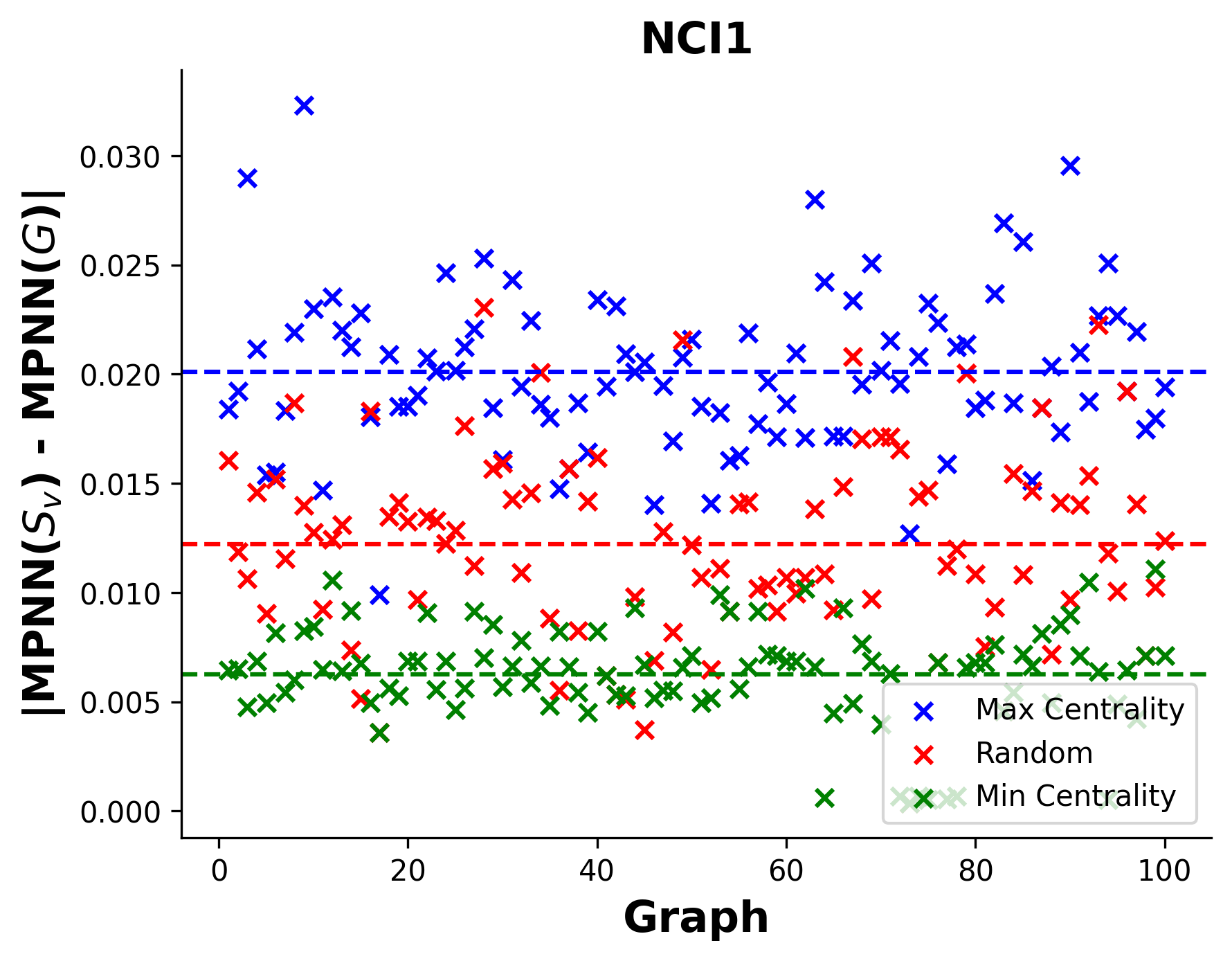}
    \caption{NCI1 dataset alterations.}
    \label{fig:nci1}
  \end{subfigure}
  \hfill
  \begin{subfigure}[b]{0.45\linewidth}
    \centering
    \includegraphics[width=\linewidth]{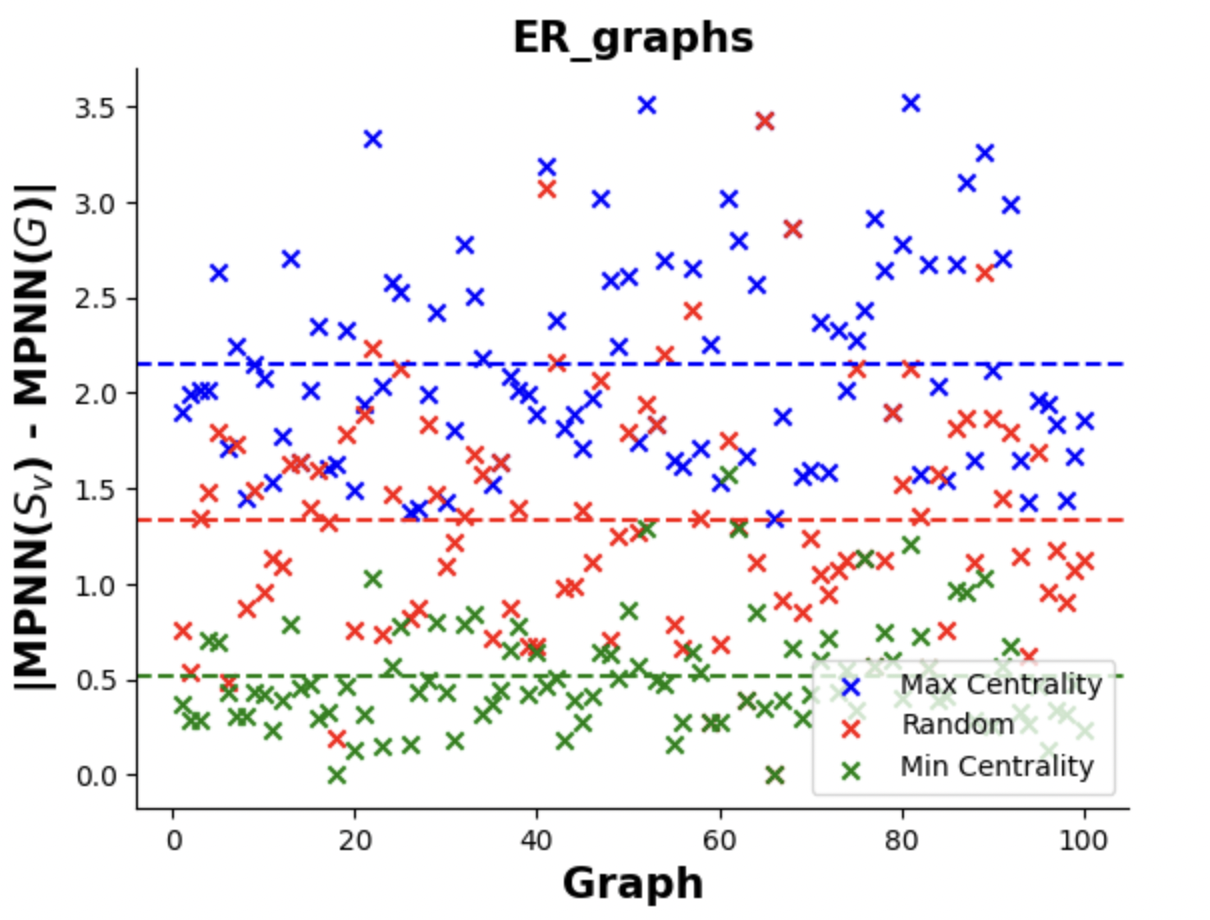}
    \caption{Erd\"{o}s-Renyi graph alterations.}
    \label{fig:perturb_random_graphs}
  \end{subfigure}
  \caption{Plots showing the alteration of graph representations using GIN by adding an additional node-marked subgraph with (i) the highest centrality, (ii) the lowest centrality, and (iii) a random marking for both the NCI1 dataset (left) and Erd\"{o}s-Renyi graphs (right).}
  \label{fig:combined}
\end{figure}

\section{Additional Experiments}\label{app:centrality_comparison}

\subsection{Additional Experiments with Other Backbones}

We have already experimented with GCN as the backbone on the Peptides datasets. Additionally, we run GCN as a backbone on OGB molecular datasets (see \cref{tab:gcn_ogb}). Interestingly, we find that a GCN backbone achieves the best results on MOLBACE. Moreover, we perform experiments on yet another backbone: GatedGCN \cite{bresson2017residual}. This complements backbones used in the Graph Transformer literature such as GPS. The results on peptides and MolHIV are shown in \cref{tab:gatedgcn}. These results further show that the walk-based centrality structural encoding can improve performance over the backbone MPNN, and we can improve performance even further by using it for sampling subgraphs.

\begin{table}[thb]
\centering
\scriptsize
\caption{
    Results on OGB benchmarks using GCN as the backbone architecture.
  }
\begin{tabular}{lccc}
      \toprule
        Model &	MolHIV &	MolBace &	MolTox \\
      \midrule
      GCN       &   76.06 $\pm 0.97$ &	79.15 $\pm 1.44$ &	75.29 $\pm 0.69$   \\
      HyMN (GCN, T=2)  &   76.82 $\pm 1.20$	& 83.21 $\pm 0.51$ &	76.07 $\pm 0.50$        \\
      \bottomrule
\end{tabular}
\label{tab:gcn_ogb}
\end{table}

\begin{table}[thb]
\centering
\scriptsize
\caption{
    Results on the Peptides and MolHIV benchmarks using GatedGCN as the backbone architecture.
  }
\begin{tabular}{lccc}
      \toprule
        Model &	MolHIV &	Peptides-Func &	Peptides-Struct \\
      \midrule
      GatedGCN &	78.27 $\pm 0.65$ &	0.6558 $\pm 0.0068$ &	0.2497 $\pm 0.0007$   \\
      GatedGCN + CSE &	80.33 $\pm 0.56$ &	0.6611 $\pm 0.0058$	& 0.2482 $\pm 0.0010$        \\
      HyMN (GatedGCN, T=2) w/out CSE &	79.02 $\pm 0.91$ &	0.6723 $\pm 0.0079$ &	0.2473 $\pm 0.0013$ \\
      HyMN (GatedGCN, T=2) &	81.07 $\pm 0.21$ &	0.6788 $\pm 0.0052$ &	0.2471 $\pm 0.0007$ \\
      
      \bottomrule
\end{tabular}
\label{tab:gatedgcn}
\end{table}

\subsection{Results on ZINC and Additional Model Comparisons on MolHIV}\label{app:zinc}
We experimented with the ZINC-12K molecular dataset \citep{sterling2015zinc, gomez2018automatic}, where we maintain a 500k parameter budget, in line with previous works.
\begin{table}[t]
\centering
\scriptsize
\caption{
    Test results on ZINC. The \textcolor{green}{first} and \textcolor{orange}{second} best results are color-coded.
  }
\label{tab:zinc}
\begin{tabular}{lcc}
      \toprule
        Method & ZINC \tiny{($\downarrow$)} \\ 
     \midrule
      GCN  &  0.321   $\pm 0.009$     \\
      GIN  &  0.163   $\pm 0.004$        \\
      \midrule
      GSN            & 0.101 $\pm 0.010$                \\
      CIN   & \textcolor{orange}{0.079  $\pm 0.006$}  \\
      GPS   & \textcolor{green}{0.070  $\pm 0.004$}  \\
      GINE-MLP-Mixer & \textcolor{green}{0.073 $\pm 0.001$}    \\
      GINE-ViT & 0.085 $\pm 0.004$  \\
      \midrule
      FULL  & 0.087  $\pm 0.003$  \\
      \midrule
      OSAN  & 0.177  $\pm 0.016$  \\
      POLICY-LEARN (T = 2)  & 0.120  $\pm 0.003$  \\
      RANDOM (T = 2)  & 0.136  $\pm 0.005$  \\
      \midrule  
      GIN+CSE            &  0.092 $\pm 0.002$                  \\
      \textbf{HyMN (GIN, T=1)}             & \textcolor{orange}{0.080 $\pm 0.003$}               \\
      \textbf{HyMN (GIN, T=2)}             & 0.083 $\pm 0.002$               \\
      \bottomrule
\end{tabular}
\end{table}

\begin{table}[t]
\centering
\scriptsize
\caption{
    Test results on MOLHIV. The \textcolor{green}{first} and \textcolor{orange}{second} best results are color-coded.
  }
\label{tab:molhiv_test}
\begin{tabular}{lcc}
      \toprule
        Method & MOLHIV \tiny{($\uparrow$)} \\ 
     \midrule
      GCN  &   76.06   $\pm 0.97$    \\
      GIN  &   75.58   $\pm 1.40$       \\
      \midrule
      Reconstr. GNN & 76.32   $\pm 1.40$ \\
      DSS-GNN & 76.78   $\pm 1.66$ \\
      GSN        &     80.39   $\pm 0.90$               \\
      CIN   &  \textcolor{orange}{80.94   $\pm 0.57$} \\
      GPS   &  78.80   $\pm 1.01$ \\
      GINE-MLP-Mixer &  79.97 $\pm 1.02$  \\
      GINE-ViT &  77.92 $\pm 1.42$ \\
      Nested-GNN & 78.34   $\pm 1.86$ \\
      GNN-AK+ &  79.61   $\pm 1.19$ \\
      SUN & 80.03   $\pm 0.55$ \\
      \midrule
      FULL  &  76.54   $\pm 1.37$ \\
      \midrule
      POLICY-LEARN (T = 2)  &  79.13   $\pm 0.60$ \\
      RANDOM (T = 2)  &  77.55   $\pm 1.24$ \\
      \midrule  
      GIN+CSE            &   77.44   $\pm 1.87$                 \\
      \textbf{HyMN (GIN, T=1)}             &  80.36   $\pm 1.23$              \\
      \textbf{HyMN (GIN, T=2)}             &  \textcolor{green}{81.01   $\pm 1.17$}               \\
      \bottomrule
\end{tabular}
\end{table}

We compared the test MAE using HyMN to other sampling approaches \citep{qian2022ordered, bevilacqua2023efficient}, a full-bag Subgraph GNN, a Graph Transformer (GPS) \citep{rampášek2023recipe}, two expressive GNN baselines (GSN, CIN) \citep{bouritsas2022improving, bodnar2021cellular} and the two alternative architectures Graph-ViT and Graph MLP-Mixers by~\citep{he2023generalization}. 
Results are presented in \cref{tab:zinc}. We additionally report results on MOLHIV in \cref{tab:molhiv_test}, often used as a reference test-bed for most full-bag methods \citep{bevilacqua2021equivariant, frasca2022understanding, zhang2021nested, cotta2021reconstruction, zhao2022stars}. We observe that our hybrid method can outperform full-bag Subgraph GNNs as well as previously proposed subsampling based approaches. Additionally, we perform competitively with CIN, which takes into account higher-order interactions and explicitly models ring-like structures. Non-message-passing based architectures can perform slightly better in absolute terms, but for higher run-time (see comparison with GPS in~\Cref{app:timing}). We highlight that additionally using subgraphs can outperform purely using the centrality-based encodings (\textbf{Q4}). In particular, the two approaches in tandem deliver exceptional performance on MolHIV, for $T=2$.

\subsection{Minimum Centrality Marking for Counting Substructures}
As highlighted in \cref{sec:effective}, low-centrality nodes perturb representations only to a limited extent, causing the method to work in a regime close to a vanilla
$1$-WL GNN. We also empirically observe that the perturbations induced by marking high-centrality nodes correlate much more with potentially predictive features, such as subgraph counts. In order to further evidence this point, we have run experiments for counting triangle and $4$-cycle substructures, studying the impact of sampling marked subgraphs associated with the minimum values of the Subgraph Centrality (SC). We report the results in \cref{tab:triangles} and \cref{tab:4cycles}, along with those of random sampling and max-centrality sampling (MAE, the lower the better). We observe that min-centrality sampling is indeed outperformed by the two other strategies on all settings, further suggesting the limitations of marking nodes with low subgraph centrality.

\begin{table}[t]
\centering
\caption{Performance comparison for different policies on triangle subgraphs.}
\label{tab:triangles}
\begin{tabular}{lccc}
    \toprule
    \textbf{Policy} & \textbf{10 Subgraphs} & \textbf{20 Subgraphs} & \textbf{30 Subgraphs} \\
    \midrule
    Min SC & 0.78 & 0.52 & 0.43 \\
    Random & 0.62 & 0.48 & 0.40 \\
    Max SC & 0.20 & 0.10 & 0.03 \\
    \bottomrule
\end{tabular}
\end{table}

\begin{table}[t]
\centering
\caption{Performance comparison for different policies on 4-cycle subgraphs.}
\label{tab:4cycles}
\begin{tabular}{lccc}
    \toprule
    \textbf{Policy} & \textbf{10 Subgraphs} & \textbf{20 Subgraphs} & \textbf{30 Subgraphs} \\
    \midrule
    Min SC & 0.74 & 0.63 & 0.41 \\
    Random & 0.59 & 0.45 & 0.36 \\
    Max SC & 0.38 & 0.12 & 0.08 \\
    \bottomrule
\end{tabular}
\end{table}

\subsection{Comparison to Other Centrality Measures}
We compared the impact of using different node centrality measures as a node marking scheme on how much they altered the graph representation. For each of these centrality measures, we measured the distance $\| f(S_v) - f(G) \|$ on $100$ graphs from two different real-world datasets from the popular TU suite: MUTAG and NCI1~\citep{morris2020tudataset+2020}. Here, $f$ is an untrained $3$-layer GIN~\citep{xu2019powerful}. We can see from \cref{tab:perturbation_other_centrality} that marking the node with using the maximum values of the three walk-based centrality measures (Subgraph, Communicability, Katz) leads to the highest average perturbation and marking the node with the minimum of these centrality measures leads to the lowest. This implies that this family of centrality measures is most aligned with the perturbation distance. 

\begin{table}[htb]
\centering
\scriptsize
\caption{
    Amount of perturbation from the original graph representation on MUTAG and NCI1 using 3-layer untrained GIN with 32 hidden dimension by incorporating a node-marked subgraph with different marking policies.}
\begin{tabular}{lcc}
      \toprule
        Marking Policy & MUTAG Perturbation & NCI1 Perturbation \\
      \midrule
      Random & 0.0648 & 0.0126 \\
      \midrule
      Minimum Degree Centrality & 0.0202 & 0.0075 \\
      Maximum Degree Centrality & 0.0968 & 0.0075 \\
      \midrule
      Minimum Closeness Centrality & 0.0241 & 0.0073 \\
      Maximum Closeness Centrality & 0.1038 & 0.0184 \\
      \midrule
      Minimum Betweenness Centrality & 0.0202 & 0.0076 \\
      Maximum Betweenness Centrality & 0.0957 & 0.0183 \\
      \midrule
      Minimum Katz Centrality & 0.0177 & 0.0063 \\
      Maximum Katz Centrality & 0.1051 & 0.0200 \\
      \midrule 
      Minimum Communicability Centrality & 0.0177 & 0.0063 \\
      Maximum Communicability Centrality & 0.1056 &  0.0200 \\
      \midrule
      Minimum Subgraph Centrality & 0.0177 & 0.0063 \\
      Maximum Subgraph Centrality & 0.1055 & 0.0201 \\
      \bottomrule
\end{tabular}
\label{tab:perturbation_other_centrality}
\end{table}

To further assess the benefits of our specific centrality encoding for sampling subgraphs, we compared against using other centrality measures to sample subgraphs in the counting substructure task. We used Closeness centrality, Betweeness centrality, Pagerank centrality and Degree centrality as baselines using the Networkx library \citep{hagberg2008exploring}. From \cref{fig:substructure_counting_comparison}, we can see that using any of the different centrality methods performs better than random sampling across all substructures and number of samples. We also find that the Subgraph Centrality which we use, outperforms all other approaches in counting $3$ and $4$-cycles for any number of samples and in counting $3$ and $4$-paths when number of samples $\geq 5$. For $4$-cycles and other substructures, we find that Subgraph centrality is best, followed by Degree and Pagerank centrality and then Closeness and Betweenness centralities perform the worst of these centrality measures. This ranking of performance is aligned with how correlated these substructures are with the Subgaph Centrality on these synthetic graphs (as shown in \cref{tab:correlation_centralities}). 

\begin{figure}[t]
  \centering
  \begin{subfigure}{.45\textwidth}
    \centering
    \includegraphics[width=1.0\linewidth]{counting_figures/tri_centrality_comparison.png}  
    \caption{Counting 3-Cycles.}
\end{subfigure}
\begin{subfigure}{.45\textwidth}
    \centering
    \includegraphics[width=1.0\linewidth]{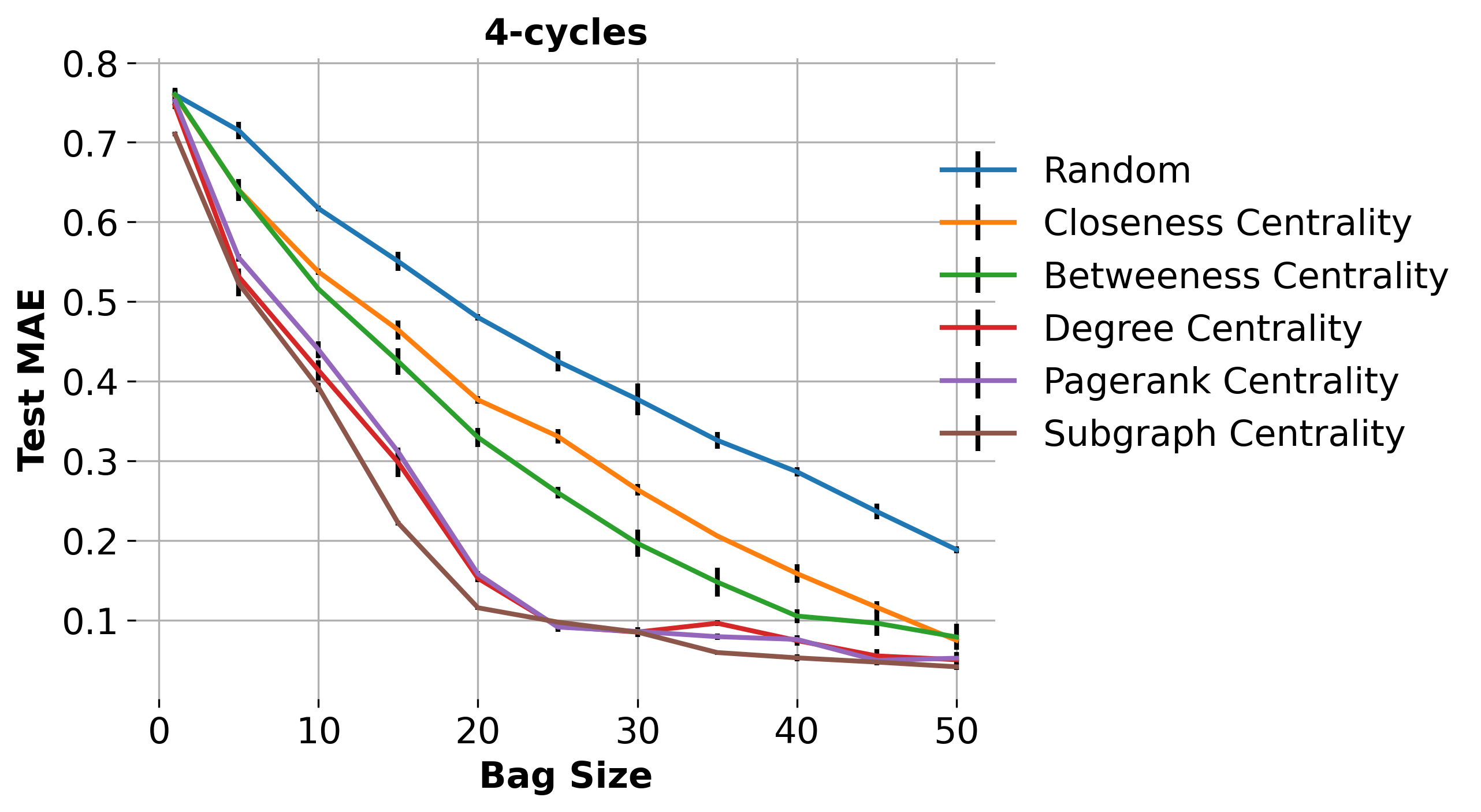}
    \caption{Counting 4-Cycles.}
\end{subfigure}
\begin{subfigure}{.45\textwidth}
    \centering
    \includegraphics[width=1.0\linewidth]{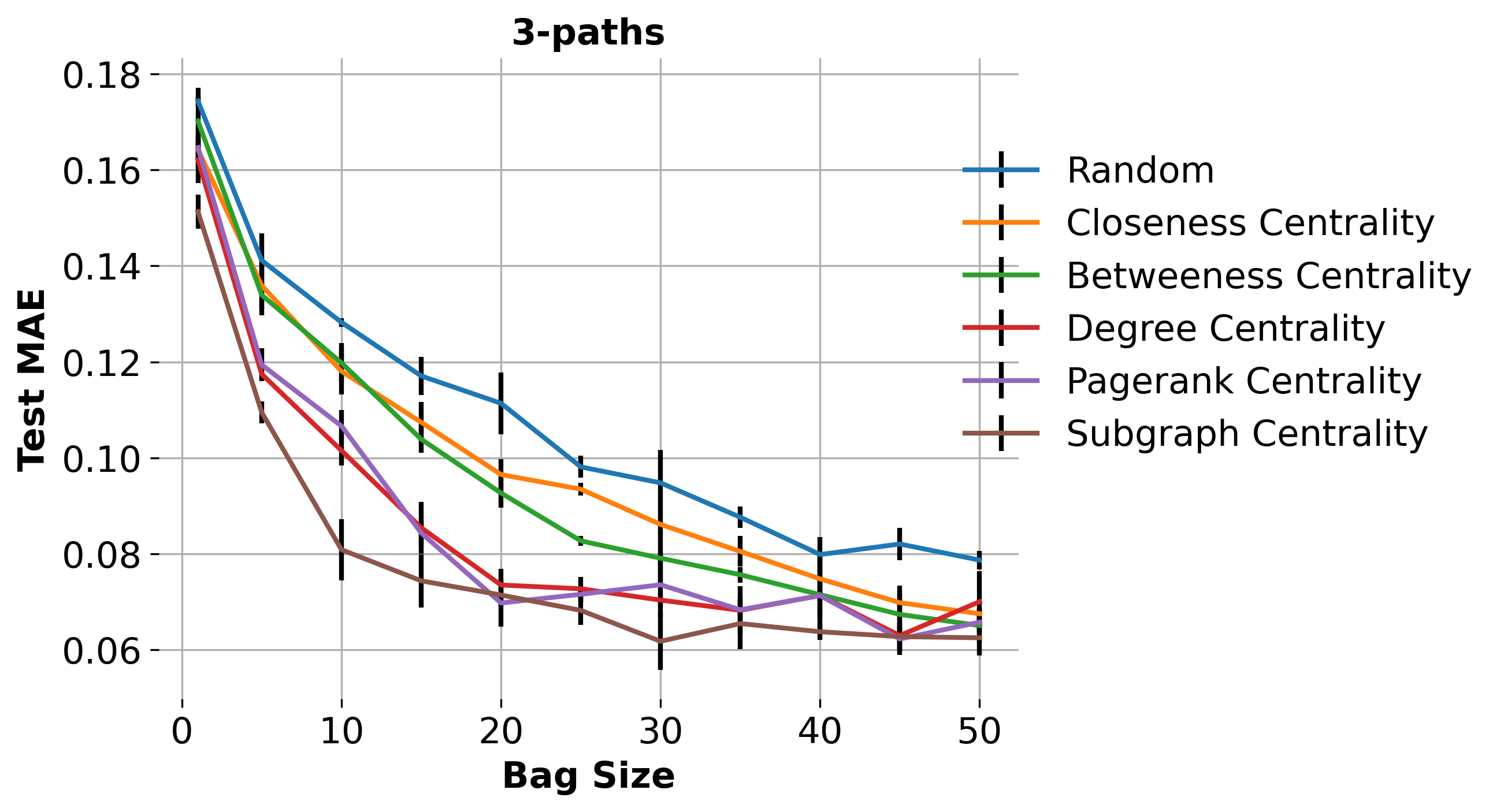}  
    \caption{Counting 3-Paths.}
\end{subfigure}
\begin{subfigure}{.45\textwidth}
    \centering
    \includegraphics[width=1.0\linewidth]{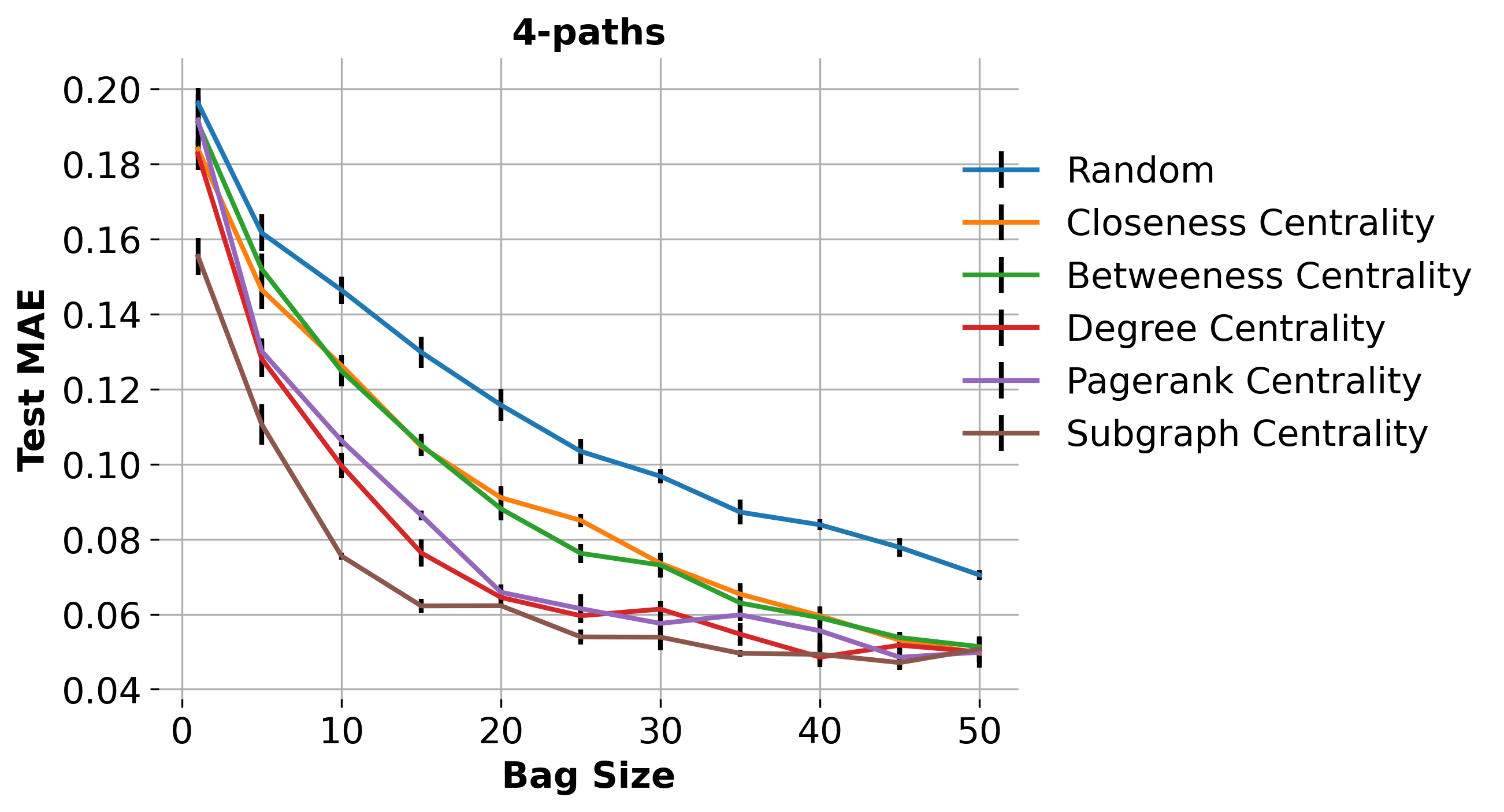}  
    \caption{Counting 4-paths.}
\end{subfigure}
  \caption{%
    Comparing different centrality measures for counting different substructures on synthetic random graphs.
  }
  \label{fig:substructure_counting_comparison}
\end{figure}

\begin{table}[t]
\centering
\scriptsize
\caption{
    Correlation between the different centralities and Subgraph Centrality on the random regular graphs in the substructure counting experiments.
  }
\begin{tabular}{lc}
      \toprule
        Centrality & Correlation with Subgraph Centrality\\
      \midrule
      Pagerank Centrality       &  0.923   $\pm 0.025$    \\    
      Degree Centrality       &  0.970   $\pm 0.012$      \\
      Betweenness Centrality             &    0.801   $\pm 0.074$     \\
      Closeness Centrality             &  0.786   $\pm 0.067$    \\
      \bottomrule
\end{tabular}
\label{tab:correlation_centralities}
\end{table}

To further compare different centrality measures, we ran additional experiments on the Peptides and MolHIV datasets. We experimented in particular, with the Betweenness Centrality (BC), the Katz Index (KI) and the Subgraph Centrality (SC). We see from \cref{tab:real-world-centralities} that the performances achieved by different centrality measures are not dramatically different from each other, with those by the KI and SC being closer. In fact, centrality measures often exhibit a degree of correlation with each other, especially if from the same family, as it is the case of the walk-based KI and SC (see \cite{estrada2005subgraph} and \cref{tab:correlation_centralities}). It is also worth noting that Subgraph Centrality can be more efficient to calculate than these other centrality measures using the Networkx library (see \cref{tab:timing-centralities}). 

Overall, we believe that specific centrality measures could work better than others depending on the task at hand, but, at the same time, our current ensemble of observations indicate that walk-based centrality measures – and, in particular, the Subgraph Centrality – offer the most competitive results for the lightest precomputation run-time. Given the additional support provided by the bound discussed in \cref{sec:approx}, we think they constitute particularly strong candidates across use-cases. 

\begin{table}[t]
\centering
\scriptsize
\caption{
    Comparison of different centrality measures on real-world molecular datasets.
  }
\begin{tabular}{lccc}
      \toprule
        Centrality for Sampling & MolHIV & Peptides-Func & Peptides-Struct\\
      \midrule
      Betweenness Centrality       &  78.86   $\pm 0.98$  &  0.6749   $\pm 0.0066$ &  0.2478   $\pm 0.0006$  \\    
      Katz Index       &  79.58   $\pm 0.98$   &  0.6756   $\pm 0.0056$ &  0.2469   $\pm 0.0008$   \\
      Subgraph Centrality             &    79.77   $\pm 0.70$    &  0.6758   $\pm 0.0050$ &  0.2466   $\pm 0.0010$ \\
      \bottomrule
\end{tabular}
\label{tab:real-world-centralities}
\end{table}

\begin{table}[t]
\centering
\scriptsize
\caption{
    Timing of different centrality measures on an Erdös-Renyi graph with 1000 nodes and p=0.5 using the Networkx library.
  }
\begin{tabular}{lccc}
      \toprule
        Centrality & Time (s) \\
      \midrule
      Betweenness Centrality       &  83.12     \\    
      Katz Index       &  1.31   \\
      Subgraph Centrality             &    0.54   \\
      \bottomrule
\end{tabular}
\label{tab:timing-centralities}
\end{table}

\subsection{Comparison between Centrality-Based Structural Encodings and RWSE}\label{app:rwse_sampling_comparison}
Here we aim to outline some of the similarities and differences between our Centrality structural Encoding (CSE) defined in \cref{eq:centrality} and the Random-Walk Structural Encoding (RWSE) introduced in \citep{dwivedi2021graph}. The RWSE uses the diagonal of the $k$-step random-walk matrix defined in \cref{eq:rwse} defined as:
\begin{equation}\label{eq:rwse}
p_i^{\mathsf{RWSE}} = [(AD^{-1})_{ii}, (AD^{-1})^{2}_{ii}, \dotsc, (AD^{-1})^{k}_{ii}] \in \R^k,
\end{equation}

These terms show similarity to our CSE as it also stores powers of the diagonal of the adjacency matrix, but it has a different normalization term that depends on the degree. In \cref{thm:cpe_rw}, we show that using an MPNN with CSE can compute the the probability, for all possible walks departing from a node, that a walk will lead back to the start. RWSE structural encodings are subtly different in that they compute the landing probability of a \emph{random} walk from a node to itself. In this case, rather than weighting all possible walks equally, the walks are weighted by the degrees of the nodes along the walk; a walk where there are fewer alternative routes (other nodes) for the RW is more likely to occur. 

Here we aim to empirically examine this difference to see (i) which one is a more effective Structural Encoding and (ii) which one is more effective for subgraph sampling. To show the effect of both CSE and RWSE as SEs, we compared both on the Peptides datasets \citep{dwivedi2023long} with two different base MPNNs (GCN and GIN). From \cref{tab:peptides_rwse}, it can be seen that our centrality encoding performs similarly to the RWSE encoding; matching almost exactly except with a GCN on Peptides-Func.

 To answer (ii) and highlight the benefit of sampling based on CSE over using RWSE, we compared using the sum of these different encodings to sample the subgraphs in the counting substructures experiment. From \cref{fig:substructure_counting_rwse_comparison}, we see that our sampling method is better for counting all substructures and for all the number of samples in comparison to RWSE sampling. 

In conclusion, \cref{app:exp_details_synthetic} shows that our CSE is better for sampling subgraphs and \cref{tab:peptides_rwse} shows that CSE is competitive when purely used as a Structural Encoding. Therefore, it is well motivated to use the CSE for our hybrid method where we need an SE \emph{and} to use it as a sampling method. Future work could consider further understanding the expressivity differences between these SEs and the role of the normalization factor.  

\begin{table}
\centering
\scriptsize
\caption{
    Results on the Peptides datasets comparing CSE with RWSE.
  }
\label{tab:peptides_rwse}
\scriptsize
\begin{tabular}{lcc}
      \toprule
        Method & Peptides-Func \tiny{($\uparrow$)} & Peptides-Struct \tiny{($\downarrow$)}  \\ 
     \midrule
      GIN &  0.6555   $\pm 0.0088$      & 0.2497 $\pm 0.0012$  \\
      GIN + RWSE  & 0.6621 $\pm 0.0067$        & 0.2478 $\pm 0.0017$           \\
      GIN + CSE  &  0.6619 $\pm 0.0077$        &  0.2479 $\pm 0.0011$           \\
      \midrule
      GCN  & 0.6739  $\pm 0.0024$    & 0.2505 $\pm 0.0023$   \\
      GCN + RWSE  & 0.6860  $\pm 0.0050$      & 0.2498 $\pm 0.0015$           \\
      GCN + CSE  & 0.6812 $\pm 0.0037$        &  0.2499 $\pm 0.0010$           \\
      \bottomrule
      
\end{tabular}
\end{table}

\begin{figure}[t]
  \centering
  \begin{subfigure}{.45\textwidth}
    \centering
    \includegraphics[width=1.0\linewidth]{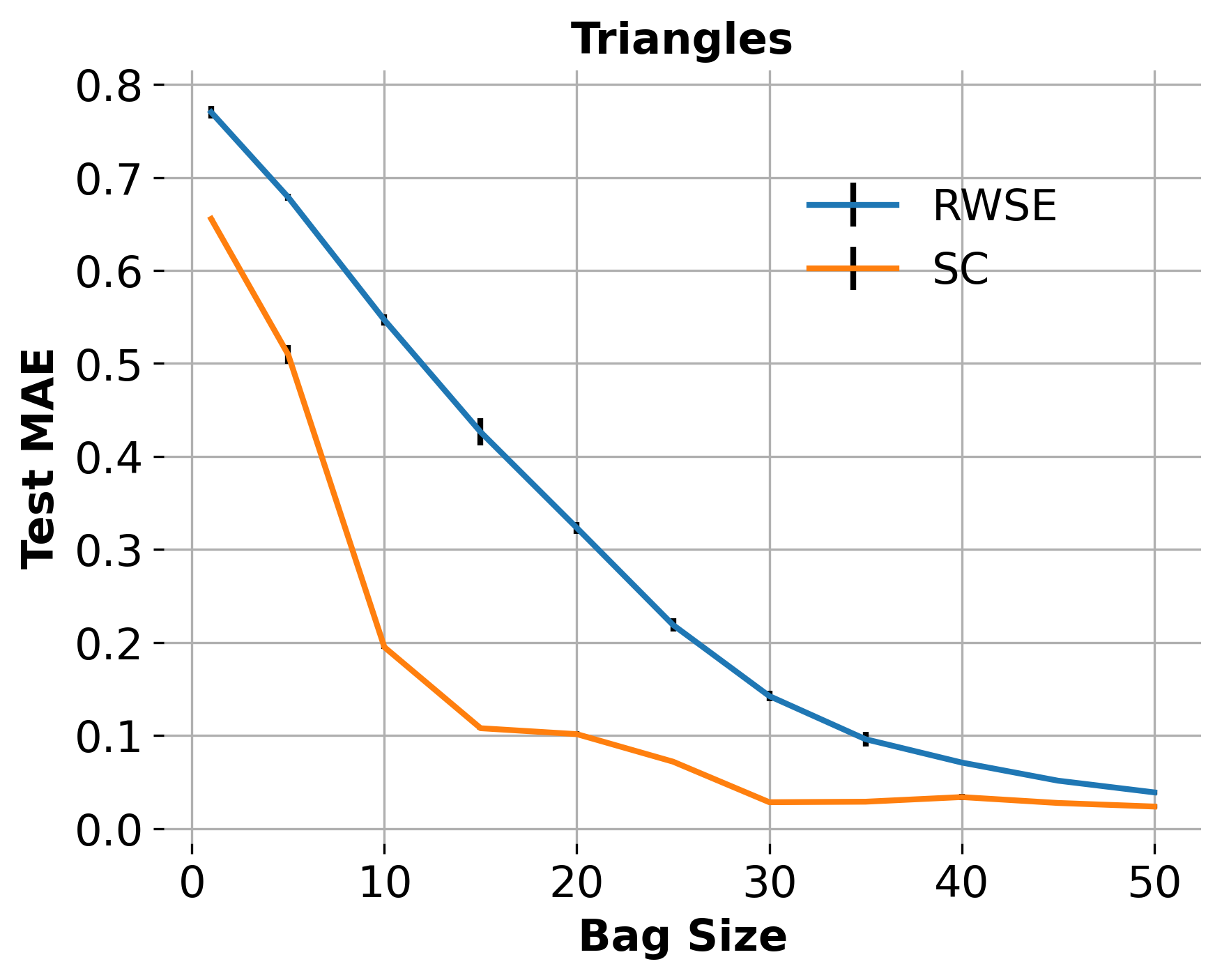}  
    \caption{Counting Triangles.}
\end{subfigure}
\begin{subfigure}{.45\textwidth}
    \centering
    \includegraphics[width=1.0\linewidth]{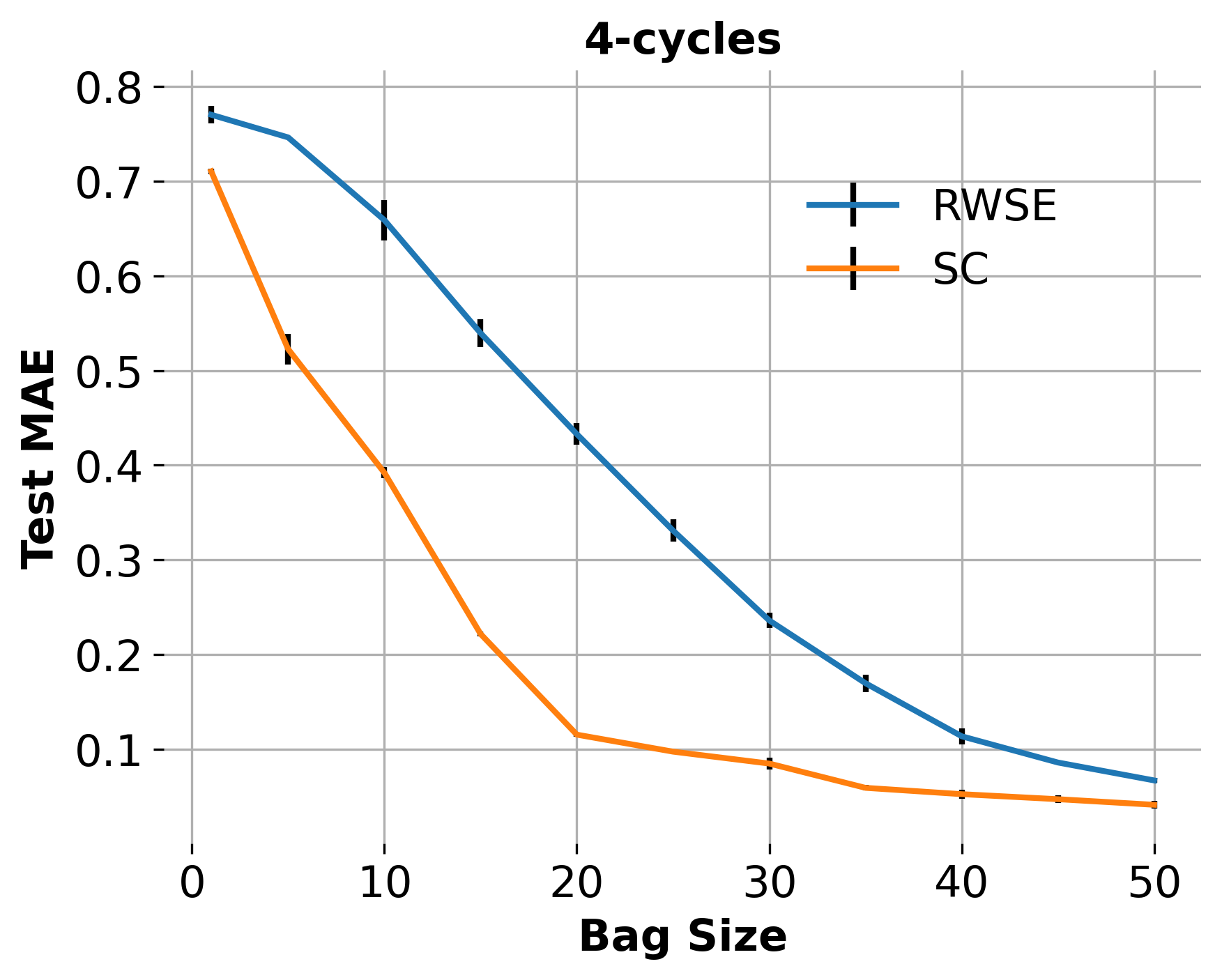}  
    \caption{Counting 4-Cycles.}
\end{subfigure}
\begin{subfigure}{.45\textwidth}
    \centering
    \includegraphics[width=1.0\linewidth]{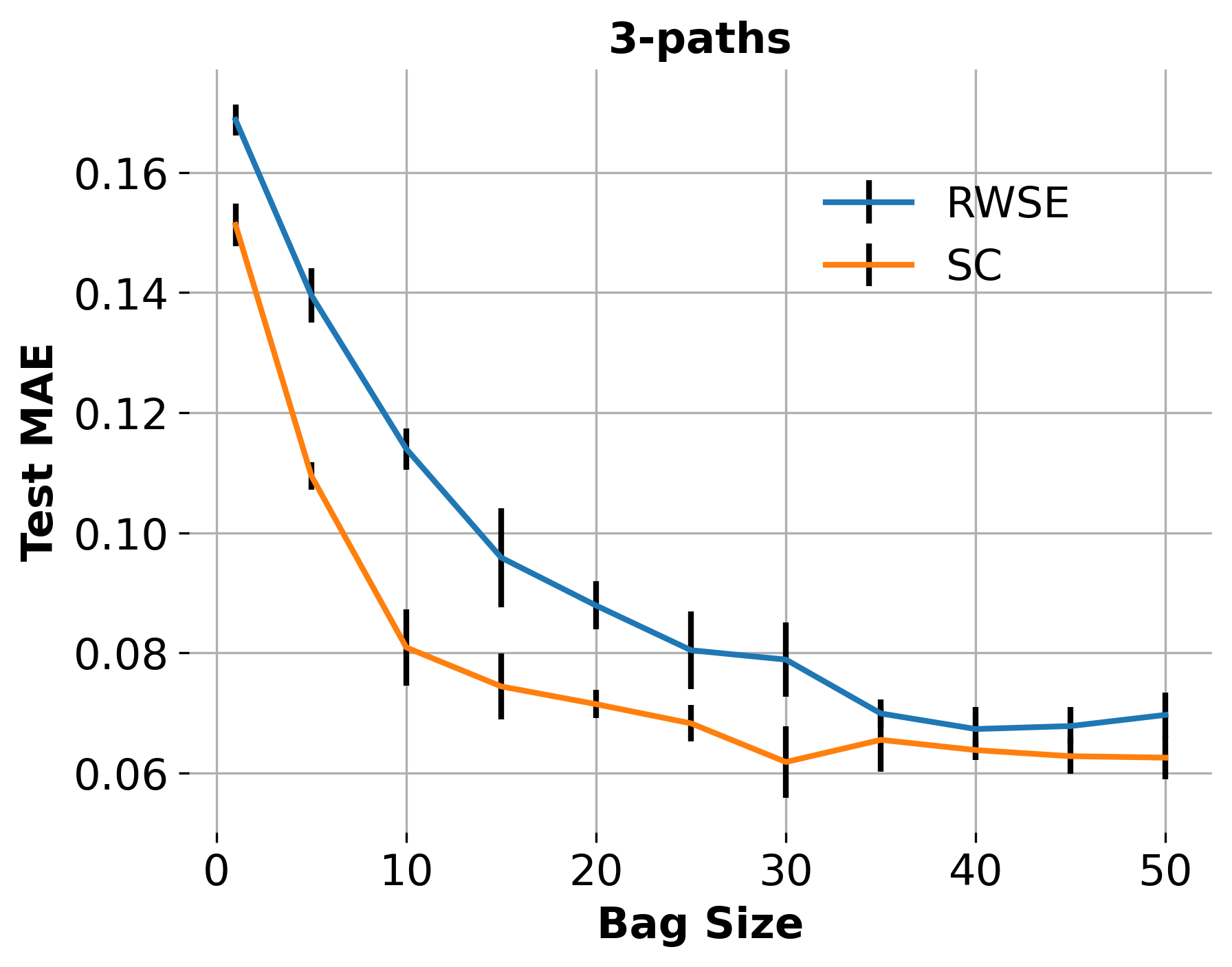}  
    \caption{Counting 3-Paths.}
\end{subfigure}
\begin{subfigure}{.45\textwidth}
    \centering
    \includegraphics[width=1.0\linewidth]{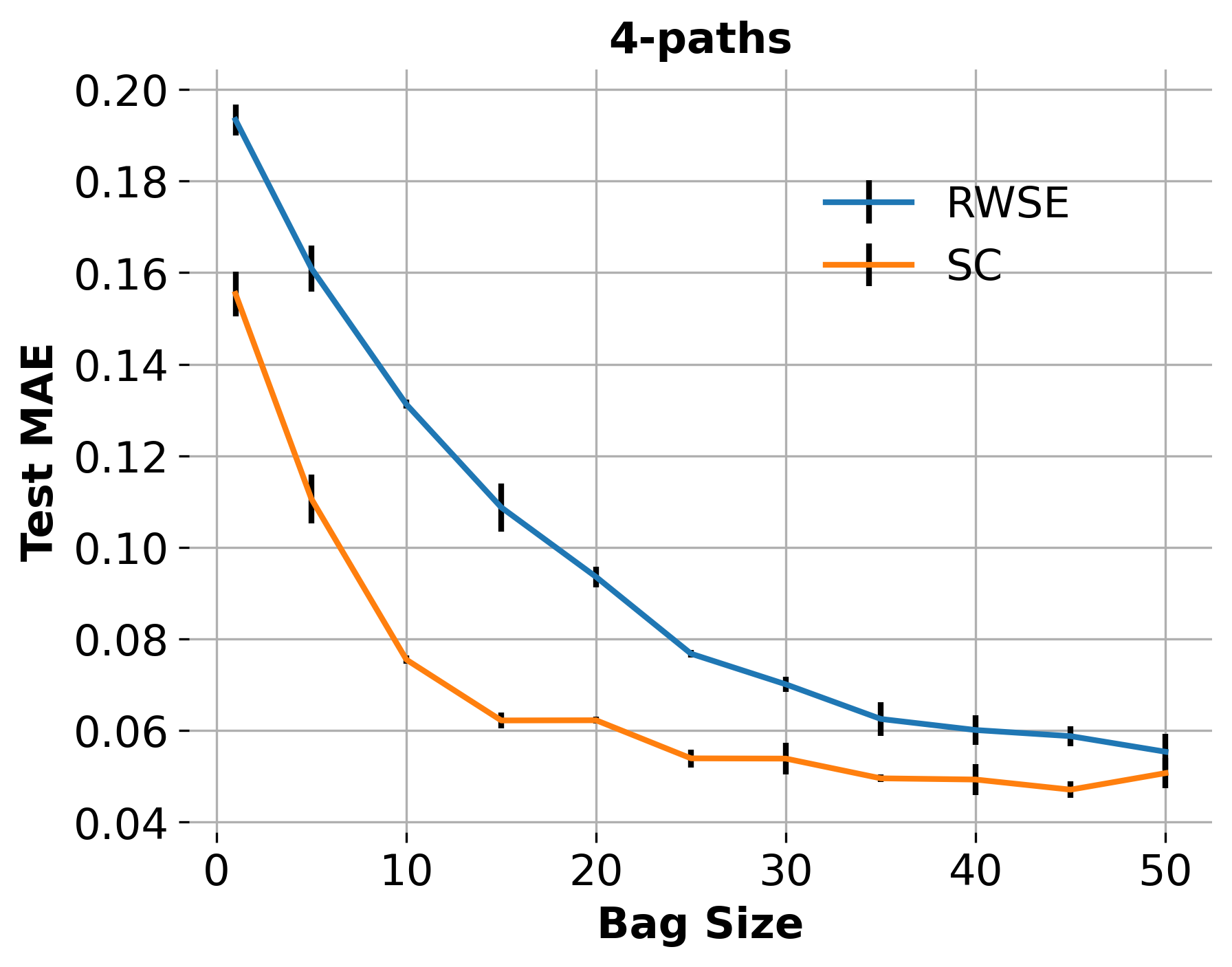}  
    \caption{Counting 4-paths.}
\end{subfigure}
  \caption{%
    Comparing our centrality sampling with RWSE sampling for counting different substructures on synthetic random graphs.
  }
  \label{fig:substructure_counting_rwse_comparison}
\end{figure}

\subsection{Further Examining the Effect of CSEs}

In \cref{tab:subgraph_mol}, we explore the effect of HyMN with and without CSEs. In order to complement these results, we additionally evaluated the effect of CSEs on HyMN and GIN with the Peptides and ZINC datasets. The results are reported in \cref{tab:additional_cse}. These results further show that adding even one subgraph with our approach can be beneficial and that additionally using the centrality measure as a structural encoding can also improve performance.

\begin{table}[t]
\centering
\scriptsize
\caption{
    Results on ZINC and Peptides datasets showing the effect of CSEs on both GIN and HyMN. 
  }
\begin{tabular}{lccccc}
      \toprule
        Method  & ZINC  \tiny{(MAE $\downarrow$}) & Peptides-Func \tiny{(AP $\uparrow$)} & Peptides-Struct (\tiny{MAE $\downarrow$)}  \\
      \midrule    
      GIN   \citet{xu2019powerful}       &  0.163 $\pm 0.004$    & 0.6558 $\pm 0.0068$ &  0.2497 $\pm 0.0012$   \\
      \midrule
      GIN+CSE     &   0.092 $\pm 0.002$     &  0.6619 $\pm 0.0077$  &   0.2479 $\pm 0.0011$    \\
      \textbf{HyMN (GIN, T=1) w/out CSE}             &   0.125 $\pm 0.004$     &  0.6758 $\pm 0.0050$  &   0.2466 $\pm 0.0010$  \\
      \textbf{HyMN (GIN, T=1)}             &  0.080 $\pm 0.003$ & 0.6857 $\pm 0.0055$ & 0.2464 $\pm 0.0013$ \\
      \bottomrule
\end{tabular}
\label{tab:additional_cse}
\end{table}

\section{Experimental Details}\label{app:exp}

In this section we provide details on the experimental validation described and discussed in~\Cref{sec:exp}. 

\subsection{Architectural Form}\label{app:arch}

We always employ a reference Subgraph GNN architecture $f$ whose output, for an input graph $G=(A, X, E)$\footnote{$E$ is a tensor storing edge features.} associated with node-marked bag $B$, is given by:
\begin{align}
    y^{B}_G = f(B(G)) &= \phi^{(L+1)} \big ( \sum_{v \in G} (h^{(L)}_{G,v} + \sum_{S \in B} h^{(L)}_{S,v} ) \big ) \label{eq:base-arch} \\
    h^{(l)}_{S,v} &= \mu^{(l)} \big( A, H^{(l-1)}_S, \eta_{e}(E), M_{:,S} \big)_{v} \label{eq:base-mp} \\
    h^{(0)}_{S,v} &= [ \eta_{x}(X)_{v,:}, C^{\mathsf{CSE}}_{v,:} ] \label{eq:base-init}
\end{align}
\noindent where $\phi^{(L+1)}$ is a final prediction module and $h^{(L)}_{G,v}, h^{(L)}_{S,v}$ refer to the representations of generic node $v$ on the original graph $G$ and subgraph $S$. As it is evident from~\Cref{eq:base-arch}, we employ an ``augmented policy'' which always includes a copy of the original graph in the bag of subgraphs~\citep{bevilacqua2021equivariant}. As we only consider node marking policies, this copy only differs from (sub)graphs in $B$ by the fact that no nodes are marked. Representations $h^{(L)}_{G,v}, h^{(L)}_{S,v}$ are obtained \`{a} la DS-GNN~\citep{bevilacqua2021equivariant}, that is, by running independent message-passing $\mu$ on each (sub)graph independently (see~\Cref{eq:base-mp}). Note that $\mu$ explicitly processes available edge features ($E$) and marking information ($M$, see~\Cref{alg:HyMN}). As it will be specified later on, we always consider either GIN~\citep{xu2019powerful} or GCN~\citep{kipf2017semisupervised} as MPNN backbones. \Cref{eq:base-init} specifies initial node features for node $v$ in subgraph $S$. Finally, $\eta_x$, $\eta_e$ are dataset-dependent node- and edge-feature encoders, and that $C^{\mathsf{CSE}}$ is computed according to~\Cref{alg:HyMN}.

Note that, across all molecular benchmarks, the GIN layer we use resembles the GINE architecture~\citep{hu2020strategies}, but concatenates the marking information as follows:
\begin{equation}
    h^{(l)}_{S,v} = \phi^{(l)} \Big( (1 + \eps^{(l)}) [h^{(l-1)}_{S,v}, M_{v,S}] + \sum_{u \in N(v)} \big[ \sigma \big( h^{(l-1)}_{S,u} + \eta_e(E)_{vu} \big), M_{u,S} \big] \Big)
\end{equation}
\noindent where $M_{v,S}$ is the mark for node $v$ in subgraph $S$, $\eta_e(E)_{vu}$ refers to the encoded edge features for node-pair $v,u$, and $\sigma$ is a ReLU non-linearity. As for our GCN~\citep{kipf2017semisupervised} backbones, the marking information is simply provided in the input of the network, and edge features are discarded.

\subsection{Synthetic Experimental Details}\label{app:exp_details_synthetic}

\subsubsection{Dataset Generation}
For the synthetic counting substructures experiment, we generated a dataset of random unattributed graphs in a similar manner to \citep{chen2020graph}. In their experiments, they generate 5000 random regular graphs denoted as $RG(m, d)$, where $m$ is the number of nodes in each graph and $d$ is the node degree. Random regular graphs with $m$ nodes and degree $d$ are sampled and then $m$ edges are randomly deleted. In their work, Chen et al. \citep{chen2020graph}, uniformly sampled $(m,d)$ from ${(10, 6), (15, 6), (20, 5), (30,5)}$. However, we want to test the effectiveness of our sampling approach for larger sizes of graphs. Additionally, we found that the number of substructures present in the graph was related to the graphs size (see \cref{fig:substructure_counting_size}). Therefore, we wanted to create a more challenging benchmark with larger graph sizes. Therefore, we set $(m, d)$ to be $(60, 5)$ for all graphs. 

\begin{figure}[tbp]
\centering
  \includegraphics[width=0.8\textwidth, scale=0.4]
  {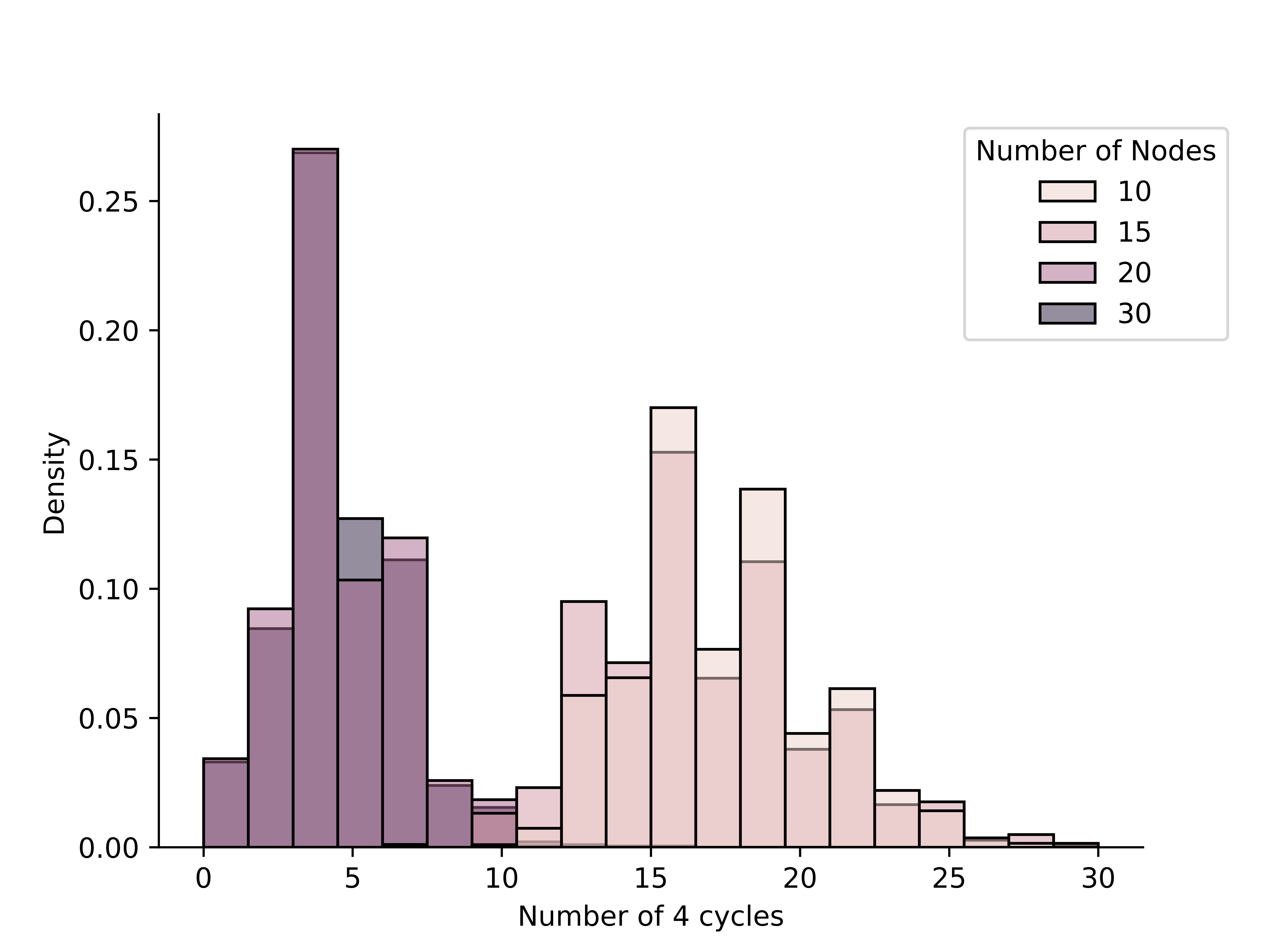}
  \caption{
    Evaluating the dependency between the graph size and the number of 4-cycles in the dataset generated from Chen et al. \citep{chen2020graph}.
  }
  \label{fig:substructure_counting_size}
\end{figure}

\subsubsection{Synthetic Model Parameters}
For our synthetic experiments, we set the base GIN to have a batch size of 128, 6 layers of message-passing, embedding dimension 32, and Adam optimizer with initial learning rate of 0.001 as prescribed by \citep{bevilacqua2023efficient}. We trained for 250 epoch and took the test Mean Absolute Error (MAE) at the best validation epoch.  

\subsection{Real-World Experimental Details}\label{app:exp_details_benchmarks}

In this section, we provide further details about our experiments. We implemented our method using Pytorch \citep{paszke2019pytorch} and Pytorch Geometric \citep{fey2019fast}. For the GIN model \citep{xu2019powerful}, we use Batch Normalization and the MLP is composed of two linear layers with ReLU non-linearities. Additionally, we use residual connections in each layer. The test performance at the epoch with the best validation performance is reported and is averaged over multiple runs with different random seeds. All the benchmarking results, including the extra ablations, are based on 5 executed runs, except for Peptides-func and Peptides-struct which are based on the output of four runs. In all our experiments we used AdamW \citep{loshchilov2019decoupled}, together with linear warm-up increase of the learning rate followed by its cosine decay. Experimental tracking and hyper-parameter optimisation were done via the Weights and Biases platform (wandb) \citep{biewald2020experiment}. In \cref{tab:subgraph_mol}, the number of subgraphs used ($T$) was selected to match the choices made in other baselines, such as PL, making the comparisons as informative as possible. In \cref{tab:peptides} and \cref{tab:zinc}, we chose $T$ to be the smallest possible value, i.e., $T=1$. This is justified by our focus on efficiency. Specific hyper-parameter information for each dataset can be found in the corresponding subsection. 

\subsubsection{Hardware}
All experiments were run on a single NVIDIA GeForce RTX 3080 with 10GB RAM.

\subsubsection{Dataset Specific Details}
Below, we provide descriptions of the datasets on which we conduct experiments.

\textbf{OGB datasets} (MIT License) \Citep{ogbg}. These are molecular property prediction datasets which use a common node and edge featurization that represents chemophysical properties. MOLHIV, MOLBACE and MOLTOX21 all represent molecule classification tasks. We considered the challenging scaffold splits proposed in \citep{hu2020graph}. We set the batch size to 128 for MOLHIV and 32 for the other benchmarks to avoid out-of-memory errors. We set the hidden dimension to be 300 for all datasets as done in \citet{hu2020graph} and  \citet{bevilacqua2023efficient}. We tuned the number of layers in $2, 4, 6, 8, 10$, the number of layers post message-passing in ${1, 2, 3}$, dropout after each layer in ${0.0, 0.3, 0.5}$, whether to perform mean or sum pooling over the subgraphs, and whether to apply Batch Normalization after message-passing on each dataset. Additionally, for the method with structural encoding, we tune the number of steps $k$ in the encoding in ${16, 20}$ and the dimension after the linear encoding in ${16, 28}$ as done in \citep{rampášek2023recipe}. The tuning was done on a single run for each set of hyper-parameters and the results were outlined for the best performing parameters on the validation set over 5 random seeds. These parameters are shown in \cref{tab:hyp_ogb}.

The maximum number of epochs is set to 100 for all models and the test metric is computed at the best validation epoch. 

\begin{table}[htbp]
\small
\centering
\caption{
    Best performing hyperparameters in \cref{tab:subgraph_mol}. 
  }
  \scriptsize
  \let\b\bfseries
\label{tab:hyp_ogb}
\begin{tabular}{lllll}
      \toprule
        Hyperparameter & MOLHIV & MOLBACE  & MOLTOX21   \\
      \midrule
      \#Layers    & 2  &  8   &  10  \\
      \#Layers readout & 1  &  3   &  3 \\
      Hidden dim   & 300   &   300   &  300   \\
      Dropout   &  0.0   &  0.5   &   0.3   \\
      Subgraph pooling  &  mean  &  mean  &   sum   \\
      \midrule
      Positional Encoding Steps &  16   &  20     &  20   \\
      PE dim    &  16    &  16   &   28   \\
      \midrule
      \#Parameters &  419,403  &  1,691,329     &   2,061,322  \\
      \bottomrule
\end{tabular}
\end{table}

\textbf{Peptides-func and Peptides-struct} (CC-BY-NC 4.0) \citep{dwivedi2023long}. These datasets are composed of atomic peptides. Peptides-func is a multi-label graph classification task where there are 10 nonexclusive peptide functional classes. Peptides-struct is a regression task involving 11 3D structural properties of the peptides. For both of these datasets, we used the tuned hyper-parameters of the GINE model from Tönshoff et al. \citep{tönshoff2023did} which has a parameter budget under 500k and where they use 250 epochs. For both of these datasets we set the number of steps of our centrality encoding to be $20$, aligned with the number of steps used for the random-walk structural encoding. The additional parameter tuning which we performed was whether to do mean or sum pooling over the subgraphs. We show the best performing hyperparameters from \cref{tab:peptides} in \cref{tab:hyp_pept}. 

\begin{table}[htbp]
\small
\centering
\caption{
    Best performing hyperparameters in \cref{tab:peptides}. 
  }
  \scriptsize
  \let\b\bfseries
\label{tab:hyp_pept}
\begin{tabular}{lllll}
      \toprule
        Hyperparameter & Peptides-Func & Peptides-Struct    \\
      \midrule
      \#Layers    & 8  &  10     \\
      \#Layers readout & 3  &  3    \\
      Hidden dim   & 160   &   145    \\
      Dropout   &  0.1   &  0.2     \\
      Subgraph pooling  &  sum  &  sum    \\
      \midrule
      Positional Encoding Steps &  20   &  20    \\
      PE dim    &  18    &  18   \\
      \midrule
      \#Parameters &  498,904  &  496,107     \\
      \bottomrule
\end{tabular}
\end{table}

\textbf{ZINC} (MIT License) \citep{dwivedi2022benchmarking}. This dataset consists of 12k molecular graphs representing commercially available chemical compounds. The task involves predicting the constrained solubility of the molecule. We considered the predefined dataset splits and used the Mean Absolute Error (MAE) both as a loss and evaluaton metric.  We chose to have 10 layers of massage-passing, 3 layers in the readout function, a batch size of $32$, $1000$ epochs and a dropout of $0$ to replicate what was done in~\citep{rampášek2023recipe}. We altered the hidden dimension to be 148 in order to be closer to the 500k parameter budget. No further parameter tuning was done and our best performing parameters are shown in \cref{tab:hyp_zinc}. The test metric is computed at the best validation epoch. 

\begin{table}[htbp]
\small
\centering
\caption{
    Best performing hyperparameters in \cref{tab:zinc}. 
  }
  \scriptsize
  \let\b\bfseries
\label{tab:hyp_zinc}
\begin{tabular}{lllll}
      \toprule
        Hyperparameter & ZINC   \\
      \midrule
      \#Layers    & 10     \\
      \#Layers readout & 3  \\
      Hidden dim   & 148    \\
      Dropout   &  0.0     \\
      Subgraph pooling  &  mean   \\
      \midrule
      Positional Encoding Steps &  20      \\
      PE dim    &  18    \\
      \midrule
      \#Parameters &  497,353   \\
      \bottomrule
\end{tabular}
\end{table}

\textbf{MalNet-Tiny} (CC-BY license) This dataset comprises of function call graphs derived from Android APKs. The dataset used is a subset of MalNet and contains $5,000$ graphs of up to $5,000$ nodes, coming from benign software or $4$ types of malware. The task is to classify the type of software based on its structure. 

\begin{table}[htbp]
\small
\centering
\caption{
    Best performing hyperparameters in \cref{tab:malnet}. 
  }
  \scriptsize
  \let\b\bfseries
\label{tab:hyp_malnet}
\begin{tabular}{lllll}
      \toprule
        Hyperparameter & MalNet-Tiny   \\
      \midrule
      \#Layers    & 5     \\
      \#Layers readout & 3  \\
      Hidden dim   & 96    \\
      Dropout   &  0.0     \\
      Subgraph pooling  &  sum   \\
      \midrule
      Positional Encoding Steps &  20      \\
      PE dim    &  28    \\
      \midrule
      \#Parameters &     \\
      \bottomrule
\end{tabular}
\end{table}

\textbf{REDDIT-BINARY} (CC-BY license) The dataset consists of social networks posts are connected by an edge if the same user comments on both. This dataset is large in size, containing 232,965 posts with an average degree of 429. The task is to predict whether the graph belongs to a question/answer-based community or a discussion-based community. We used the evaluation procedure proposed in \citet{xu2019powerful}, consisting of a 10-fold cross-validation and a metric with the best averaged validation accuracy across the folds.

\begin{table}[htbp]
\small
\centering
\caption{
    Best performing hyperparameters in \cref{tab:rdtb_accuracy}. 
  }
  \scriptsize
  \let\b\bfseries
\label{tab:hyp_rdt}
\begin{tabular}{lllll}
      \toprule
        Hyperparameter & RDT-B   \\
      \midrule
      \#Layers    &   2   \\
      \#Layers readout &  1 \\
      Hidden dim   &   300  \\
      Dropout   &   0    \\
      Subgraph pooling  &   mean  \\
      \midrule
      Positional Encoding Steps &    16    \\
      PE dim    &   16   \\
      \midrule
      \#Parameters &  274204   \\
      \bottomrule
\end{tabular}
\end{table}

\section{Additional Time Comparisons}\label{app:timing}

As well as \cref{tab:peptides} where we compared the runtime of different methods on Peptides datasets, we extend this analyses to both ZINC and MOLHIV. To this end, we provide some results in \cref{tab:zinc_timing} and \cref{tab:molhiv_timing}. We find that our method significantly improves over the baseline GIN on both of these tasks whilst having a substantially reduced runtime compared to the GPS which uses a Transformer layer. Again, this highlights both the efficiency and practical utility of our method.  

\begin{table}[t]
\centering
\scriptsize
\caption{
    Results on the ZINC dataset with timing comparisons using a GeForce RTX 2080 8 GB. 
  }
\label{tab:zinc_timing}
\scriptsize
\begin{tabular}{lcccc}
      \toprule
        Method & Precompute (s) & Train \tiny{(s/epoch)} & Test \tiny{(s)} & Test MAE \tiny{($\downarrow$)}  \\ 
     \midrule
      GIN & 0.00 & 12.65 & 0.33 & 0.163  \\
      \textbf{HyMN (GIN, T=1)} &  21.41 & 17.95 & 0.42 & 0.080       \\
      \midrule
      GPS (\citet{rampášek2023recipe}) & 19.13 & 33.02 & 0.87 & 0.070 \\
      \bottomrule
      
\end{tabular}
\end{table}

\begin{table}[t]
\centering
\scriptsize
\caption{
    Results on the MOLHIV dataset with timing comparisons using a GeForce RTX 2080 8 GB. 
  }
\label{tab:molhiv_timing}
\scriptsize
\begin{tabular}{lcccc}
      \toprule
        Method & Precompute (s) & Train \tiny{(s/epoch)} & Test \tiny{(s)} & ROC-AUC \tiny{($\uparrow$)}  \\ 
     \midrule
      GIN & 0.00 & 7.50 & 0.33 & 75.58  \\
      \textbf{HyMN (GIN, T=1)} &  67.43 & 9.09 & 0.37 & 80.36      \\
      \midrule
      GPS (\citet{rampášek2023recipe}) & 40.92 & 124.08 & 4.14 & 78.80 \\
      \bottomrule
      
\end{tabular}
\end{table}

\end{document}